\newcommand{\githubrepo}{\url{https://github.com/slds-lmu/paper_2024_reshuffling}}
\newcommand{\cmark}{\ding{51}}%
\newcommand{\xmark}{\ding{55}}%
\renewcommand{\N}{\mathds{N}}                                                
  \newcommand{\N}{\mathds{N}}
\newcommand{\R}{\mathds{R}}                                                 
  \renewcommand{\C}{\mathds{C}}                                             
  \newcommand{\C}{\mathds{C}}
\newcommand{\E}{\mathds{E}}                                                 
\newcommand{\var}{\mathsf{Var}}                                             
\newcommand{\cov}{\mathsf{Cov}}                                             
\newcommand{\corr}{\mathsf{Corr}}                                           
\newcommand{\D}{\mathcal{D}}                                                      
\newcommand{\Hspace}{\mathcal{H}}														
\newcommand{\eps}{\epsilon}                                                 
\newcommand{\blambda}{\bm{\lambda}}
\newcommand{\bxi}{\bm{\xi}}
\newcommand{\Dcal}{\mathcal{D}}
\newcommand{\Gcal}{\mathcal{G}}
\newcommand{\Ical}{\mathcal{I}}
\newcommand{\Ncal}{\mathcal{N}}
\newcommand{\Tcal}{\mathcal{T}}
\newcommand{\Vcal}{\mathcal{V}}
\newcommand{\bZ}{\bm{Z}}
\newcommand{\bX}{\bm{X}}
\newcommand{\nvalid}{n_{\mathrm{valid}}}
\newcommand{\wh}[1]{\widehat{#1}}
\newcommand{\wt}[1]{\widetilde{#1}}
\newtheorem{theorem}{Theorem}[section]
\newtheorem{lemma}[theorem]{Lemma}
\newtheorem{remark}[theorem]{Remark}
\newcommand{\bz}{\bm{z}}
\newcommand{\ind}{\mathds{1}}
\newcommand{\expl}[1]{\tag*{(#1)}}
\newcommand{\vol}{\operatorname{vol}}
\title{Reshuffling Resampling Splits Can Improve Generalization of Hyperparameter Optimization}
\author{%
  \makebox[\textwidth][c]{%
    Thomas Nagler\thanks{Equal contribution.}\hfill
    Lennart Schneider\footnotemark[1]\hfill
    Bernd Bischl\hfill
    Matthias Feurer%
  } \\
  \makebox[\textwidth][c]{%
    \texttt{t.nagler@lmu.de}\hfill
  } \\[1em]
  Department of Statistics, LMU Munich \\ 
  Munich Center for Machine Learning (MCML) \\
}
\begin{document}

\maketitle
\setcounter{footnote}{0} 

\begin{abstract}
Hyperparameter optimization is crucial for obtaining peak performance of machine learning models.
The standard protocol evaluates various hyperparameter configurations using a resampling estimate of the generalization error to guide optimization and select a final hyperparameter configuration.
Without much evidence, paired resampling splits, i.e., either a fixed train-validation split or a fixed cross-validation scheme, are often recommended.
We show that, surprisingly, reshuffling the splits for every configuration often improves the final model's generalization performance on unseen data.
Our theoretical analysis explains how reshuffling affects the asymptotic behavior of the validation loss surface and provides a bound on the expected regret in the limiting regime.
This bound connects the potential benefits of reshuffling to the signal and noise characteristics of the underlying optimization problem.
We confirm our theoretical results in a 
controlled simulation study and 
demonstrate the practical usefulness of reshuffling in a large-scale, realistic hyperparameter optimization experiment.
While reshuffling leads to test performances that are competitive with using fixed splits, it drastically improves results for a single train-validation holdout protocol and can often make holdout become competitive with standard CV while being computationally cheaper.

\end{abstract}

\section{Introduction}\label{sec:introduction}

Hyperparameters have been shown to strongly influence the performance of machine learning models {\citep{rijn-kdd18a,probst-jmlr19a}.
The primary goal of hyperparameter optimization (HPO; also called tuning) is the identification and selection of a hyperparameter configuration (HPC) that minimizes the estimated generalization error \citep{feurer-automlbook19a,bischl-dmkd23a}.
Typically, this task is challenged by the absence of a closed-form mathematical description of the objective function, the unavailability of an analytic gradient, and the large cost to evaluate HPCs, categorizing HPO as a noisy, black-box optimization problem.
An HPC is evaluated via resampling, such as a holdout split or $M$-fold cross-validation (CV), during tuning.

These resampling splits are usually constructed in a fixed and instantiated manner, i.e., the same training and validation splits are used for the internal evaluation of all configurations.
On the one hand, this is an intuitive approach, as it should facilitate a fair comparison between HPCs and reduce the variance in the comparison.\footnote{This approach likely originates from the concept of paired statistical tests and the resulting variance reduction, but in our literature search we did not find any references discussing this in the context of HPO. For example, when comparing the performance of two classifiers on one dataset, paired tests are commonly employed that implicitly assume that differences between the performance of classifiers on a given CV fold are comparable \citep{dietterich-nc98,nadeau-neurips99,nadeu-ml03,demsar-06a}.}
On the other hand, such a fixing of train and validation splits might steer the optimization, especially after a substantial budget of evaluations, towards favoring HPCs which are specifically tailored to the chosen splits. 
Such and related effects, where we "overoptimize" the validation performance without effective reward in improved generalization performance have been sometimes dubbed "overtuning" or "oversearching".
For a more detailed discussion of this topic, including related work, see Section~\ref{sec:discussion} and Appendix~\ref{app:extended_related}.
The practice of reshuffling resampling splits during HPO is generally neither discussed in the scientific literature nor HPO software tools.\footnote{In Appendix~\ref{app:extended_related}, we present an overview of how resampling is addressed in tutorials and examples of standard HPO libraries and software.
We conclude that usually fixed splits are used or recommended.}
To the best of our knowledge, only \citet{levesque-diss18a} investigated reshuffling train-validation splits for every new HPC.
For both holdout and $M$-fold CV using reshuffled resampling splits resulted in, on average, slightly lower generalization error when used in combination with Bayesian optimization~\citep[BO,][]{garnett-book23a} or CMA-ES~\citep{hansen-ec01} as HPO algorithms.
Additionally, reshuffling was used by a solution to the NeurIPS 2006 performance prediction challenge to estimate the final generalization performance~\citep{guyon-ijcnn06a}.
Recently, in the context of evolutionary optimization, reshuffling was applied after every generation~\citep{larcher-cs22a}.

In this paper, we systematically examine the effect of reshuffling on HPO performance.
Our contributions can be summarized as follows:
\begin{enumerate}
    \item We show theoretically that reshuffling resampling splits during HPO can result in finding a configuration with better overall generalization performance, especially when the loss surface is rather flat and its estimate is noisy (Section~\ref{sec:theory}).
    \item We confirm these theoretical insights through controlled simulation studies (Section~\ref{sec:simulation-study}).
    \item We demonstrate in realistic HPO benchmark experiments that reshuffling splits can lead to a real-world improvement of HPO (Section~\ref{sec:experiments}). Especially in the case of reshuffled holdout, we find that the final generalization performance is often on par with 5-fold CV under a wide range of settings.
\end{enumerate}
We discuss results, limitations, and avenues for future research in Section~\ref{sec:discussion}.

\section{Theoretical Analysis}\label{sec:theory}

\subsection{Problem Statement and Setup}
Machine learning (ML) aims to fit a model to data, so that it generalizes well to new observations of the same distribution.
Let $\Dcal =\{\bZ_i\}_{i = 1}^n$ be the observed dataset consisting of \emph{i.i.d.} random variables from a distribution $P$, i.e., in the supervised setting $\bZ_i = (\bX_i, Y_i)$.\footnote{Throughout, we use bold letters to indicate (fixed and random) vectors.}\textsuperscript{,}\footnote{We provide a notation table for symbols used in the main paper in Table~\ref{tab:notation} in the appendix.}
Formally, an inducer $g$ configured by an HPC $\blambda \in \Lambda$ maps a dataset $\D$ to a model from our hypothesis space $h=g_{\blambda}(\D) \in \Hspace$.
During HPO, we want to find a HPC that minimizes the expected generalization error, i.e., find
\begin{align*}
    \blambda^* = \arg\min_{\blambda \in \Lambda} \mu(\blambda), \quad \text{where} \quad \mu(\blambda) = \E[\ell(\bZ, g_{\blambda}(\Dcal))],
\end{align*}
where $\ell({\bZ}, h)$ is the loss of model $h$ on a fresh observation $\bZ$.
In practice, there is usually a limited computational budget for each HPO run, so we assume that there is only a finite number of distinct HPCs $\Lambda = \{\blambda_1, \dots, \blambda_J\}$ to be evaluated, which also simplifies the subsequent analysis.
Naturally, we cannot optimize the generalization error directly, but only an estimate of it.
To do so, a resampling is constructed.
For every HPC $\blambda_j$, draw $M$ random sets $\Ical_{1, j}, \dots, \Ical_{M, j} \subset \{1, \dots, n\}$ of validation indices with $\nvalid = \lceil \alpha n \rceil$ instances each. The random index draws are assumed to be independent of the observed data.
The data is then split accordingly into pairs $\Vcal_{m, j} = \{\bZ_i\}_{i \in \Ical_{m, j}}, \Tcal_{m, j} = \{\bZ_i\}_{i \notin \Ical_{m, j}}$ of disjoint validation and training sets.
Define the validation loss on the $m$-th fold
\begin{align*}
  L(\Vcal_{m, j}, g_{\blambda_j}(\Tcal_{m, j})) = \frac{1}{\nvalid} \sum_{i \in \Ical_{m, j}} \ell({\bZ_i}, g_{\blambda_j}(\Tcal_{m, j})),
\end{align*}
and the $M$-fold validation loss as
\begin{align*}
  \wh \mu(\blambda_j) = \frac{1}{M} \sum_{m = 1}^M  L(\Vcal_{m, j}, g_{\blambda_j}(\Tcal_{m, j})).
\end{align*}
Since $\mu$ is unknown, we minimize $\wh \blambda = \arg\min_{\blambda \in \Lambda} \wh \mu(\blambda)$,  hoping that $\mu(\wh \blambda)$ will also be small.

Typically, the same splits are used for every HPC, so $\Ical_{m, j} = \Ical_{m}$ for all $j = 1, \dots, J$ and $m = 1, \dots, M$.
In the following, we investigate how reshuffling train-validation splits (i.e., $\Ical_{m, j} \neq \Ical_{m, j'}$  for $j \neq j'$) affects the HPO problem.

\subsection{How Reshuffling Affects the Loss Surface}

We first investigate how different validation and reshuffling strategies affect the empirical loss surface $\wh \mu$.
In particular, we derive the limiting distribution of the sequence $\sqrt{n}(\wh \mu(\blambda_j) - \mu(\blambda_j))_{j = 1}^J$.
This limiting regime will not only reveal the effect of reshuffling on the loss surface, but also give us a tractable setting to study HPO performance.

\begin{theorem} \label{thm:normality}
  Under regularity conditions stated in Appendix~\ref{app:proof-normality}, it holds
  \begin{align*}
    \sqrt{n } \left(\wh \mu(\blambda_j) - \mu(\blambda_j)\right)_{j = 1}^J \to \Ncal(0, \Sigma) \quad \text{in distribution},
  \end{align*}
  where
  \begin{align*}
    \Sigma_{i, j} = \tau_{i, j, M}  K(\blambda_i, \blambda_{j}), \quad
    \tau_{i, j, M} = \lim_{n \to \infty}\frac{1}{n M^2\alpha^2 } \sum_{s = 1}^n \sum_{m = 1}^M \sum_{m' = 1}^M  \Pr(s \in \Ical_{m, i} \cap \Ical_{m', j}),
  \end{align*}
  and
  \begin{align*}
    K(\blambda_i, \blambda_j) = \lim_{n \to \infty} \cov[\bar \ell_n(\bZ', \blambda_i), \bar \ell_n(\bZ', \blambda_{j})], \quad \bar \ell_n(\bz, \blambda) = \E[\ell(\bz, g_{\blambda}(\Tcal))] - \E[\ell(\bZ, g_{\blambda}(\Tcal))],
  \end{align*}
  where the expectation is taken over a training set $\Tcal$ of size $n$ and two fresh samples $\bZ, \bZ'$ from the same distribution.
\end{theorem}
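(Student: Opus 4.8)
The plan is to reduce each $\sqrt n(\wh\mu(\blambda_j)-\mu(\blambda_j))$ to a linear statistic in the validation observations and then pass to the limit by a central limit theorem taken conditionally on the (data-independent) index sets. Write $h_{m,j}=g_{\blambda_j}(\Tcal_{m,j})$; let $\mu_k(\blambda)=\E[\ell(\bZ,g_\blambda(\Tcal))]$ be the expected risk under a training set $\Tcal$ of size $k$ (so $\mu(\blambda)=\mu_n(\blambda)$); and let $f_n(\bz,\blambda)=\E[\ell(\bz,g_\blambda(\Tcal))]$ with $|\Tcal|=n-\nvalid$ be the training-averaged loss at a fixed point, so that $\bar\ell_n(\bz,\blambda)=f_n(\bz,\blambda)-\mu_{n-\nvalid}(\blambda)$ coincides, up to the immaterial replacement of $n$ by $n-\nvalid$ inside a limit, with the centered kernel in the statement. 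The structural fact I will lean on is that, since the index draws are independent of the data and $\Vcal_{m,j}$ is disjoint from $\Tcal_{m,j}$, every $\bZ_i$ with $i\in\Ical_{m,j}$ is independent of $h_{m,j}$.

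The first step is the decomposition
\[
\wh\mu(\blambda_j)-\mu(\blambda_j)=V_j^{(1)}+V_j^{(2)}+E_j+b_n(\blambda_j),
\]
where $V_j^{(1)}=\frac{1}{M\nvalid}\sum_{m}\sum_{i\in\Ical_{m,j}}\bar\ell_n(\bZ_i,\blambda_j)$ is the leading validation-point term; $E_j=\frac1M\sum_m\big(\E_{\bZ}[\ell(\bZ,h_{m,j})\mid\Tcal_{m,j}]-\mu_{n-\nvalid}(\blambda_j)\big)$ is the conditional excess population risk of the trained models; $V_j^{(2)}$ is the remaining mean-zero validation fluctuation of $\ell(\bZ_i,h_{m,j})-f_n(\bZ_i,\blambda_j)$ around its conditional mean; and $b_n(\blambda_j)=\mu_{n-\nvalid}(\blambda_j)-\mu_n(\blambda_j)$ is the deterministic learning-curve bias. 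I would dispatch $b_n$ and $V_j^{(2)}$ as lower order: $\sqrt n\,b_n(\blambda_j)\to0$ under the assumed $O(1/n)$ stabilization of the risk in the sample size, and $V_j^{(2)}=o_P(n^{-1/2})$ by conditioning on $\Tcal_{m,j}$, where the summands become i.i.d.\ mean-zero over the validation points of size $O_P(n^{-1/2})$ (smoothness of $\ell$ and consistency $h_{m,j}\to h_{\blambda_j}^\infty$), so that their $\nvalid$-average is $O_P(n^{-1})$.

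The crux — and the step most dependent on the regularity conditions — is that $E_j$ must also vanish from the limit. A direct bound on the excess risk only gives $E_j=O_P(n^{-1/2})$, i.e.\ leading order, which would not do. I must instead use that each $\E_{\bZ}[\ell(\bZ,h_{m,j})\mid\Tcal_{m,j}]$ is a smooth risk functional evaluated at a near-minimizer: first-order stationarity at the limiting model $h_{\blambda_j}^\infty$ cancels the linear (influence-function) term in the training observations, leaving an $O_P(n^{-1})$ quadratic remainder, whence $\sqrt n\,E_j\to0$. This is precisely what licenses the clean factorization $\Sigma_{i,j}=\tau_{i,j,M}K(\blambda_i,\blambda_j)$; for a generic, non-optimizing inducer the training randomness would survive and contribute its own cross term between HPCs, and the factorization would break.

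It remains to establish the CLT for $(\sqrt n\,V_j^{(1)})_{j=1}^J$. With $N_{s,j}=\sum_m\ind(s\in\Ical_{m,j})\le M$,
\[
\sqrt n\,V_j^{(1)}=n^{-1/2}\sum_{s=1}^n\frac{N_{s,j}}{M\alpha}\,\bar\ell_n(\bZ_s,\blambda_j)+o(1),
\]
which, conditionally on the index sets (which fix the coefficients $N_{s,j}$), is a sum over the independent observations $\bZ_s$. The Cramér–Wold device together with the Lindeberg–Feller theorem — the Lindeberg condition following from a moment/uniform-integrability assumption on $\ell$ and the boundedness of the coefficients — gives conditional convergence to a centered Gaussian with covariance
\[
\frac{n}{M^2\nvalid^2}\sum_{s=1}^n N_{s,i}N_{s,j}\,\cov\!\left[\bar\ell_n(\bZ_s,\blambda_i),\bar\ell_n(\bZ_s,\blambda_j)\right],
\]
the cross terms $s\neq s'$ dropping out by independence and mean-zeroness. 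Using $\nvalid\sim\alpha n$, $\cov\to K(\blambda_i,\blambda_j)$, and a law of large numbers replacing $\tfrac1n\sum_s N_{s,i}N_{s,j}$ by its mean $\tfrac1n\sum_s\sum_{m,m'}\Pr(s\in\Ical_{m,i}\cap\Ical_{m',j})$ (since $N_{s,i}N_{s,j}=\sum_{m,m'}\ind(s\in\Ical_{m,i}\cap\Ical_{m',j})$), this conditional covariance tends in probability to $\tau_{i,j,M}K(\blambda_i,\blambda_j)=\Sigma_{i,j}$; as the limit is deterministic, a dominated-convergence argument on the conditional characteristic functions upgrades conditional to unconditional weak convergence. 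The joint probability $\Pr(s\in\Ical_{m,i}\cap\Ical_{m',j})$ — rather than a product of marginals — is exactly what records reshuffling: it reduces to $\Pr(s\in\Ical_m\cap\Ical_{m'})$ under fixed splits and factorizes under independent reshuffles.
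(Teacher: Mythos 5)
Your proposal is correct, and its skeleton is the same as the paper's: linearize $\wh\mu(\blambda_j)-\mu(\blambda_j)$ into a sum of centered pointwise losses $\bar\ell_n(\bZ_i,\blambda_j)$ over validation indices, show that the training-set contribution and the residual validation fluctuation are $o_P(n^{-1/2})$, then apply a Lindeberg-type CLT whose covariance yields exactly $\tau_{i,j,M}K(\blambda_i,\blambda_j)$. The differences lie in how the sub-steps are justified. The paper does not \emph{derive} the vanishing of your $E_j$, $b_n$, and $V_j^{(2)}$ from structural properties of the learner; it \emph{assumes} them, via a risk-stability condition $\max_{\blambda}\E[|R(g_{\blambda}(\Tcal))-R(g_{\blambda}^*)|]=o(1/\sqrt n)$ (which disposes of $E_j$ and $b_n$ in one triangle-inequality computation) and a loss-stability condition of order $o(1/n)$, under which it imports Theorem 2 and Proposition 3 of Bayle et al.\ (2020) to get the linearization, i.e.\ your $V_j^{(2)}=o_P(n^{-1/2})$. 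Your route instead derives these from classical parametric-style structure: first-order stationarity of the population risk at the limit model plus smoothness, leaving an $O_P(n^{-1})$ quadratic remainder. That argument is sound but strictly narrower --- it requires a (near-)risk-stationary, $\sqrt n$-consistent inducer, whereas the paper's abstract stability assumptions also cover learners admitting no influence-function expansion; on the other hand, your argument explains \emph{why} the assumption is natural for optimizing inducers, and your remark that the factorization $\Sigma_{i,j}=\tau_{i,j,M}K(\blambda_i,\blambda_j)$ breaks for non-optimizing inducers matches the paper's own comment that relaxing risk stability would complicate the limit. On the CLT itself you are more careful than the paper: the paper treats $\xi^{(j)}_{i,n}=\sum_{m}\ind(i\in\Ical_{m,j})\,\bar\ell_n(\bZ_i,\blambda_j)$ as an \emph{unconditionally} independent triangular array, which is not literally true for fixed-size index sets (the indicators are negatively dependent across $i$); your device of conditioning on the splits, applying Cram\'er--Wold and Lindeberg--Feller, and then invoking a concentration/LLN step for $\tfrac1n\sum_s N_{s,i}N_{s,j}$ before deconditioning repairs exactly this, at the price of that extra (scheme-by-scheme, easily verified) concentration argument.
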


The regularity conditions are rather mild and discussed further in Appendix~\ref{app:proof-normality}.
The kernel $K$ reflects the (co-)variability of the losses caused by validation samples. The contribution of training samples only has a higher-order effect.
The validation scheme enters the distribution through the quantities $\tau_{i, j, M}.$ In what follows, we compute explicit expressions for some popular examples. 
The following list provides formal definitions for the index sets $\Ical_{m, j}$.
\begin{enumerate}[(i)]
   \item (holdout) Let $M = 1$ and $\Ical_{1, j} = \Ical_1$ for all $j = 1, \dots, J$, and some size-$\lceil \alpha n \rceil$ index set $\Ical_1$. 
   \item (reshuffled holdout) Let $M = 1$ and $\Ical_{1, 1}, \dots, \Ical_{1, J}$ be independently drawn from the uniform distribution over all size-$\lceil \alpha n \rceil$ subsets from $\{1, \dots, n\}$.
    \item ($M$-fold CV) Let $\alpha = 1/M$ and $\Ical_{1}, \dots, \Ical_M$ be a disjoint partition of $\{1, \dots, n\}$, and $\Ical_{m, j} = \Ical_{m}$ for all $j = 1, \dots, J$.
    \item (reshuffled $M$-fold CV) Let $\alpha = 1/M$ and $(\Ical_{1, j}, \dots, \Ical_{M, j}), j = 1, \dots, J$, be independently drawn from the uniform distribution over disjoint partitions of $\{1, \dots, n\}$. 
    \item ($M$-fold holdout) Let $\Ical_{m},m = 1,\dots, M$, be independently drawn from the uniform distribution over size-$\lceil \alpha n \rceil$ subsets of  $\{1, \dots, n\}$ and set $\Ical_{m, j} = \Ical_m$ for all $m = 1, \dots, M, j = 1, \dots, J$. 
    \item (reshuffled $M$-fold holdout) Let $\Ical_{m, j},m = 1,\dots, M, j = 1, \dots, J$, be independently drawn from the uniform distribution over size-$\lceil \alpha n \rceil$ subsets of  $\{1, \dots, n\}$.
\end{enumerate}
The value of $\tau_{i, j, M}$ for each example is computed explicitly in Appendix~\ref{app:reshuffling-params}.
In all these examples, we in fact have
\begin{align} \label{eq:reshuffling-params}
  \tau_{i, j, M}= \begin{cases}
                    \sigma^2 ,        & i = j     \\
                    \tau^2 \sigma^2 , & i \neq j.
                  \end{cases},
\end{align}
for some method-dependent parameters $\sigma, \tau$ shown in Table~\ref{tab:reshuffling-params}.
The parameter $\sigma^2$ captures any increase in variance caused by omitting an observation from the validation sets. The parameter $\tau$ quantifies a potential decrease in correlation in the loss surface due to reshuffling. More precisely, the observed losses $\wh \mu(\blambda_i), \wh \mu(\blambda_j)$ at distinct HPCs $\blambda_i \neq \blambda_j$ become less correlated when $\tau$ is small. Generally, an increase in variance leads to worse generalization performance. The effect of a correlation decrease is less obvious and is studied in detail in the following section.
\begin{table}[tb]
  \centering
  \caption{Exemplary parametrizations in Equation \eqref{eq:reshuffling-params} for resamplings; see Appendix~\ref{app:reshuffling-params} for details.}
  \label{tab:reshuffling-params}
  \begin{tabular}{lll}
  \toprule
    Method                     & $\sigma^2$               & $\tau^2$                       \\ \midrule
    holdout (HO)                   & $1/\alpha$               & $1$                            \\
    reshuffled HO          & $1/\alpha$               & $\alpha$                       \\
    $M$-fold CV                  & $1$                      & $1$                            \\
    reshuffled $M$-fold CV       & $1$                      & $1$                            \\
    $M$-fold HO (subsampling / Monte Carlo CV) & $1 + (1 - \alpha)/M\alpha$ & $1$ \\
    reshuffled $M$-fold HO & $1 + (1 - \alpha)/M\alpha$ & $1 / (1 + (1 - \alpha)/M\alpha)$\\ \bottomrule
  \end{tabular}
\end{table}

We make the following observations about the differences between methods in Table~\ref{tab:reshuffling-params}:
\begin{itemize}
  \item $M$-fold CV incurs no increase in variance ($\sigma^2 = 1$) and --- because every HPC uses the same folds --- no decrease in correlation.
        Interestingly, the correlation does not even decrease when reshuffling the folds. In any case, all samples are used exactly once as validation and training instance.
        At least asymptotically, this leads to the same behavior, and reshuffling should have almost no effect on $M$-fold CV.
  \item  The two (1-fold) holdout methods bear the same $1/\alpha$ increase in variance. This is caused by only using a fraction $\alpha$ of the data as validation samples. Reshuffled holdout also decreases the correlation parameter $\tau^2$. In fact, if HPCs $\blambda_i \neq \blambda_j$ are evaluated on largely distinct samples, the validation losses  $\wh \mu(\blambda_i)$ and $\wh \mu(\blambda_j)$ become almost independent.
  \item $M$-fold holdout also increases the variance, because some samples may still be omitted from validation sets. This increase is much smaller for large $M$. Accordingly, the correlation is also decreased by less in the reshuffled variant.
\end{itemize}

\subsection{How Reshuffling Affects HPO Performance}

In practice, we are mainly interested in the performance of a model trained with the optimal HPC $\wh \blambda$.
To simplify the analysis, we explore this in the large-sample regime derived in the previous section.
Assume
\begin{align} \label{eq:gp-model}
  \wh \mu(\blambda_j) & = \mu(\blambda_j) + \eps(\blambda_j)
\end{align}
where $\eps(\blambda)$ is a zero-mean Gaussian process with covariance kernel
\begin{align}\label{eq:kernel}
  \cov(\eps(\blambda), \eps(\blambda')) = \begin{cases}
                                            K(\blambda, \blambda)         & \text{if } \blambda = \blambda', \\
                                            \tau^2 K(\blambda, \blambda') & \text{else}.
                                          \end{cases}
\end{align}
Let $\Lambda \subseteq \{\blambda \in \R^d\colon \|\blambda\| \le 1\}$ with $|\Lambda| = J < \infty$ be the set of hyperparameters.
Theorem~\ref{thm:main} ahead gives a bound on the expected regret $\E[\mu(\wh \blambda) - \mu(\blambda^*)]$. It depends on several quantities characterizing the difficulty of the HPO problem.
    The constant
        \begin{align*}
          \kappa = \sup_{\|\blambda\|, \|\blambda'\| \le 1 } \frac{|K(\blambda, \blambda) - K(\blambda, \blambda')|}{K(\blambda, \blambda) \| \blambda - \blambda'\|^2}.
        \end{align*}
        can be interpreted as a measure of correlation of the process $\eps$.
        In particular,  $\corr(\eps(\blambda), \eps(\blambda')) \ge 1 - \kappa\|\blambda - \blambda'\|^2$. The constant is small when $\eps$ is strongly correlated, and large otherwise.
  Further, define $\eta$ as the minimal number such that any $\eta$-ball contained in $\{\|\blambda\| \le 1\}$ contains at least one element of $\Lambda$.
        It measures how densely the set of candidate HPCs $\Lambda$ covers set of all possible HPCs.
        If $\Lambda$ is a deterministic uniform grid, we have about $\eta \approx J^{-1/d}$. Similarly, Lemma~\ref{lem:grid} in the Appendix shows that $\eta \lesssim J^{-1/2d}$ when randomly sampling HPCs.
  Finally, the constant
        \begin{align*}
          m = \sup_{\blambda \in \Lambda} \frac{ |\mu(\blambda) - \mu(\blambda^*)|}{\|\blambda - \blambda^*\|^2},
        \end{align*}
        measures the local curvature at the minimum of the loss surface $\mu$. Finding an HPC $\blambda$ close to the theoretical optimum $\blambda^*$ is easier when the minimum is more pronounced (large $m$). On the other hand, the regret $\mu(\blambda) - \mu(\blambda^*)$ is also punishing mistakes more quickly.
Defining $\log(x)_+ = \max\{0, \log(x)\}$, we can now state our main result.

\begin{theorem} \label{thm:main}
  Let $\wh \mu$ follow the Gaussian process model \eqref{eq:gp-model}. Suppose $\kappa < \infty$, $0 < \underline \sigma^2 \le \var[\eps(\blambda)] \le \sigma^2 < \infty$ for all $\blambda \in \Lambda$, and $m > 0$.  Then
  \begin{align*}
    \E[\mu(\wh \blambda) - \mu(\blambda^*)]
     & \le \sigma \sqrt{d} [8 +  B(\tau) -   A(\tau)].
  \end{align*}
  where
  \begin{align*}
    B(\tau)  =  48\left[\sqrt{1 - \tau^2} \sqrt{\log J} +   \tau \sqrt{1 + \log(3 \kappa )_+}\right], \quad A(\tau)  =   \sqrt{1 - \tau^2}   (\underline \sigma / \sigma)\sqrt{\log\left(\frac{\sigma  }{2m \eta^2}  \right)_+}.
  \end{align*}
\end{theorem}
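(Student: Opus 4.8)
The plan is to reduce the regret to a difference of extrema of the noise process and then exploit a decomposition of $\eps$ into a smooth, strongly correlated part and a spatially independent part. The first step I would record is that, since $\mu(\wh{\blambda})-\mu(\blambda^*)$ is a functional of the law of $(\wh{\mu}(\blambda_j))_j$ alone, I may realize the process of \eqref{eq:kernel} on the finite set $\Lambda$ as
\[
  \eps(\blambda) = \tau\,\xi(\blambda) + \sqrt{1-\tau^2}\,\zeta(\blambda),
\]
where $\xi\sim\Ncal(0,K)$ is centered with kernel $K$, and $\zeta$ has independent coordinates with $\var[\zeta(\blambda)]=K(\blambda,\blambda)$, independent of $\xi$. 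A one-line covariance check reproduces \eqref{eq:kernel} exactly: the diagonal is $\tau^2 K+(1-\tau^2)K=K$ and the off-diagonal is $\tau^2 K(\blambda,\blambda')$. Thus $\tau$ literally interpolates between a fully correlated surface ($\tau=1$, the $\zeta$-part vanishes) and independent noise ($\tau\to 0$), which is precisely the effect I want to quantify.

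The second and decisive step is a localization inequality. Fix a radius $\rho$ and let $G\subseteq\Lambda$ collect the candidates with $\|\blambda-\blambda^*\|\le\rho$; by the definition of $m$, every $\blambda\in G$ has regret $\mu(\blambda)-\mu(\blambda^*)\le m\rho^2$. Since $\wh{\blambda}$ minimizes $\wh{\mu}=\mu+\eps$ over $\Lambda\supseteq G$,
\[
  \mu(\wh{\blambda})+\eps(\wh{\blambda})=\wh{\mu}(\wh{\blambda})\le\min_{\blambda\in G}\wh{\mu}(\blambda)\le\mu(\blambda^*)+m\rho^2+\min_{\blambda\in G}\eps(\blambda),
\]
so, writing $W=-\eps$ (a process with the same covariance),
\[
  \E[\mu(\wh{\blambda})-\mu(\blambda^*)]\le m\rho^2+\Big(\E\big[\max_{\blambda\in\Lambda}W(\blambda)\big]-\E\big[\max_{\blambda\in G}W(\blambda)\big]\Big).
\]
The key is that replacing the single reference point $\blambda^*$ by the whole neighborhood $G$ is what produces the benefit: $\E[\max_G W]$ grows with $|G|$, and subtracting it is only affordable because $G$ is populated densely (small $\eta$) and cheaply (flat $\mu$, small $m$).

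It then remains to bound $\E[\max_\Lambda W]$ from above and $\E[\max_G W]$ from below via the decomposition and $\max(f+g)\le\max f+\max g$. For the upper bound, $\E[\max_\Lambda W]\le\tau\,\E[\max_\Lambda(-\xi)]+\sqrt{1-\tau^2}\,\E[\max_\Lambda(-\zeta)]$: the independent part is a maximum of $J$ Gaussians of variance $\le\sigma^2$, giving the $\sqrt{1-\tau^2}\sqrt{\log J}$ contribution, while the smooth part is controlled by Dudley's entropy integral in the canonical metric $\varrho(\blambda,\blambda')^2=\E[(\xi(\blambda)-\xi(\blambda'))^2]$. Here the definition of $\kappa$ gives $\varrho(\blambda,\blambda')\le\sqrt{2\kappa}\,\sigma\|\blambda-\blambda'\|$, and since the diameter is at most $2\sigma$ the integral truncates and yields the $\tau\sqrt{1+\log(3\kappa)_+}$ term. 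For the lower bound I would use the pointwise estimate $W\ge\sqrt{1-\tau^2}(-\zeta)+\tau\min_G(-\xi)$, whence $\E[\max_G W]\ge\sqrt{1-\tau^2}\,\E[\max_G(-\zeta)]-\tau\,\E[\max_G(-\xi)]$. The first term is $\sqrt{1-\tau^2}$ times the expected maximum of $|G|$ independent Gaussians of variance $\ge\underline\sigma^2$, hence $\gtrsim\sqrt{1-\tau^2}\,\underline\sigma\sqrt{\log|G|}$; the second is dominated by the smooth supremum over $\Lambda$ already bounded above. Counting points through the covering property, the $\rho$-ball around $\blambda^*$ contains $|G|\gtrsim(\rho/\eta)^d$ candidates, so $\log|G|\gtrsim\tfrac d2\log(\rho^2/\eta^2)$. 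I would then optimize the free radius, taking $\rho^2\asymp\sigma/m$ so the curvature penalty $m\rho^2$ is of order $\sigma$ (absorbed into the constant $8$) while $\log|G|$ becomes $\tfrac d2\log(\sigma/(2m\eta^2))$, producing $A(\tau)$; collecting the two smooth contributions under a common chaining constant gives the factor $48$ in $B(\tau)$.

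The main obstacle is the lower bound on $\E[\max_G W]$: unlike the upper bound, which uses only subadditivity of the maximum, a lower bound must show the independent component genuinely contributes even though it is entangled with the correlated component $\tau\xi$. The pointwise split above is the clean resolution, but it forces me to pay a second copy of the smooth supremum (hence the generous constant on $\tau\sqrt{1+\log(3\kappa)_+}$) and relies on the standard but nontrivial fact that the expected maximum of $N$ independent centered Gaussians is monotone in their variances and at least $c\,\sqrt{\log N}$ times the smallest standard deviation. Secondary bookkeeping obstacles are ensuring the $\rho$-ball around $\blambda^*$ lies, up to a constant fraction, inside $\{\|\blambda\|\le 1\}$ so the covering count is valid, and the truncation $(\cdot)_+$ that lets the benefit vanish gracefully when $m$ or $\eta$ is too large for localization to help; the covering count for randomly sampled $\Lambda$ is supplied by Lemma~\ref{lem:grid}.
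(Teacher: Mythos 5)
Your proposal is correct in its essentials but follows a genuinely different route from the paper's proof. The paper never decomposes $\eps$: it computes increment variances $\var[\eps(\blambda)-\eps(\blambda')]$ directly from \eqref{eq:kernel}, bounds the tail $\Pr\left(\mu(\wh\blambda)-\mu(\blambda^*)>\delta\right)$ by comparing $\max_{\blambda\notin\Lambda_\delta}\eps(\blambda)$ against $\max_{\blambda\in\Lambda_{\delta/2}}\eps(\blambda)$ with Gaussian concentration, and then integrates the tail over $\delta$. You instead use the pathwise localization inequality $\mu(\wh\blambda)-\mu(\blambda^*)\le m\rho^2+\max_{\Lambda}(-\eps)-\max_{G}(-\eps)$ and pass to expectations by linearity, together with the explicit representation $\eps=\tau\xi+\sqrt{1-\tau^2}\,\zeta$ (correlated plus independent part), whose covariance check against \eqref{eq:kernel} you correctly record. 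The core tools then coincide: Dudley with a two-scale entropy analysis for the upper bounds, a Sudakov-type lower bound on the good set, the packing count via $\eta$, and the radius choice $\rho^2\asymp\sigma/m$. Your route is cleaner --- no concentration inequality, no tail integration --- and it sidesteps a step that is sloppy in the paper as written (there the deviations of the two restricted maxima are bounded by the deviation of the full maximum, which is not valid; one should apply concentration to each restricted maximum separately). What the paper's route buys is intermediate tail bounds and, more importantly for the literal statement, a factor of two: squaring the $\delta/2$ threshold in the concentration exponent effectively doubles the weight of the good-set maximum, which is how the paper obtains coefficient $1$ on $A(\tau)$. Your expectation-level argument, with the honest Sudakov constant ($\tfrac12$, separation $\sqrt{2}\,\underline\sigma$) and $\log|G|\ge\tfrac{d}{2}\log(\rho^2/\eta^2)$, subtracts only about $\tfrac12\,\sigma\sqrt{d}\,A(\tau)$, while your $B$-side terms (the doubled smooth supremum giving $48\,\tau\sqrt{1+\log(3\kappa)_+}$, the $\sqrt{2}\sqrt{1-\tau^2}\sqrt{\log J}$ from the independent part, and $m\rho^2\le\sigma$) fit comfortably within the stated $8$ and $48$. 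So as written you prove the theorem with $A(\tau)$ replaced by $A(\tau)/2$ --- a constant-level shortfall rather than a conceptual gap, and a defensible one given the paper's own remark that its numeric constants are artifacts of worst-case simplification; but matching the statement verbatim would require a sharper non-asymptotic lower bound for the maximum of independent Gaussians than plain Sudakov minoration.
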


The numeric constants result from several simplifications in a worst-case analysis, which lowers their practical relevance.
A qualitative analysis of the bound is still insightful.
The bound is increasing in $\sigma$ and $d$, indicating that the HPO problem is harder when there is a lot of noise or there are many parameters to tune.
The terms $B(\tau)$ and $A(\tau)$ have conceptual interpretations:
\begin{itemize}
  \item The term $B(\tau)$ quantifies how likely it is to pick a bad $\wh \blambda$ because of bad luck: a $\blambda$ far away from $\blambda^*$ had such a small $\eps(\blambda)$ that it outweighs the increase in $\mu$.
        Such events are more likely when the process $\eps$ is weakly correlated.
        Accordingly, $B(\tau)$ is decreasing in $\tau$ and increasing in $\kappa$.

  \item The term $A(\tau)$ quantifies how likely it is to pick a good $\wh \blambda$ by luck: a $\blambda$ close to $\blambda^*$ had such a small $\eps(\blambda)$ that it overshoots all the other fluctuations.
        Also such events are more likely when the process $\eps$ is weakly correlated. Accordingly, the term $A(\tau)$ is decreasing in $\tau$.
\end{itemize}
The $B$, as stated, is unbounded, but a closer inspection of the proof shows that it is upper bounded by $\sqrt{\log J}$. This bound is attained only in the unrealistic scenario when the validation losses are essentially uncorrelated across all HPCs. The term $A$ is bounded from below by zero, which is also the worst case because the term enters our regret bound with a negative sign.
 
Both $A$ and $B$ are decreasing in the reshuffling parameter $\tau$. There are two regimes. If $\sigma / 2m \eta^2 \le e$, then $A(\tau) = 0$ and reshuffling cannot lead to an improvement of the bound. The term $\sigma / m \eta^2$ can be interpreted as noise-to-signal ratio (relative to the grid density). If the signal is much stronger than the noise, the HPO problem is so easy that reshuffling will not help. This situation is illustrated in Figure~\ref{fig:shuffling-illustration-bad}.

\begin{figure}[t]
    \begin{subfigure}[b]{0.4\textwidth}
        \centering
        \includegraphics[width=\textwidth]{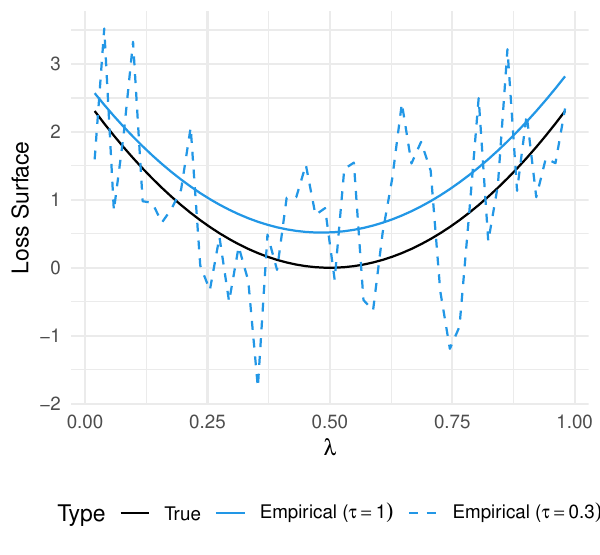}
        \caption{High signal-to-noise ratio}
        \label{fig:shuffling-illustration-bad}
    \end{subfigure}
    \centering
    \hspace{0.6cm}
        \begin{subfigure}[b]{0.4\textwidth}
        \centering
        \includegraphics[width=\textwidth]{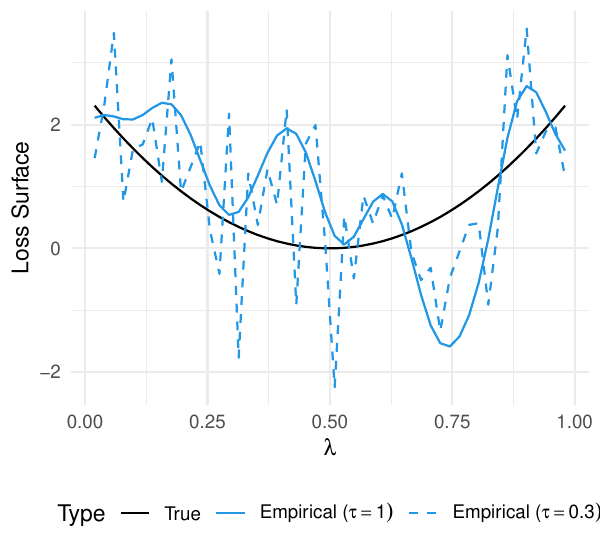}
        \caption{Low signal-to-noise ratio}
        \label{fig:shuffling-illustration-good}
    \end{subfigure}
    \caption{Example of reshuffled empirical loss yielding a worse (left) and better (right) minimizer.}
    \label{fig:shuffling-illustration}
\end{figure}

If on the other hand $\sigma / m \eta^2 > e$, the terms $A(\tau)$ and $B(\tau)$ enter the bound with opposing signs. This creates tension: reshuffling between HPCs increases $B(\tau)$, which is countered by a decrease in $A(\tau)$.
So which scenarios favor reshuffling? When the process $\eps$ is strongly correlated, $\kappa$ is small and reshuffling (decreasing $\tau$) incurs a high cost in $B(\tau)$. This is intuitive: When there is strong correlation, the validation loss surface $\wh \mu$ is essentially just a vertical shift of $\mu$. Finding the optimal $\blambda$ is then almost as easy as if we would know $\mu$, and decorrelating the surface through reshuffling would make it unnecessarily hard. When $\eps$ is less correlated ($\kappa$ large) however, reshuffling does not hurt the term $B(\tau)$ as much, but we can reap all the benefits of increasing $A(\tau)$.
Here, the effect of reshuffling can be interpreted as hedging against the catastrophic case where all $\wh \mu(\blambda)$ close to the optimal $\blambda^*$ are simultaneously dominated by a region of bad hyperparameters. This is illustrated in Figure~\ref{fig:shuffling-illustration-good}.

\section{Simulation Study}\label{sec:simulation-study}
To test our theoretical understanding of the potential benefits of reshuffling resampling splits during HPO, we conduct a simulation study.
This study helps us explore the effects of reshuffling in a controlled setting.

\subsection{Design}
We construct a univariate quadratic loss surface function $\mu: \Lambda \subset \mathbb{R} \mapsto \mathbb{R}, \lambda \rightarrow m (\lambda - 0.5)^2 / 2$ which we want to minimize.
The global minimum is given at $\mu(0.5) = 0$.
Combined with a kernel for the noise process $\eps$ as in Equation~\eqref{eq:kernel}, this allows us to simulate an objective as observed during HPO by sampling $\wh \mu(\lambda) = \mu(\lambda) + \eps(\lambda)$.
We use a squared exponential kernel $K(\lambda, \lambda') = \sigma_{K}^2 \exp{(-\kappa (\lambda - \lambda')^2 / 2)}$ that is plugged into the covariance kernel of the noise process $\eps$ in Equation~\eqref{eq:kernel}.
The parameters $m$ and $\kappa$ in our simulation setup correspond exactly to the curvature and correlation constants from the previous sections.
Recall that Theorem~\ref{thm:main} states that the effect of reshuffling strongly depends on the curvature $m$ of the loss surface $\mu$ (a larger $m$ implies a stronger curvature) and the constant $\kappa$ as a measure of correlation of the noise $\eps$ (a larger $\kappa$ implies weaker correlation).
Combined with the possibility to vary $\tau$ in the covariance kernel of $\eps$, we can systematically investigate how curvature of the loss surface, correlation of the noise and the extent of reshuffling affect optimization performance.
In each simulation run, we simulate the observed objective $\hat{\mu}(\lambda)$, identify the minimizer $\hat{\lambda} = \arg\min_{\lambda \in \Lambda} \hat{\mu}(\lambda)$, and calculate its true risk, $\mu(\hat{\lambda})$.
We repeat this process $10 000$ times for various combinations of $\tau$, $m$, and $\kappa$.

\begin{figure}[ht]
\begin{center}
\centerline{\includegraphics[width=\textwidth]{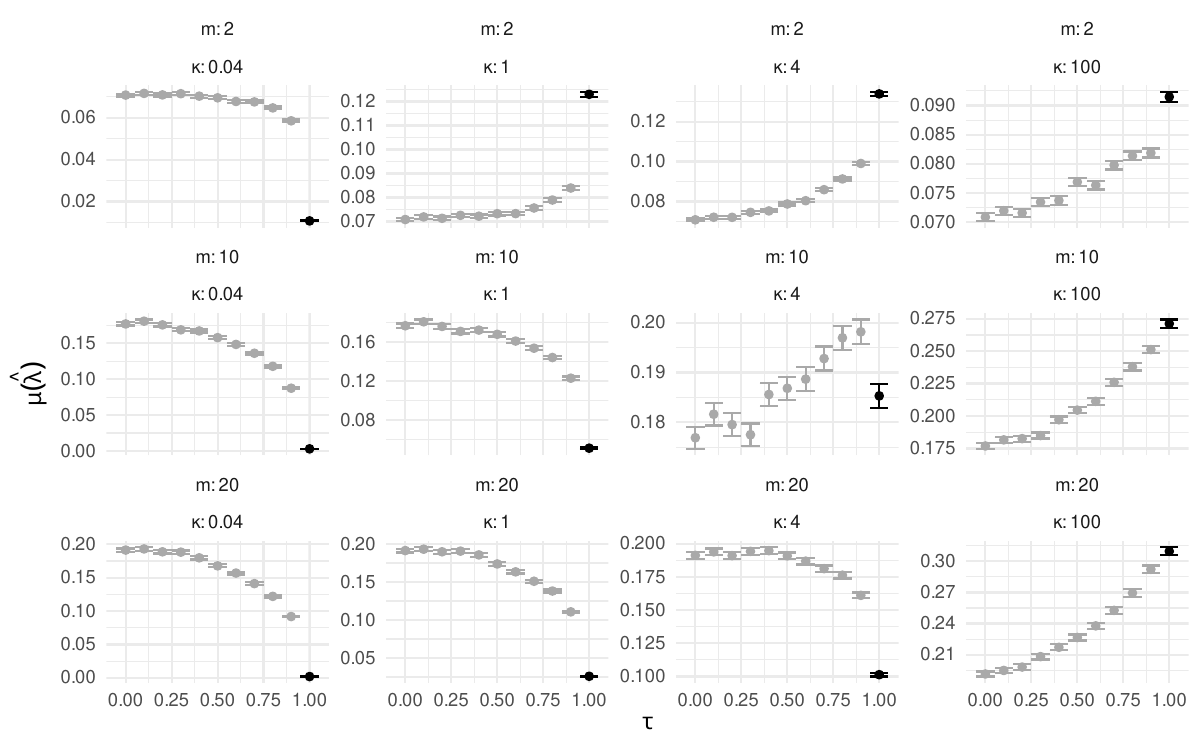}}
\caption{Mean true risk (lower is better) of the configuration minimizing the observed objective systematically varied with respect to curvature $m$, correlation strength $\kappa$ of the noise (a larger $\kappa$ implying weaker correlation), and extent of reshuffling $\tau$ (lower $\tau$ increasing reshuffling). A $\tau$ of 1 indicates no reshuffling. Error bars represent standard errors.} 
\label{fig:simuation_results}
\end{center}
\end{figure}

\subsection{Results}
Figure~\ref{fig:simuation_results} visualizes the true risk of the configuration $\hat \lambda$ that minimizes the observed objective. 
We observe that for a loss surface with low curvature (i.e., $m \le 2$), reshuffling is beneficial (lower values of $\tau$ resulting in a better true risk of the configuration that optimizes the observed objective) as long as the noise process is not too correlated (i.e., $\kappa \ge 1$).
As soon as the noise process is more strongly correlated, even flat valleys of the true risk $\mu$ remain clearly visible in the observed risk $\wh \mu$, and reshuffling starts to hurt the optimization performance.
Moving to scenarios of high curvature, the general relationship of $m$ and $\kappa$ remains the same, but reshuffling starts to hurt optimization performance already with weaker correlation in the noise.
In summary, the simulations show that in cases of low curvature of the loss surface, reshuffling (reducing $\tau$) tends to improve the true risk of the optimized configuration, especially when the loss surface is flat (small $m$) and the noise is not strongly correlated (i.e., $\kappa$ is large). This exactly confirms our theoretical predictions from the previous section.

\section{Benchmark Experiments}\label{sec:experiments}

In this section, we present benchmark experiments of real-world HPO problems where we investigate the effect of reshuffling resampling splits during HPO. First, we discuss the experimental setup. Second, we present results for HPO using random search~\citep{bergstra-jmlr12a}. Third, we also show the effect of reshuffling when applied in BO using HEBO~\citep{cowenrivers-jair22a} and SMAC3~\citep{lindauer-jmlr22a}.
Recall that our theoretical insight suggests that 1) reshuffling might be beneficial during HPO and 2) holdout should be affected the most by reshuffling and other resamplings should only be affected to a lesser extent.

\subsection{Experimental Setup}\label{sec:setup}

As benchmark tasks, we use a set of standard HPO problems defined on small- to medium-sized tabular datasets for binary classification. 
We suspect the effect of the resampling variant used and whether the resampling is reshuffled to be larger for smaller datasets, where the variance of the validation loss estimator is naturally higher.
Furthermore, from a practical perspective, this also ensures computational feasibility given the large number of HPO runs in our experiments. 
We systematically vary the learning algorithm, optimized performance metric, resampling method, whether the resampling is reshuffled, and the size of the dataset used for training and validation during HPO.
Below, we outline the general experimental design and refer to Appendix~\ref{app:benchmark_details} for details.

We used a subset of the datasets defined by the AutoML benchmark~\citep{gijsbors-jmlr24a}, treating these as data generating processes~\citep[DGPs;][]{hothorn-cgs05}.
We only considered datasets with less than $100$ features to reduce the required computation time and required the number of observations to be between $10 000$ and $1 000 000$; 
for further details see Appendix~\ref{app:datasets}.
Our aim was to robustly measure the generalization performance when varying the size $n$, which, as defined in Section~\ref{sec:theory} denotes the size of the combined data for model selection, so one training and validation set combined. 
First, we sampled $5 000$ data points per dataset for robust assessment of the generalization error; these points are not used during HPO in any way.
Then, from the remaining points we sampled tasks with $n \in \{500, 1000, 5000\}$.

We selected CatBoost~\citep{prokhorenkova-neurips18a} and XGBoost~\citep{chen-kdd16a} for their state-of-the-art performance on tabular data~\citep{grinsztajn-neuripsdbt22, borisov-tnnls22a, mcelfresh-neurips23,kohli-dmlr24a}.
Additionally, we included an Elastic Net~\citep{zou-jrsssb05a} to represent a linear baseline with a smaller search space and a funnel-shaped MLP~\citep{zimmer-tpami21a} as a cost-effective neural network baseline.
We provide details regarding training pipelines and search spaces in Appendix~\ref{app:learners}.

We conduct a random search with $500$ HPC evaluations for every resampling strategy we described in Table~\ref{tab:reshuffling-params}, for both fixed and reshuffled splits.
We always use 80/20 train-validation splits for holdout and 5-fold CVs, so that training set size (and negative estimation bias) are the same.
Anytime test performance of an HPO run is assessed by re-training the current incumbent (i.e. the best HPC until the current HPO iteration based on validation performance) on all available train and validation data and evaluating its performance on the outer test set.
Note we do this for scientific evaluation in this experiment; obviously, this is not possible in practice.
Using random search allows us to record various metrics and afterwards simulate optimizing for different ones, specifically, we recorded accuracy, area under the ROC curve (ROC AUC) and logloss.

We also investigated the effect of reshuffling on two state-of-the-art BO variants~\citep{eggensperger-neuripsdbt21a,turner-neuripscomp21a}, namely HEBO~\citep{cowenrivers-jair22a} and SMAC3~\citep{lindauer-jmlr22a}.
The experimental design was the same as for random search, except for the budget, which we reduced from 500 HPCs to 250 HPCs, and only optimized ROC AUC.

\subsection{Experimental Results}
In the following, we focus on the results obtained using ROC AUC.
We present aggregated results over different tasks, learning algorithms and replications to get a general understanding of the effects.
Unaggregated results and results involving accuracy and logloss can be found in Appendix~\ref{app:benchmark_results}.

\paragraph{Results of Reshuffling Different Resamplings}\label{sec:reshuffling-rs}
For each resampling (holdout, 5-fold holdout, 5-fold CV, and 5x 5-fold CV), we empirically analyze the effect of reshuffling train and validation splits during HPO.

In Figure~\ref{fig:hpo_auc_xgboost_41147_test} we exemplarily show how test performance develops over the course of an HPO run on a single task for different resamplings (with and without reshuffling).
Naturally, test performance does not necessarily increase in a monotonic fashion, and especially holdout without reshuffling tends to be unstable.
Its reshuffled version results in substantially better test performance.

\begin{figure}[tb]
\begin{center}
\centerline{\includegraphics[width=\textwidth]{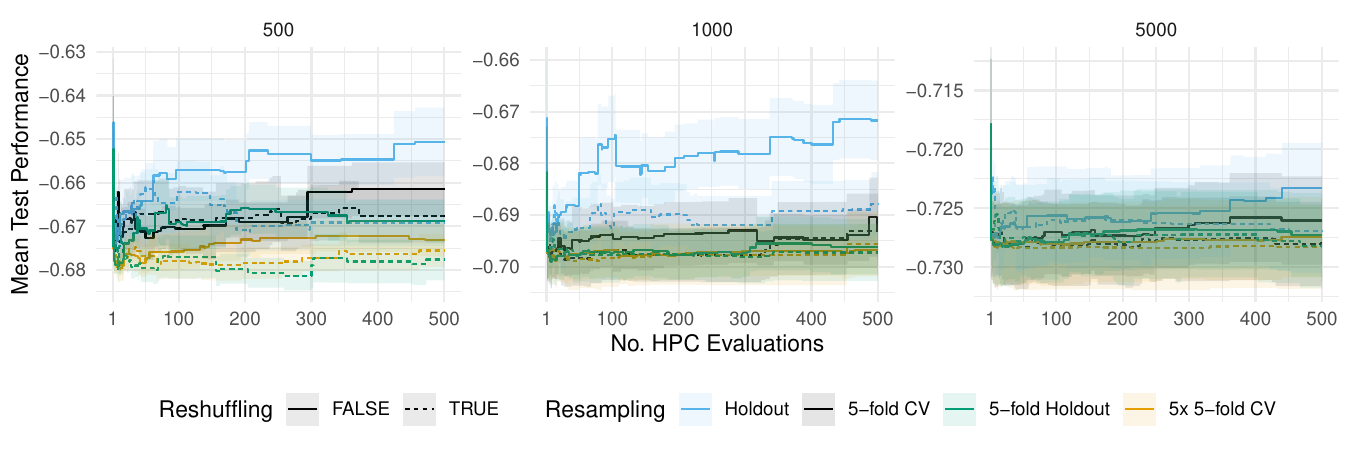}}
\caption{Average test performance (negative ROC AUC) of the incumbent for XGBoost on dataset albert for increasing $n$ (train-validation sizes, columns).
Shaded areas represent standard errors.}
\label{fig:hpo_auc_xgboost_41147_test}
\end{center}
\end{figure}

Next, we look at the relative \emph{improvement} (compared to standard 5-fold CV, which we consider our baseline) with respect to \emph{test} ROC AUC performance of the incumbent over time in Figure~\ref{fig:hpo_auc_imp}, i.e., the difference in test performance of the incumbent between standard 5-fold CV and a different resampling protocol; hence a positive difference tells us how much better in test error we are, if we would have chosen the other protocol instead 5-fold CV.
We observe that reshuffling generally results in equal or better performance compared to the same resampling protocol without reshuffling.
For 5-fold holdout and especially 5-fold CV and 5x 5-fold CV, reshuffling has a smaller effect on relative test performance improvement, as expected.
Holdout is affected the most by reshuffling and results in substantially better relative test performance compared to standard holdout.
We also observe that an HPO protocol based on reshuffled holdout results in similar final test performance as standard 5-fold CV while overall being substantially cheaper due to requiring less model fits per HPC evaluation.
In Appendix~\ref{app:repeatedholdout}, we further provide an ablation study on the number of folds when using $M$-fold holdout, where we observed that -- in line with our theory -- the more folds are used, the less reshuffling affects $M$-fold holdout.

\begin{figure}[bt]
\begin{center}
\centerline{\includegraphics[width=\textwidth]{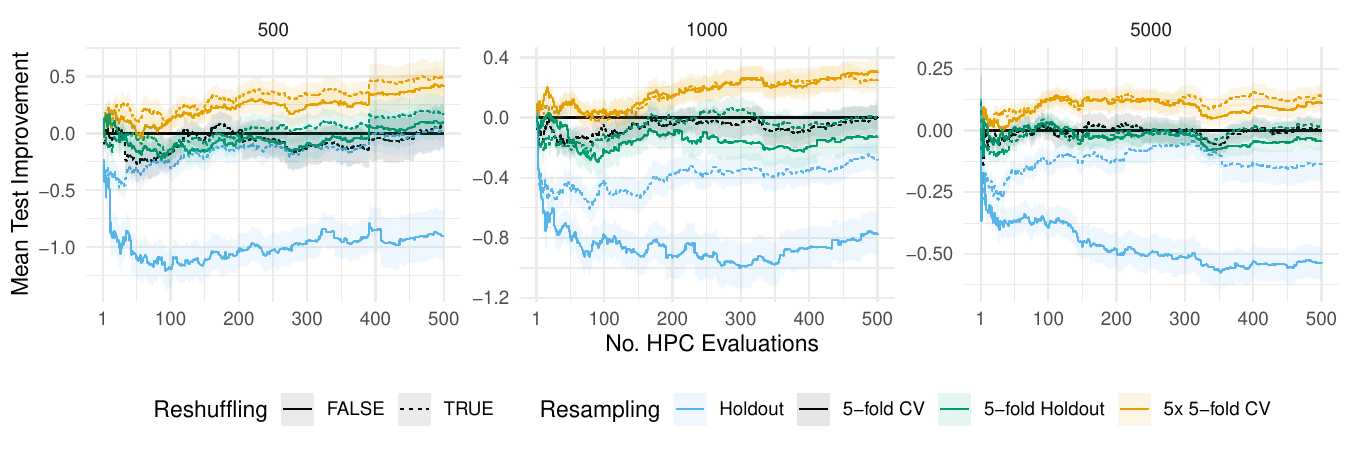}}
\caption{Average improvement (compared to standard 5-fold CV) with respect to test performance (ROC AUC) of the incumbent over different tasks, learning algorithms and replications separately for increasing $n$ (train-validation sizes, columns).
Shaded areas represent standard errors.}
\label{fig:hpo_auc_imp}
\end{center}
\end{figure}

However, this general trend can vary for certain combinations of classifier and performance metric, see Appendix~\ref{app:benchmark_results}.
Especially for logloss, we observed that reshuffling rarely is beneficial; see the discussion in Section~\ref{sec:discussion}.
Finally, the different resamplings generally behave as expected.
The more we are willing to invest compute resources into a more intensive resampling like 5-fold CV or 5x 5-fold CV, the better the generalization performance of the final incumbent.

\paragraph{Results for BO and Reshuffling}\label{sec:reshuffling-bo}

Figure \ref{fig:bo_holdout_02_imp_main} shows that, generally HEBO and SMAC3 outperform random search with respect to generalization performance (i.e., comparing HEBO and SMAC3 to random search under standard holdout, or comparing under reshuffled holdout).
More interestingly, HEBO, SMAC3 and random search all strongly benefit from reshuffling.
Moreover, the performance gap between HEBO and random search but also SMAC3 and random search narrows when the resampling is reshuffled, which is an interesting finding of its own: 
As soon as we are concerned with generalization performance of HPO and not only investigate validation performance during optimization, the choice of optimizer might have less impact on final generalization performance compared to other choices such as whether the resampling is reshuffled during HPO or not.
We present results for BO and reshuffling for different resamplings in Appendix~\ref{app:benchmark_results}.

\begin{figure}[ht]
\vskip 0.2in
\begin{center}
\centerline{\includegraphics[width=\textwidth]{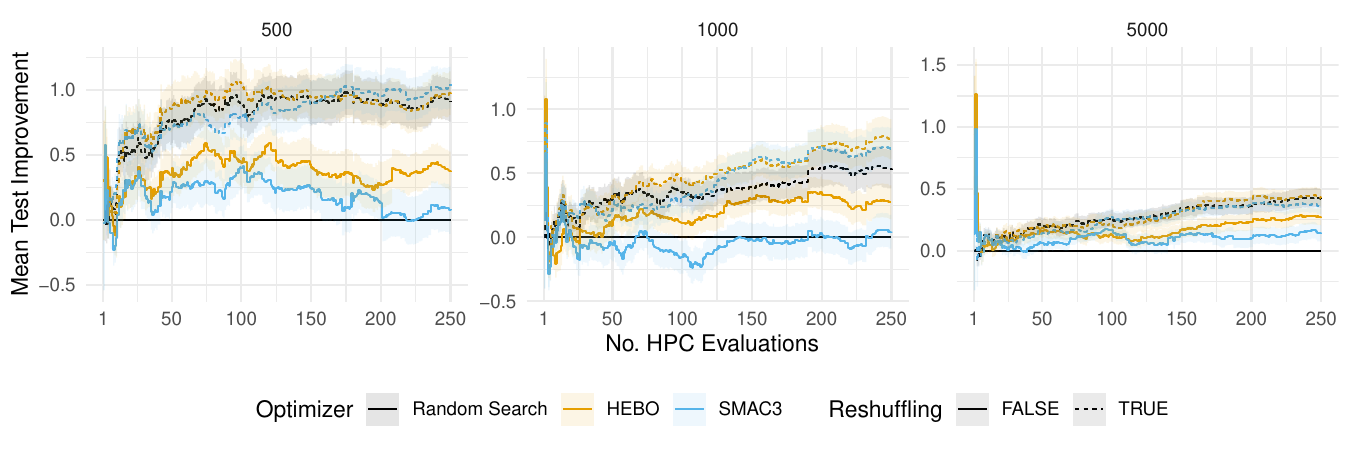}}
\caption{Average improvement (compared to random search on standard holdout) with respect to test performance (ROC AUC) of the incumbent over tasks, learning algorithms and replications for different $n$ (train-validation sizes, columns).
Shaded areas represent standard errors.}
\label{fig:bo_holdout_02_imp_main}
\end{center}
\vskip -0.2in
\end{figure}

\section{Discussion}\label{sec:discussion}
In the previous sections, we have shown theoretically and empirically that reshuffling can enhance generalization performance of HPO. The main purpose of this article is to draw attention to this surprising fact about a technique that is simple but rarely discussed. 
Our work goes beyond a preliminary experimental study on reshuffling~\citep{levesque-diss18a}, in that we also study the effect of reshuffling on random search, multiple metrics and learning algorithms, and most importantly, for the first time, we provide a theoretical analysis that explains why reshuffling can be beneficial.

\paragraph{Limitations} \label{sec:limitations}
To unveil the mechanisms underlying the reshuffling procedures, our theoretical analysis relies on an asymptotic approximation of the empirical loss surface. This allows us to operate on Gaussian loss surfaces, which exhibit convenient concentration and anti-concentration properties required in our proof. The latter are lacking for general distributions, which explains our asymptotic approach.
The analysis was further facilitated by a loss stability assumption regarding the learning algorithms that is generally rather mild; see the discussion in \citet{bayle2020cross}.
However, it typically fails for highly sensitive losses, which has practical consequences.
In fact, Figure~\ref{fig:hpo_imp} in Appendix~\ref{app:benchmark_results} shows that reshuffling usually hurts generalization for the logloss and small sample sizes.
It is still an open question whether this problem can be fixed by less naive implementations of the technique.
Another limitation is our focus on generalization after search through a fixed, finite set of candidates.
This largely ignores the dynamic nature of many HPO algorithms, which would greatly complicate our analysis.
Finally, our experiments are limited in that we restricted ourselves to tabular data and binary classification and we avoided extremely small or large datasets.

\paragraph{Relation to Overfitting}
The fact that generalization performance can decrease during HPO (or computational model selection in general) is sometimes known as oversearching, overtuning, or overfitting to the validation set \citep{quinlan-ijcai95a,escalante-jmlr09a,koch-10a,igel-ieee213,bischl-dmkd23a}, but has arguably not been studied very thoroughly.
Given recent theoretical~\citep{feldman-icml19a} and empirical~\citep{purucker-automl23a} findings, we expect less overtuning on multi-class datasets, making it interesting to see how reshuffling would affect the generalization performance.

Several works suggest strategies to counteract this effect. First, LOOCVCV proposes a conservative choice of incumbents~\citep{ng-icml97a} at the cost of leave-one-out analysis or an additional hyperparameter.
Second, it is possible to use an extra \emph{selection set}~\citep{igel-ieee213,levesque-diss18a,mohr-ml18a} at the cost of reduced training data, which was found to lead to reduced overall performance~\citep{levesque-diss18a}.
Third, by using early stopping one can stop hyperparameter optimization before the generalization performance degrades again. This was so far demonstrated to be able to save compute budget at only marginally reduced performance, but also requires either a sensitivity hyperparameter or correct estimation of the variance of the generalization estimate and was only developed for cross-validation so far~\citep{makarova-automlconf22a}.
Reshuffling itself is orthogonal to these proposals and a combination with the above-mentioned methods might result in further improvements.

\paragraph{Outlook} \label{sec:outlook}
Generally, the related literature detects overfitting to the validation set either visually~\citep{ng-icml97a} or by measuring it~\citep{koch-10a,igel-ieee213,fabris-lod19a}.
Developing a unified formal definition of the above-mentioned terms and thoroughly analyzing the effect of decreased generalization performance after many HPO iterations and how it relates to our measurements of the validation performance is an important direction for future work.

We further found, both theoretically and experimentally, that investing more resources when evaluating each HPC can result in better final HPO performance.
To reduce the computational burden on HPO again, we suggest further investigating the use of adaptive CV techniques, as proposed by Auto-WEKA~\citep{thornton-kdd13a} or under the name Lazy Paired Hyperparameter Tuning~\citep{zheng-ijcai13a}.
Designing more advanced HPO algorithms exploiting the reshuffling effect should be a promising avenue for further research.

\begin{ack}
We thank Martin Binder and Florian Karl for helpful discussions.
Lennart Schneider is supported by the Bavarian Ministry of Economic Affairs, Regional Development and Energy through the Center for Analytics - Data - Applications (ADACenter) within the framework of BAYERN DIGITAL II (20-3410-2-9-8).
Lennart Schneider acknowledges funding from the LMU Mentoring Program of the Faculty of Mathematics, Informatics and Statistics.
\end{ack}

\bibliographystyle{natbib_icml_style}
\bibliography{strings,lib,local,proc}

\clearpage
\appendix

\section{Notation}

\begin{table}[ht]
    \caption{Notation table. We discuss all symbols used in the main paper.}
    \label{tab:notation}
    \centering
    \scriptsize
    \begin{tabular}{c|l}
        \toprule
        $\bX_i$ & Random vector, describing the features \\
        $Y_i$ & Random variable, describing the target \\
        $\bZ_i = (\bX_i, Y_i)$ & Data point \\
        $\Dcal =\{\bZ_i\}_{i = 1}^n$ & Dataset consisting of \emph{iid} random variables \\
        $n$ & Number of observations \\
        $g$ & Inducer/ML algorithm \\
        $h$ & Model, created by the inducer via $h=g_{\blambda}(\D)$\\
        $\blambda$ & Hyperparameter configuration \\
        $\Lambda$ & Finite set of all hyperparameter configurations\\
        $J$ & $|\Lambda|$, i.e., the number of hyperparameter configurations\\
        $g_{\blambda_j}$ & Hyperparameterized inducer\\
        $\mu(\blambda)$ & Expected loss of a hyperparameterized inducer on the distribution of a dataset \\
        $\ell({\bZ}, h)$ & Loss of a model $h$ on a fresh observation $\bZ$\\
        $M$ & Number of folds in M-fold cross-validation \\
        $\alpha$ & Percentage of samples to be used for validation\\
        $\Ical_{1, j}, \dots, \Ical_{M, j} \subset \{1, \dots, n\}$ & $M$ sets of validation indices, to be used for evaluating $\blambda_j$\\
        $\Vcal_{m, j}$ & Validation data for fold $m$ and configuration $\blambda_j$ \\
        $\Tcal_{m, j}$ & Training data for fold $m$ and configuration $\blambda_j$ \\
        $L(\Vcal_{m, j}, g_{\blambda_j}(\Tcal_{m, j}))$ & Validation loss for fold $m$ and configuration $\blambda_j$ \\
        $\wh \mu(\blambda_j)$ & M-fold validation loss\\
        $\sigma^2$ & Increase in variance of validation loss caused by resampling \\
        $\tau^2$ & Decrease in correlation among validation losses caused by reshuffling \\
        $\tau_{i,j,M}$ & Resampling-related component of validation loss covariance  \\
        $K(\cdot, \cdot)$ & Kernel capturing the covariance of the pointwise losses between two HPCs \\
        $\eps(\blambda_j)$ & Zero-mean Gaussian process, see Equation~\eqref{eq:gp-model} \\
        $d$ & Number of hyperparameters \\
        $\kappa$ & Curvature constant of covariance kernel \\
        $\eta$ & Density of hyperparameter set $\Lambda$ \\
        $m$ & Local curvature at the minimum of the loss surface $\mu$\\
        $\underline \sigma$ & Lower bound on the noise level  \\
    $B(\tau)$ & Part of the regret bound penalizing reshuffling \\
        $A(\tau)$ & Part of the regret bound rewarding reshuffling \\
       \bottomrule
    \end{tabular}
\end{table}

\section{Extended Related Work}
\label{app:extended_related}
Due to the black box nature of the HPO problem~\citep{feurer-automlbook19a,bischl-dmkd23a}, gradient free, zeroth-order optimization algorithms such as BO \citep{garnett-book23a}, Evolutionary Strategies~\citep{loshchilov-iclrws16a} or a simple random search~\citep{bergstra-jmlr12a} have become standard optimization algorithms to tackle vanilla HPO problems.

In the last decade, most research on HPO has been concerned with constructing new algorithms that excel at finding configurations with a low estimated generalization error.
Examples include BO variants such as as HEBO~\citep{cowenrivers-jair22a} or SMAC3~\citep{lindauer-jmlr22a}.
Another direction of HPO research has been concerned with speeding up the HPO process to allow more efficient spending of compute resources.
Multifidelity HPO, for example, turns the black box optimization problem into a gray box one by making use of lower fidelity approximations to the target function, i.e., using fewer numbers of epochs or subsets of the data for cheap low-fidelity evaluations that approximate the costly high-fidelity evaluation.
Examples include bandit-based budget allocation algorithms such as Successive Halving~\citep{jamieson-aistats16a}, Hyperband~\citep{li-jmlr18a} and their extensions that use non-random search mechanisms~\citep{falkner-icml18a,awad-ijcai21a,malik-neurips23} or algorithms making use of multi-fidelity information in the context of BO \citep{swersky-arxiv14a,klein-aistats17a,wu-uai20a,kadra-neurips23a}.
Several works address the problem of speeding up cross-validation techniques and use techniques that could be described as grey box optimization techniques.
Besides the ones mentioned in the main paper~\citep{thornton-kdd13a,zheng-ijcai13a}, it is possible to employ racing techniques for model selection in machine learning as demonstrated by \citet{lang-jscs15a}, and there has been a recent interest in methods that adapt the cost of running full cross-validation procedures~\citep{bergman-automl24a,buczak-asta24a}.

When addressing the problem of HPO, we must acknowledge an inherent mismatch between the explicit objective we optimize -- namely, the estimated generalization performance of a model -- and the actual implicit optimization goal, which is to identify a configuration that yields the best generalization performance on new, unseen data.
Typically, evaluations and comparisons of different HPO algorithms focus exclusively on the final best validation performance (i.e., the objective that is directly optimized), even though an unbiased estimate of performance on an external unseen test set might be available.
While this approach is logical for assessing the efficacy of an optimization algorithm based on the metric it seeks to improve, relying solely on finding an optimal validation configuration is beneficial only if there is reason to assume a strong correlation between the optimized validation performance and true generalization ability on new, unseen test data.
This discrepancy can be found deeply within the HPO community, where the evaluation of HPO algorithms on standard benchmark libraries is usually done solely with respect to the validation performance~\citep{eggensperger-neuripsdbt21a, pineda-neuripsdbt21a, salinas-automl22, pfisterer-automl22a}.\footnote{We admit that these benchmark libraries implement efficient benchmarking methods such as surrogate~\citep{eggensperger-mlj18a,pfisterer-automl22a} or tabular benchmarks~\citep{pineda-neuripsdbt21a}. It would be possible to adapt them to return the test performance, however, changes in the HPO evaluation protocol, such as the one we propose, would not be feasible.}
This relationship between validation performance (i.e., the estimated generalization error derived from resampling) and true generalization performance (e.g., assessed through an outer holdout test set or additional resampling) of an optimal validation configuration found during HPO remains a largely unexplored area of research.

In general, little research has focused on the selection of resampling types, let alone the automated selection of resampling types~\citep{guyon-jmlr10a,feurer-jmlr22a}.
While we usually expect that a more intensive resampling will reduce the variance of the estimated generalization error and thereby improve the (rank) correlation between optimized validation and unbiased outer test performance within HPCs, this benefit is naturally offset by a higher computational expense.
Overall, there is little research on which resampling method to use in practice for model selection, and we only know of a study for support vector machines~\citep{wainer-jmlr17a}, a simulation study for clinical prediction models~\citep{dunias-sm24a}, a study on feature selection~\citep{molinaro-bioinf05a} and a study on fast CV~\citep{bergman-automl24a}. 
In addition, ML-Plan~\citep{mohr-ml18a} proposed a two-stage procedure.
In a first stage (search), the tool uses planning on hierarchical task networks to find promising machine learning pipelines on $70\%$ of the training data.
In a second step (selection), it uses $100\%$ of the training data and retrains the most promising candidates from the search step.
Finally, it uses a combination of the internal generalization error estimation that was used during search and the $0.75$ percentile of the generalization error estimation from the selection step to make a more unbiased selection of the final model.
The paper found that this improves performance over using only regular cross-validation for search and selection.
The general consensus, that is in agreement with our findings, is that CV or repeated CV generally leads to better generalization performance. In addition, while there are theoretical works that compare the accuracy of estimating the generalization error of holdout and CV~\citep{blum-colt99a}, our goals is to correctly identify a single solution, which generalizes well, see the excellent survey by \citet{arlot-stat10a} for a discussion on this topic. 

\citet{bouthillier-mlsys21a} studied the sources of variance in machine learning experiments, and find that the split into training and test data has the largest impact.
Consequently, they  suggest to reshuffle the data prior to splitting it into the training, which is then used for HPO, and the test set. We followed their suggestion when designing our experiments and draw a new test sample for every replication, see Section~\ref{sec:setup} and Appendix~\ref{app:benchmark_details}.
This dependence on the exact split was further already discussed in the context of how much the outcome of a statistical test on results of machine learning experiments depended on the exact train-test split~\citep{bouckaert-kdd04a}.

Finally, the first warning against comparing too many hypothesis using cross-validation was raised by \citet{schaffer-mlj93a}, and in addition to the works discussed in Section~\ref{sec:discussion} in the main paper, also picked up by \citet{rao-sdm08a,cawley-jmlr10a}. Moreover, the problem of finding a correct "upper objective" in a bilevel optimization problem has been noted~\citep{guyon-jmlr10a,guyon-ijcnn15a,guyon-automlbook19a}. Also, in the related field of algorithm configuration the problem has been identified~\citep{eggensperger-jair19a}.

\subsection{Current Treatment of Resamplings in HPO Libraries and Software}

In Table~\ref{tab:software_sota}, we provide a brief summary of how resampling is handled in popular HPO libraries and software.\footnote{This summary is not exhaustive but reflects the general consensus observed in widely-used software.}
For each library, we checked whether the core functionality, examples, or tutorials mention the possibility of reshuffling the resampling during HPO or if the resampling is considered fixed.
If reshuffling is used in an example, mentioned, or if core functionality uses it, we mark it with a \cmark. If it is unclear or inconsistent across examples and core functionality, we mark it with a ?.
Otherwise, we use a \xmark.
Our conclusion is that the concept of reshuffling resampling generally receives little attention.

\begin{table}[ht]
\centering
\begin{threeparttable}
\caption{Exemplary Treatment of Resamplings in HPO Libraries and Software}
\label{tab:software_sota}
\begin{tabular}{llr}
\toprule
Software & Reshuffled? & Reference(s)\\
\midrule
sklearn & \xmark & \texttt{GridSearchCV}\tnote{1} / \texttt{RandomizedSearchCV}\tnote{2}\\
HEBO        & \xmark & \texttt{sklearn\_tuner}\tnote{3}\\
optuna      & ? & Inconsistency between examples\tnote{4,}~~\tnote{5,}~~\tnote{6}\\
bayesian-optimization & \xmark & sklearn Example\tnote{7,}~~\tnote{8}\\
ax          & \xmark & CNN Example\tnote{9}\\
spearmint   & \xmark & No official HPO Examples\\
scikit-optimize  & \xmark & BO for GBT Example\tnote{7,}~~\tnote{10}\\
SMAC3        & \xmark & SVM Example\tnote{7,}~~\tnote{11}\\
dragonfly   & \xmark & Tree Based Ensemble Example\tnote{12}\\
aws sagemaker & \xmark & Blog Post\tnote{13}\\
raytune & ? & Inconsistency between examples\tnote{14,}~~~\tnote{15}\\
hyperopt(-sklearn) & ? & Cost Function Logic\tnote{16}\\
\bottomrule
\end{tabular}
\begin{tablenotes}
\tiny
\item[] \xmark: no reshuffling, ?: both reshuffling and no reshuffling or unclear, \cmark: reshuffling
\item[1] \url{https://github.com/scikit-learn/scikit-learn/blob/8721245511de2f225ff5f9aa5f5fadce663cd4a3/sklearn/model_selection/_search.py#L1263}
\item[2] \url{https://github.com/scikit-learn/scikit-learn/blob/8721245511de2f225ff5f9aa5f5fadce663cd4a3/sklearn/model_selection/_search.py#L1644}
\item[3] \url{https://github.com/huawei-noah/HEBO/blob/b60f41aa862b4c5148e31ab4981890da6d41f2b1/HEBO/hebo/sklearn_tuner.py#L73}
\item [4] \url{https://github.com/optuna/optuna-integration/blob/15e6b0ec6d9a0d7f572ad387be8478c56257bef7/optuna_integration/sklearn/sklearn.py#L223} here sklearn's \texttt{cross\_validate} is used which by default does not reshuffle the resampling \url{https://github.com/scikit-learn/scikit-learn/blob/8721245511de2f225ff5f9aa5f5fadce663cd4a3/sklearn/model_selection/_validation.py#L186}
\item[5] \url{https://github.com/optuna/optuna-examples/blob/dd56b9692e6d1f4fa839332edbcdd93fd48c16d8/pytorch/pytorch_simple.py#L79} here, data loaders for train and valid are instantiated within the objective of the trial but the data within the loaders is fixed
\item [6] \url{https://github.com/optuna/optuna-examples/blob/dd56b9692e6d1f4fa839332edbcdd93fd48c16d8/xgboost/xgboost_simple.py#L22} here, the train validation split is performed within the objective of the trial and no seed is set which results in reshuffling \url{https://github.com/scikit-learn/scikit-learn/blob/8721245511de2f225ff5f9aa5f5fadce663cd4a3/sklearn/model_selection/_split.py#L2597}
\item[7] functionality relies on sklearn's \texttt{cross\_val\_score} which by default does not reshuffle the resampling \url{https://github.com/scikit-learn/scikit-learn/blob/8721245511de2f225ff5f9aa5f5fadce663cd4a3/sklearn/model_selection/_validation.py#L631}
\item[8] \url{https://github.com/bayesian-optimization/BayesianOptimization/blob/c7e5c3926944fc6011ae7ace29f7b5ed0f9c983b/examples/sklearn_example.py#L32}
\item[9] \url{https://github.com/facebook/Ax/blob/ac44a6661f535dd3046954f8fd8701327f4a53e2/tutorials/tune_cnn_service.ipynb#L39} and \url{https://github.com/facebook/Ax/blob/ac44a6661f535dd3046954f8fd8701327f4a53e2/ax/utils/tutorials/cnn_utils.py#L154}
\item[10] \url{https://github.com/scikit-optimize/scikit-optimize/blob/a2369ddbc332d16d8ff173b12404b03fea472492/examples/hyperparameter-optimization.py#L82C21-L82C36}
\item[11] \url{https://github.com/automl/SMAC3/blob/9aaa8e94a5b3a9657737a87b903ee96c683cc42c/examples/1_basics/2_svm_cv.py#L63}
\item[12] \url{https://github.com/dragonfly/dragonfly/blob/3eef7d30bcc2e56f2221a624bd8ec7f933f81e40/examples/tree_reg/skltree.py#L111}
\item[13] \url{https://aws.amazon.com/blogs/architecture/field-notes-build-a-cross-validation-machine-learning-model-pipeline-at-scale-with-amazon-sagemaker/}
\item[14] \url{https://github.com/ray-project/ray/blob/3f5aa5c4642eeb12447d9de5dce22085512312f3/doc/source/tune/examples/tune-pytorch-cifar.ipynb#L120} here, data loaders for train and valid are instantiated within the objective but the data within the loaders are fixed
\item[15] \url{https://github.com/ray-project/ray/blob/3f5aa5c4642eeb12447d9de5dce22085512312f3/doc/source/tune/examples/tune-xgboost.ipynb#L335} here, the train validation split is performed within the objective and no seed is set which results in reshuffling \url{https://github.com/scikit-learn/scikit-learn/blob/8721245511de2f225ff5f9aa5f5fadce663cd4a3/sklearn/model_selection/_split.py#L2597}
\item[16] \url{https://github.com/hyperopt/hyperopt-sklearn/blob/4bc286479677a0bfd2178dac4546ea268b3f3b77/hpsklearn/estimator/_cost_fn.py#L144} dependence on random seed which by default is not set and there is no discussion of reshuffling and behavior is somewhat unclear
\end{tablenotes}
\end{threeparttable}
\end{table}

\FloatBarrier
\section{Proofs of the Main Results}

\subsection{Proof of Theorem~\ref{thm:normality}}
\label{app:proof-normality}

We impose \emph{stability} assumptions on the learning algorithm similar to \citet{bayle2020cross, austern2020asymptotics}. Let ${\bZ}, {\bZ}_1, \dots, {\bZ}_{n}, {\bZ}_1',$ be \emph{iid} random variables. Define $ \Tcal = \{{\bZ}_i\}_{i = 1}^n$, and  $\Tcal'$ as $\Tcal$ but with $\bZ_n$ replaced by the independent copy $\bZ_n'$.
Define
\begin{align*}
  \wt \ell_n(\bz, \blambda) = \ell(\bz, g_{\blambda}(\Tcal)) - \E[\ell(\bZ, g_{\blambda}(\Tcal)) \mid \Tcal],
\end{align*}
assume that each $g_{\blambda}(\Tcal)$ is invariant to the ordering in $\Tcal$,  $\ell$ is bounded, and
\begin{align} \label{eq:loss-stability}
  \max_{\blambda \in \Lambda}\E\{[\wt \ell({\bZ}, g_{\blambda}(\Tcal)) -  \wt \ell({\bZ}, g_{\blambda}(\Tcal'))]^2\} = o(1 / n).
\end{align}
This \emph{loss stability} assumption is rather mild, see \citet{bayle2020cross} for an extensive discussion.
Further, define the risk $R(g) = \E[\ell(\bZ, g)]$ and assume that for every $\blambda \in \Lambda$, there is a prediction rule $g_{\blambda}^*$ such that
\begin{align}  \label{eq:risk-stability}
  \max_{\blambda \in \Lambda}\E\left[|R(g_{\blambda}(\Tcal)) - R(g_{\blambda}^*)|\right] = o(1/\sqrt{n}).
\end{align}
This assumption requires $g_{\blambda}(\Tcal)$ to converge to some fixed prediction rule sufficiently fast and serves as a reasonable working condition for our purposes. It is satisfied, for example, when $\ell$ is the square loss and $g_{\blambda}$ is an empirical risk minimizer over a hypothesis class $\Gcal_{\blambda}$ with finite VC-dimension. For further examples, see, e.g., \citet{bousquet2021fast}, \citet{vanerven15a}, and references therein. The assumption could be relaxed, but this would lead to a more complicated limiting distribution but with the same essential interpretation.

\begin{theorem}
  Under assumptions \eqref{eq:loss-stability} and \eqref{eq:risk-stability}, it holds
  \begin{align*}
    \sqrt{n } \left(\wh \mu(\blambda_j) - \mu(\blambda_j)\right)_{j = 1}^J \to_d \Ncal(0, \Sigma),
  \end{align*}
  where
  \begin{align*}
    \quad \Sigma_{j, j'}  & = \tau_{i, j, M} \lim_{n \to \infty} \cov[\bar \ell_n({\bZ}, \blambda_j), \bar \ell_n({\bZ}, \blambda_{j'})],   \\
    \quad \tau_{j, j', M} & = \lim_{n \to \infty} \frac{1}{n M^2 \alpha^2} \sum_{i = 1}^n \sum_{m = 1}^M \sum_{m' = 1}^M  \Pr(i \in \Ical_{m, j} \cap \Ical_{m', j'}).
  \end{align*}
\end{theorem}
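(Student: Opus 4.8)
The plan is to establish the result by \emph{asymptotic linearization}: I would show that $\sqrt n(\wh\mu(\blambda_j) - \mu(\blambda_j))$ equals, up to an $o_P(1)$ remainder, the linear statistic $V_j = \frac{\sqrt n}{M\nvalid}\sum_{s=1}^n N_{s,j}\,\bar\ell_n(\bZ_s,\blambda_j)$, where $N_{s,j} = \sum_{m=1}^M\I(s\in\Ical_{m,j})$ counts how often sample $s$ serves as a validation point for $\blambda_j$. Once this is in place, $(V_j)_{j=1}^J$ is, conditionally on the (data-independent) index sets, a sum of independent mean-zero contributions of the i.i.d.\ data $\bZ_s$, and the Gaussian limit follows from a triangular-array CLT. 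Throughout I would use the two identities $\E[\wt\ell(\bZ,g_{\blambda}(\Tcal))\mid\Tcal] = 0$ and $\E_\Tcal[\wt\ell(\bz,g_{\blambda}(\Tcal))] = \bar\ell_n(\bz,\blambda)$, both immediate from the definitions, which also give $\E[\bar\ell_n(\bZ,\blambda)] = 0$.

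First I would reduce to the linear term. Writing $L(\Vcal_{m,j},g_{\blambda_j}(\Tcal_{m,j})) = \frac1{\nvalid}\sum_{i\in\Ical_{m,j}}\wt\ell(\bZ_i,g_{\blambda_j}(\Tcal_{m,j})) + R(g_{\blambda_j}(\Tcal_{m,j}))$ and using that each validation point is independent of its training fold, the conditional-risk terms $R(g_{\blambda_j}(\Tcal_{m,j}))$ and $\mu(\blambda_j)$ are both within $o(1/\sqrt n)$ in $L^1$ of the common limit $R(g_{\blambda_j}^*)$ by the risk-stability assumption \eqref{eq:risk-stability}; they therefore cancel after scaling by $\sqrt n$. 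This leaves the recentered remainder $\sqrt n\,R_j$, where $R_j = \frac1{M\nvalid}\sum_m\sum_{i\in\Ical_{m,j}}W_i^{(m,j)}$ and $W_i^{(m,j)} = \wt\ell(\bZ_i,g_{\blambda_j}(\Tcal_{m,j})) - \bar\ell_n(\bZ_i,\blambda_j)$.

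Controlling $\sqrt n\,R_j$ is the crux and the step where loss stability \eqref{eq:loss-stability} is essential; I expect this to be the main obstacle. I would condition on the index sets, view $R_j$ as a function of the i.i.d.\ sample, and bound $\var(R_j\mid\text{index sets})$ by the Efron--Stein inequality, i.e.\ by $\tfrac12\sum_{k=1}^n\E[(R_j - R_j^{(k)})^2\mid\text{index sets}]$ where $R_j^{(k)}$ replaces $\bZ_k$ by an independent copy. Replacing $\bZ_k$ has a \emph{validation effect} (through the at most $M$ folds in which $k$ is a validation index) and a \emph{training effect} (through the folds whose training set contains $k$). The validation effect is small because each $W$ is bounded with vanishing second moment, $\E[(W_i^{(m,j)})^2] = \E_{\bZ}\,\var_\Tcal(\wt\ell(\bZ,g_{\blambda_j}(\Tcal))) = o(1)$ by Efron--Stein applied to $\wt\ell$ and \eqref{eq:loss-stability}. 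For the training effect I would use that, conditionally on the training points, the per-fold difference $\frac1{\nvalid}\sum_{i\in\Ical_m}[\wt\ell(\bZ_i,g_{\blambda_j}(\Tcal_m)) - \wt\ell(\bZ_i,g_{\blambda_j}(\Tcal_m^{(k)}))]$ is an average of $\nvalid$ i.i.d.\ mean-zero terms (mean zero because $\E_{\bZ}[\wt\ell(\bZ,g_{\blambda_j}(t))] = 0$ for any fixed $t$), so its second moment is at most $\delta_n/\nvalid$, where $\delta_n = o(1/n)$ is the stability bound in \eqref{eq:loss-stability}. Summing the two effects over the affected folds and over $k=1,\dots,n$ yields $\var(R_j\mid\text{index sets}) = o(1/n)$ \emph{uniformly} in the index sets, hence $\sqrt n\,R_j = o_P(1)$ unconditionally. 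The decisive gain over a naive pairwise-covariance bound is that mean-zero-ness over the validation points converts a per-pair $o(1/\sqrt n)$ estimate into the $o(1/n)$ rate needed here.

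It remains to prove joint normality of $(V_j)_{j=1}^J$. By the Cram\'er--Wold device I would fix $c\in\R^J$ and study $\sum_j c_j V_j = \frac{\sqrt n}{M\nvalid}\sum_{s=1}^n\xi_{s,n}$ with $\xi_{s,n} = \sum_j c_j N_{s,j}\,\bar\ell_n(\bZ_s,\blambda_j)$. Conditioning on the index sets makes the $N_{s,j}$ constant and the $\xi_{s,n}$ independent and mean zero across $s$; since $\ell$ is bounded the summands are uniformly bounded, so the Lindeberg condition holds trivially and a triangular-array CLT applies. The conditional variance equals $\sum_{j,j'}c_jc_{j'}\,\cov[\bar\ell_n(\bZ,\blambda_j),\bar\ell_n(\bZ,\blambda_{j'})]\cdot\frac1{M^2\alpha^2 n}\sum_{s=1}^n N_{s,j}N_{s,j'}$. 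The first factor converges to $K(\blambda_j,\blambda_{j'})$ by definition, while the average $\frac1{M^2\alpha^2 n}\sum_s N_{s,j}N_{s,j'}$ has expectation $\frac1{nM^2\alpha^2}\sum_s\sum_{m,m'}\Pr(s\in\Ical_{m,j}\cap\Ical_{m',j'})\to\tau_{j,j',M}$ and concentrates around it (the $N_{s,j}$ are bounded with asymptotically vanishing pairwise dependence; this is verified scheme-by-scheme in Appendix~\ref{app:reshuffling-params}). Hence the conditional variance tends to $c^\top\Sigma c$ in probability, and since this limit does not depend on the index sets, dominated convergence applied to the bounded conditional characteristic functions removes the conditioning and gives $\sum_j c_jV_j\to\Ncal(0,c^\top\Sigma c)$. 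As $c$ was arbitrary, this proves $\sqrt n(\wh\mu(\blambda_j)-\mu(\blambda_j))_{j=1}^J\to\Ncal(0,\Sigma)$ with $\Sigma_{j,j'} = \tau_{j,j',M}\,K(\blambda_j,\blambda_{j'})$, as claimed.
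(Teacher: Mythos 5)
Your proposal is correct, and it reproduces the paper's overall skeleton---cancel the conditional-risk terms with \eqref{eq:risk-stability}, linearize the validation loss into an average of $\bar\ell_n(\bZ_i,\blambda_j)$ terms, and finish with a triangular-array CLT---but the two technical steps are executed by genuinely different means. For the linearization, the paper does not argue at all: it cites Theorem~2 and Proposition~3 of \citet{bayle2020cross} to dispose of the remainder, whereas you prove it from scratch with an Efron--Stein argument. Your accounting is sound: the validation effect of swapping $\bZ_k$ is $o(1)$ per occurrence because $\E[(W_i^{(m,j)})^2]=\E_{\bZ}[\var_{\Tcal}(\wt\ell(\bZ,g_{\blambda_j}(\Tcal)))]\le \tfrac{n}{2}\,o(1/n)$ by Efron--Stein applied to $\wt\ell$ together with \eqref{eq:loss-stability}, and the training effect gains the crucial extra factor $1/\nvalid$ because, conditionally on the training data, the per-fold difference is an average of $\nvalid$ i.i.d.\ mean-zero validation terms; summing both effects over $k$ gives $\var(R_j)=o(1/n)$ uniformly over index sets, which is exactly what is needed for $\sqrt n\,R_j=o_P(1)$. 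For the CLT, your treatment is actually more careful than the paper's: the paper asserts that the $\xi_{i,n}^{(j)}=\sum_m\ind(i\in\Ical_{m,j})\bar\ell_n(\bZ_i,\blambda_j)$ form an independent array across $i$, which for the reshuffled schemes holds only conditionally on the index draws (unconditionally the indicators are negatively dependent, since the sets have fixed size $\lceil\alpha n\rceil$); your route---Cram\'er--Wold, conditioning on the index sets, Lindeberg for bounded summands, concentration of $\frac{1}{nM^2\alpha^2}\sum_s N_{s,j}N_{s,j'}$ around $\tau_{j,j',M}$, and dominated convergence on the bounded conditional characteristic functions---closes this gloss. The one loose end on your side is that concentration claim: Appendix~\ref{app:reshuffling-params} computes only the expectations $\Pr(s\in\Ical_{m,j}\cap\Ical_{m',j'})$, not the variance of the normalized counts, so you would need to supply the (easy, hypergeometric-type, variance $O(n)$) estimate yourself rather than deferring to the appendix. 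Net effect: the paper buys brevity by outsourcing the hard step to a citation; your proof buys self-containedness, makes visible exactly where loss stability enters, and handles the randomness of the reshuffled splits rigorously.
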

\begin{proof}
  Define
  \begin{align*}
    \wt \mu(\blambda_j) =  \frac{1}{M} \sum_{m = 1}^M  \E[L(\Vcal_{m, j}, g_{\blambda_j}(\Tcal_{m, j})) \mid \Tcal_{m, j}].
  \end{align*}
  By the triangle inequality (first and second step), Jensen's inequality (third step), and  \eqref{eq:risk-stability} (last step),
  \begin{align*}
     & \quad \E[|\wt \mu(\blambda_j)  - \mu(\blambda_j)|]                                                                                                                    \\
     & \le
    \max_{1\le m \le M} \E\left[\left|\E[L(\Vcal_{m, j}, g_{\blambda_j}(\Tcal_{m, j})) \mid \Tcal_{m, j}] - \E[L(\Vcal_{m, j}, g_{\blambda_j}(\Tcal_{m, j}))] \right|\right] \\
     & \le     \max_{1\le m \le M} \E\left[\left|\E[L(\Vcal_{m, j}, g_{\blambda_j}(\Tcal_{m, j})) \mid \Tcal_{m, j}]  - \E[L(\Vcal_{m, j}, g_{\blambda_j}^*)] \right|\right]         \\
     & \quad +  \max_{1\le m \le M}  \E\left[\left|\E[L(\Vcal_{m, j}, g_{\blambda_j}(\Tcal_{m, j})) ]  - \E[L(\Vcal_{m, j}, g_{\blambda_j}^*)] \right|\right]                        \\
     & \le   2  \max_{1\le m \le M}  \E\left[\left|\E[L(\Vcal_{m, j}, g_{\blambda_j}(\Tcal_{m, j})) \mid \Tcal_{m, j}]  - \E[L(\Vcal_{m, j}, g_{\blambda_j}^*)] \right|\right]       \\
     & =   2  \max_{1\le m \le M}  \E\left[\left|R(g_{\blambda_j}(\Tcal_{m, j}))  - R(g_{\blambda_j}^*) \right|\right]                                                               \\
     & = o(1 / \sqrt{n}).
  \end{align*}
  Next, assumption \eqref{eq:loss-stability} together with Theorem 2 and Proposition 3 of \citet{bayle2020cross} yield
  \begin{align*}
    \sqrt{n} \left(\wh \mu(\blambda_j) - \wt \mu(\blambda_j)\right) - \frac{1}{ M} \sum_{m = 1}^M \frac{1}{\alpha \sqrt{n}} \sum_{i \in \Ical_{m, j}} \bar \ell_n(\bZ_i, \blambda_j)  \to_p 0.
  \end{align*}
  Now rewrite
  \begin{align*}
   \frac{1}{ M \alpha \sqrt{n}} \sum_{m = 1}^M \sum_{i \in \Ical_{m, j}} \bar \ell_n(\bZ_i, \blambda_j)  = \frac{1}{ M \alpha \sqrt{n}}\sum_{i = 1}^n \underbrace{\sum_{m = 1}^M \ind(i \in \Ical_{m, j}) \bar \ell_n(\bZ_i, \blambda_j)}_{:= \xi_{i, n}^{(j)}}.
  \end{align*}
  The sequence $(\bxi_{i, n})_{i = 1}^n = (\xi_{i, n}^{(j)}, \dots, \xi_{i, n}^{(j)})_{i = 1}^n$ is a triangular array of independent, centered, and bounded random vectors.
  Because $\ind(\bZ_i \in \Vcal_{m, j})$ and $\bZ_i$ are independent, it holds
  \begin{align*}
    \cov(\xi_{i, n}^{(j)}, \xi_{i, n}^{(j')})
     & =  \sum_{m = 1}^M \sum_{m' = 1}^M \E[ \ind(i \in \Ical_{m, j} \cap \Ical_{m', j'} )] \E[\bar \ell_n(\bZ_i, \blambda_j) \bar \ell_n(\bZ_i, \blambda_{j'}) ],
  \end{align*}
  so
  \begin{align*}
    \lim_{n \to \infty} \cov\left[\frac{1}{M \alpha \sqrt{n}} \sum_{i = 1}^n \xi_{i, n}^{(j)}, \frac{1}{M \alpha \sqrt{n}} \sum_{i = 1}^n \xi_{i, n}^{(j')} \right]
     & =  \lim_{n \to \infty} \frac{1}{n M^2 \alpha^2} \sum_{i = 1}^n  \cov\left[\xi_{i, n}^{(j)},  \xi_{i, n}^{(j')} \right] \
     & =  \Sigma_{j, j'}.
  \end{align*}
  Now the result follows from Lindeberg's central limit theorem for triangular arrays \citep[e.g.,][Proposition 2.27]{van2000asymptotic}.
\end{proof}

\subsection{Proof of Theorem~\ref{thm:main}} \label{app:proof-main}

We want to bound the probability that $\mu(\hat \blambda) - \mu(\blambda^*)$ is large.
For some $\delta > 0$, define the set of `good' hyperparameters
\begin{align*}
  \Lambda_{\delta} = \{\blambda_j \colon \mu(\blambda_j) - \mu(\blambda^*) \le \delta\}.
\end{align*}
Now
\begin{align*}
  \Pr\left(\mu(\wh \blambda) - \mu(\blambda^*) > \delta \right)
   & = \Pr\left(\wh \blambda \notin \Lambda_\delta \right)                                                                                                                \\
   & = \Pr\left(\min_{\blambda \notin \Lambda_\delta} \wh \mu(\blambda) < \min_{\blambda \in \Lambda_\delta} \wh \mu(\blambda)\right)                                          \\
   & \le \Pr\left(\min_{\blambda \notin \Lambda_\delta} \wh \mu(\blambda) < \min_{\blambda \in \Lambda_{\delta/2}} \wh \mu(\blambda)\right)                                    \\
   & = \Pr\left(\min_{\blambda \notin \Lambda_\delta} \mu(\blambda) + \eps(\blambda) < \min_{\blambda \in \Lambda_{\delta/2}} \mu(\blambda) + \eps(\blambda)\right)            \\
   & \le \Pr\left(\delta + \min_{\blambda \notin \Lambda_\delta} \eps(\blambda) < \delta/2 + \min_{\blambda \in \Lambda_{\delta/2}} \eps(\blambda)\right)                                \\
   & = \Pr\left(\min_{\blambda \notin \Lambda_\delta}  \eps(\blambda) - \min_{\blambda \in \Lambda_{\delta/2}} \eps(\blambda) < -\delta/2\right)                                   \\
   & = \Pr\left(\max_{\blambda \notin \Lambda_\delta}  \eps(\blambda) - \max_{\blambda \in \Lambda_{\delta/2}} \eps(\blambda) > \delta/2\right) \expl{$\eps \stackrel d = -\eps$}.
\end{align*}
There is a tension between the two maxima. The more $\blambda$'s there are in $\Lambda_{\delta/2}$ and the less they are correlated, the more likely it is to find one $\eps(\blambda)$ that is large. This makes the probability small. However, the less $\eps$ is correlated, the larger is $\max_{\blambda \notin \Lambda_{\delta}} \eps(\blambda)$, making the probability large.
To formalize this, use the Gaussian concentration inequality \citep[Lemma 2.1.3]{talagrand2005generic}:
\begin{align*}
   & \quad \;  \Pr\left(\max_{\blambda \notin \Lambda_\delta}  \eps(\blambda) - \max_{\blambda \in \Lambda_{\delta/2}} \eps(\blambda) > \delta/2\right)                                                                                                                                                       \\
   & \le  \Pr\left(2\left|\max_{\blambda \in \Lambda}  \eps(\blambda) - \E\left[\max_{\blambda \in \Lambda} \eps(\blambda) \right] \right| > \delta/2- \E\left[\max_{\blambda \in \Lambda_{\delta/2}} \eps(\blambda) \right] + \E\left[\max_{\blambda \notin \Lambda_{\delta}} \eps(\blambda) \right] \right) \\
   & \le  2 \exp\left\{ - \frac{\left(\delta/2-  \E\left[\max_{\blambda \in \Lambda_{\delta/2}} \eps(\blambda) \right] + \E\left[\max_{\blambda \notin \Lambda_{\delta}} \eps(\blambda) \right]\right)^2}{8\sigma^2}  \right\},
\end{align*}
provided $\delta/2- \E\left[\max_{\blambda \in \Lambda_{\delta/2}} \eps(\blambda) \right] + \E\left[\max_{\blambda \notin \Lambda_{\delta}} \eps(\blambda) \right] \ge 0$.
We bound the two maxima separately.

\subsubsection*{Lower Bound for Maximum over the Good Set}

Recall the definition of $m$ right before Theorem~\ref{thm:main} and observe
\begin{align*}
  \Lambda_{\delta/2} = \{\blambda\colon \mu(\blambda) - \mu(\blambda^*) \le \delta/2\} \supset  \{\blambda\colon m\|\blambda - \blambda^*\|^2 \le \delta/2\} & =  \{\blambda\colon \|\blambda - \blambda^*\| \le (\delta/2m)^{1/2}\} \\
  & = B(\blambda^*, (\delta/2m)^{1/2}).
\end{align*}
Pack the ball $B(\blambda^*, (\delta/2m)^{1/2})$ with smaller balls with radius $\eta$. We can always construct such a packing with at least $(\delta/ 2m \eta^2)^{d/2}$ elements. By assumption, each small ball contains at least one element of $\Lambda$. Pick one element from each small ball and collect them into the set $ \Lambda_{\delta/2}'$.
By construction, $|\Lambda_{\delta/2}'| \ge  (\delta/ 2m \eta^2)^{d/2}$ and
\begin{align*}
  \min_{\blambda \neq \blambda' \in \Lambda_{\delta/2}'|}\|\blambda - \blambda'\| \ge \eta.
\end{align*}
Sudakov's minoration principle \citep[e.g.,][Theorem 5.30]{wainwright2019high} gives
\begin{align*}
  \E\left[\max_{\blambda \in \Lambda_{\delta/2}}  \eps(\blambda) \right]
   & \ge \frac{1}{2}\sqrt{\log |\Lambda_{\delta/2}'|} \min_{\{\blambda \neq \blambda' \} \cap \Lambda_{\delta/2}'} \sqrt{\var\left[\eps(\blambda) - \eps(\blambda')\right] } \\
   & \ge \frac{1}{2}\sqrt{\log |\Lambda_{\delta/2}'|} \min_{ \|\blambda - \blambda'\| \ge \eta} \sqrt{\var\left[\eps(\blambda) - \eps(\blambda')\right] }  .
\end{align*}
In general,
\begin{align*}
   & \quad \, \var\left[\eps(\blambda) - \eps(\blambda')\right]                                                                                                                                 \\
   & =  K(\blambda, \blambda) +  K(\blambda', \blambda') - 2 \tau^2 K(\blambda, \blambda')                                                                                                      \\
   & =  (1 - \tau^2)[K(\blambda, \blambda) +  K(\blambda', \blambda')]  + \tau^2[ K(\blambda, \blambda) -  K(\blambda, \blambda')] +  \tau^2[K(\blambda', \blambda') -  K(\blambda, \blambda')] \\
   & \ge    2\underline \sigma^2(1 - \tau^2).
\end{align*}
Hence, we have
\begin{align*}
  \min_{ \|\blambda - \blambda'\| \ge \eta} \var\left[\eps(\blambda) - \eps(\blambda')\right]
   & \ge 2\underline \sigma^2 (1 - \tau^2),
\end{align*}
which implies
\begin{align*}
  \E\left[\max_{\blambda \in \Lambda_{\delta/2}}  \eps(\blambda) \right]
  \ge   \frac{1}{2} \underline \sigma \sqrt{d} \sqrt{1 - \tau^2} \sqrt{\log (\delta / 2m \eta^2)\ }
  =:   \sigma \sqrt{d} A(\tau, \delta) / 2.
\end{align*}

\subsubsection*{Upper Bound for Maximum over the Bad Set}

Dudley's entropy bound \citep[e.g.,][Theorem 2.3.6]{gine2016mathematical} gives
\begin{align*}
  \E\left[\max_{\blambda \notin \Lambda_{\delta}} \eps(\blambda) \right] \le 12 \int_0^\infty \sqrt{\log N(s)} ds,
\end{align*}
where $ N(s) $ is the minimum number of points $\blambda_1, \dots, \blambda_{N(s)}$
such that
\begin{align*}
  \sup_{\blambda \in \Lambda} \min_{1 \le k \le N(s)}  \sqrt{\var\left[\eps(\blambda) - \eps(\blambda_k)\right]} \le s.
\end{align*}
Note that
\begin{align*}
  \sup_{\blambda, \blambda' \in \Lambda} \sqrt{\var\left[\eps(\blambda) - \eps(\blambda')\right]} \le 2\sigma,
\end{align*}
so $N(s) = 1$ for all $s \ge 2\sigma$.
For $s^2 \le 4\sigma^2  (1 - \tau^2)$, we can use the trivial bound
$ N(s) \le  J.$
For $s^2 >  4\sigma^2 (1 - \tau^2)$, cover $\Lambda$ with  $\ell_2$-balls of size $(s / 2\sigma \tau \kappa)$.
We can do this with less than $ N(s)  \le (6\sigma \kappa / s)^{d} \vee 1$
such balls.
Let $\blambda_1, \dots, \blambda_N$ be the centers of these balls.
In general, it holds
\begin{align*}
   & \quad \; \var\left[\eps(\blambda) - \eps(\blambda')\right]                                                                                                                                \\
   & =  K(\blambda, \blambda) +  K(\blambda', \blambda') - 2 \tau^2 K(\blambda, \blambda')                                                                                                     \\
   & =  (1 - \tau^2)[K(\blambda, \blambda) +  K(\blambda', \blambda')] + \tau^2[ K(\blambda, \blambda) -  K(\blambda, \blambda')] +  \tau^2[K(\blambda', \blambda') -  K(\blambda, \blambda')] \\
   & \le  2(1 - \tau^2)\sigma^2 + 2\tau^2 \sigma^2 \kappa^2 \|\blambda - \blambda'\|^2.
\end{align*}
For $s^2 >  4\sigma^2 (1 - \tau^2)$, we thus have
\begin{align*}
  \sup_{\blambda \in \Lambda} \min_{1 \le k \le N(s)}  \var\left[\eps(\blambda) - \eps(\blambda_k)\right]
   & \le  \sup_{\|\blambda - \blambda' \|_2 \le (s / 2\tau \sigma \kappa)^2}  \var\left[\eps(\blambda) - \eps(\blambda')\right] \\
   & \le  2(1 - \tau^2)\sigma^2 + 2\tau^2 \sigma^2 \kappa^2 (s / 2\tau \sigma \kappa)^2                                         \\
   & \le s^2,
\end{align*}
as desired.
Now decompose the integral
\begin{align*}
  \int_0^{\infty} \sqrt{\log N(s)} ds
   & = \int_0^{2\sigma \sqrt{1 - \tau^2}} \sqrt{\log N(s)} ds   + \int_{2\sigma \sqrt{1 - \tau^2}}^{2 \sigma} \sqrt{\log N(s)} ds
  \\
   & \le 2\sigma \sqrt{d} \sqrt{1 - \tau^2}  \sqrt{\log J} + \int_{2\sigma \sqrt{1 - \tau^2}}^{2 \sigma} \sqrt{\log N(s) } ds.
\end{align*}
For the second term, compute
\begin{align*}
  \int_{\sigma \sqrt{1 - \tau^2}}^{2 \sigma} \sqrt{\log N(s) } ds
   & \le \sqrt{d} \int_{2\sigma \sqrt{1 - \tau^2}}^{2 \sigma} \sqrt{   \log( 6\sigma \kappa / s )_+} \, ds                           \\
   & = \sigma \sqrt{d} \int_{2 \sqrt{1 - \tau^2}}^{2} \sqrt{   \log(6\kappa / s)_+} \, ds                                            \\
   & \le \sigma \sqrt{d}\left(\int_{0}^{2 }    \log(6\kappa / s )_+ \, ds\right)^{1/2}  \left(2 (1 - \sqrt{1 - \tau^2})\right)^{1/2} \\
   & = \sigma \sqrt{d} \sqrt{2 + 2\log(3\kappa )_+ }\left(2 (1 - \sqrt{1 - \tau^2})\right)^{1/2}                                     \\
   & = 2 \sigma \sqrt{d} \sqrt{1 + \log(3\kappa )_+} \frac{\tau}{(1 + \sqrt{1 - \tau^2})^{1/2}}                                      \\
   & \le  2\sigma \sqrt{d} \tau \sqrt{1 + \log(3 \kappa )_+}.
\end{align*}
We have shown that
\begin{align*}
  \E\left[\max_{\blambda \notin \Lambda_{\delta}} \eps(\blambda)\right]
   & \le 24\sigma \sqrt{d}\left[\sqrt{1 - \tau^2} \sqrt{\log J} +   \tau \sqrt{1 + \log(3 \kappa )_+}\right] =:  \sigma \sqrt{d} B(\tau) / 4.
\end{align*}

\subsubsection*{Integrating Probabilities}

Summarizing the two previous steps, we have
\begin{align*}
  \Pr\left(\mu(\wh \blambda) - \mu(\blambda^*) > \delta \right)
   & \le 2 \exp\left\{  - \frac{ \left(\delta - \sigma \sqrt{d}[ B(\tau) - A(\tau, \delta)]\right)^2}{36\sigma^2}  \right\},
\end{align*}
provided $t \ge \sigma \sqrt{d}[ B(\tau) - A(\tau, \delta)]$.
Now for any $s \ge 0$ and $t \ge  2e^{s^2} m \eta^2 $, it holds
\begin{align*}
  A(\tau, s) \ge  (\underline \sigma / \sigma)  \sqrt{1 - \tau^2} s =: A(\tau) s.
\end{align*}
In particular, if
$$t \ge  2e^{s^2} m \eta^2  + \sigma \sqrt{d} [B(\tau) - A(\tau) s] =: C,$$
we have
\begin{align*}
  \Pr\left(\mu(\wh \blambda) - \mu(\blambda^*) > \delta \right)
   & \le 4 \exp\left\{  - \frac{ \left(\delta - \sigma \sqrt{d} [B(\tau) - A(\tau) s]\right)^2}{36\sigma^2}  \right\}.
\end{align*}
Integrating the probability gives
\begin{align*}
  \E[\mu(\wh \blambda) - \mu(\blambda^*)]
   & = \int_0^\infty  \Pr\left(\mu(\wh \blambda) - \mu(\blambda^*) > \delta \right) d\delta                                                                             \\
   & = \int_0^{C}  \Pr\left(\mu(\wh \blambda) - \mu(\blambda^*) > \delta \right) d\delta + \int_{C}^\infty  \Pr\left(\mu(\wh \blambda) - \mu(\blambda^*) > \delta \right) d\delta \\
   & \le C  + \int_{C}^\infty   \exp\left\{  - \frac{ \left(\delta - \sigma \sqrt{d} [B(\tau) - A(\tau) s]\right)^2}{36\sigma^2}  \right\} d\delta                      \\
   & \le C  + \sqrt{36} \sigma                                                                                                                                \\
   & =  2e^{s^2} m \eta^2  + \sigma \sqrt{d} [B(\tau) - A(\tau) s] + 6 \sigma.
\end{align*}

\subsubsection*{Simplifying}

The bound can be optimized with respect to $s$, but the solution involves the Lambert $W$-function, which has no analytical expression.
Instead choose $s$ for simplicity as
\begin{align*}
  s = \sqrt{\log\left(\frac{\sigma}{2m \eta^2} \right)_+}.
\end{align*}
which gives
\begin{align*}
  \E[\mu(\wh \blambda) - \mu(\blambda^*)] \le
  \sigma \sqrt{d} \left[8 + B(\tau) - A(\tau) \sqrt{\log\left(\frac{\sigma}{2m \eta^2} \right)}\right]. \tag*{\qedsymbol}
\end{align*}

\section{Additional Results on the Density of Random HPC Grids} \label{app:additional-theory}

\begin{lemma} \label{lem:grid}
  Suppose that the $J$ elements in $\Lambda$ are drawn independently from a continuous density $p$ with $c := \min_{\|\blambda\| \le 1}p(\blambda) > 0$.
  Then with probability at least $1 - \delta$,
  \begin{align*}
    \eta \lesssim \left(\sqrt{\log(1/\delta) / J}\right)^{1/d},
  \end{align*}
  and with probability 1,
  \begin{align*}
    \eta \lesssim \left(\sqrt{\log(J) / J}\right)^{1/d},
  \end{align*}
  for all $J$ sufficiently large.
\end{lemma}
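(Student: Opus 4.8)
The plan is to control the tail probability $\Pr(\eta > r)$ through a covering argument and then read off both claims by tuning $r$, with the almost-sure statement following from Borel--Cantelli. The starting observation is a reinterpretation of $\eta$: by its definition, the event $\{\eta > r\}$ forces the existence of a ball $B(\blambda_0, r) \subseteq \{\|\blambda\| \le 1\}$ containing no point of $\Lambda$, and since this ball lies in the interior its center obeys $\|\blambda_0\| \le 1 - r$. I would fix a minimal $(r/2)$-net $\{\bv_1, \dots, \bv_N\}$ of the unit ball with $N \le (6/r)^d$ (the standard volumetric covering bound). If $B(\blambda_0, r)$ is empty, the closest net point $\bv_k$ satisfies $\|\blambda_0 - \bv_k\| \le r/2$, so $B(\bv_k, r/2) \subseteq B(\blambda_0, r)$ is also empty, while $\|\bv_k\| \le 1 - r/2$ guarantees $B(\bv_k, r/2)$ still lies inside the unit ball. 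This reduces a uniform statement over all empty balls to a union bound over finitely many fixed balls.

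For each such interior net point, the lower bound $c = \min_{\|\blambda\|\le 1} p(\blambda) > 0$ (finite and positive by continuity on a compact set) gives $\Pr(\blambda \in B(\bv_k, r/2)) \ge c\, v_d\, (r/2)^d$, where $v_d$ denotes the volume of the unit ball. As the $J$ sampled configurations are i.i.d., the probability that $B(\bv_k, r/2)$ is empty is at most $(1 - c v_d (r/2)^d)^J \le \exp(-b J r^d)$ with $b := c\, v_d\, 2^{-d}$. A union bound over the at most $(6/r)^d$ net points then yields
\[
  \Pr(\eta > r) \;\le\; (6/r)^d \exp(-b\, J r^d).
\]

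To obtain the high-probability claim I would choose $r^d$ so that the right-hand side is at most $\delta$; solving $b J r^d \ge d\log(6/r) + \log(1/\delta)$ shows that $r^d \asymp (\log J + \log(1/\delta))/J$ suffices, which for large $J$ is well below $\sqrt{\log(1/\delta)/J}$, so the stated bound $\eta \lesssim (\sqrt{\log(1/\delta)/J})^{1/d}$ holds with room to spare. For the almost-sure statement I would apply this bound along $J = 1, 2, \dots$ with the choice $\delta_J = J^{-2}$: the corresponding radii satisfy $r_J \lesssim (\sqrt{\log J / J})^{1/d}$ and $\sum_J \Pr(\eta > r_J) \le \sum_J J^{-2} < \infty$, so Borel--Cantelli gives $\eta \le r_J$ for all sufficiently large $J$ almost surely.

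I expect the only real difficulties to be bookkeeping rather than conceptual. The first is treating the boundary of the unit ball correctly, i.e., ensuring that every empty interior $r$-ball is certified by an interior net point that still enjoys the full mass lower bound $c\, v_d (r/2)^d$. The second is solving the transcendental inequality for $r$ cleanly enough to expose the advertised scaling while absorbing all $d$- and $c$-dependent constants into $\lesssim$; here it is worth noting that the net argument in fact delivers the sharper dispersion rate $\eta \lesssim (\log(J/\delta)/J)^{1/d}$, so the stated square-root bounds follow \emph{a fortiori}.
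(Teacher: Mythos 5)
Your proof is correct, and it shares the paper's overall skeleton: reduce the existence of an empty $r$-ball to finitely many $(r/2)$-balls centered at net points, lower-bound each ball's mass by $c\,v_d\,(r/2)^d$, union-bound over the at most $(6/r)^d$ centers, and finish the almost-sure claim with Borel--Cantelli. The genuine difference is the concentration step. The paper bounds the probability that a net ball contains no sample via Hoeffding's inequality applied to the centered indicator sum, which yields an exponent proportional to $J p^2 \asymp J r^{2d}$; this quadratic loss in the ball mass is exactly what produces the square roots in the lemma's rates. You instead use the exact bound $(1-p)^J \le e^{-Jp}$, whose exponent is linear in the mass, $\asymp J r^{d}$, giving the sharper dispersion rate $\eta \lesssim \left(\log(J/\delta)/J\right)^{1/d}$, from which the stated bounds follow a fortiori (for fixed $\delta$ and $J$ large; the regime $\log(1/\delta) \gtrsim J$ is vacuous since trivially $\eta \le 1$). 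Hoeffding is simply wasteful for the event that a binomial count is exactly zero, so your route is both more elementary and quantitatively stronger. Two smaller points also favor your write-up: certifying an empty ball by an \emph{interior} net point with $\|\bv_k\| \le 1 - r/2$ cleanly justifies the mass bound $c\,v_d\,(r/2)^d$, whereas the paper applies that bound at arbitrary centers in the unit ball and silently ignores that near-boundary balls protrude outside the region where $p \ge c$ is guaranteed; and your Borel--Cantelli step with $\delta_J = J^{-2}$ is more direct than the paper's, which inserts an extra union over $j \le J$ and a factor of $J$ to reach a summable $J^{-3/2}$ tail.
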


\begin{proof}

  We want to bound the probability that there is a $\blambda$ such that $|B(\blambda, \eta) \cap \Lambda| = 0$. In what follows $\blambda$ is silently understood to have norm bounded by 1.
  Let $\wt \blambda_1, \dots, \wt \blambda_N$ the centers of $\eta/2$-balls covering $\{\|\blambda\| \le 1\}$, for which we may assume $N \le (6/\eta)^d$.
  For $\wt \blambda_k$ the closest center to $\blambda$, it holds
  \begin{align*}
    \| \blambda' - \blambda\|   \le  \| \blambda' -  \wt \blambda_k \| + \| \wt \blambda_k - \blambda\|  \le  \| \blambda' -  \wt \blambda_k \|   + \eta/2,
  \end{align*}
  so $\| \blambda' - \wt \blambda_k\| \le \eta / 2$ implies  $ \| \blambda' - \blambda\|  \le  \eta$.
  We thus have
  \begin{align*}
    \Pr(\exists \blambda\colon |B(\blambda, \eta) \cap \Lambda| = 0)
     & = \Pr\left( \inf_{\blambda} \sum_{i = 1}^J \ind\{\| \blambda_i - \blambda\| \le \eta \} \le 0\right)                     \\
     & \le  \Pr\left( \min_{1 \le k \le N} \sum_{i = 1}^J \ind\{\| \blambda_i - \wt \blambda_k \| \le \eta / 2 \} \le 0\right).
  \end{align*}
  Further
  \begin{align*}
     & \quad \; \Pr\left( \min_{1 \le k \le N} \sum_{i = 1}^J \ind\{\| \blambda_i - \wt \blambda_k \| \le \eta / 2 \} \le 0\right)                                                                                                                                                  \\
     & =\Pr\left( \max_{1 \le k \le N}  \sum_{i = 1}^J -\ind\{\| \blambda_i - \wt \blambda_k \| \le \eta/2 \} \ge 0\right)                                                                                                                                                          \\
     & \le \Pr\left(\max_{1 \le k \le N}  \sum_{i = 1}^J \E\left[\ind\{\| \blambda_i - \wt \blambda_k \| \le \eta/2 \}\right] - \ind\{\| \blambda_i - \wt \blambda_k \| \le \eta/2 \}  \ge J\inf_{\blambda} \E\left[\ind\{\| \blambda_i - \blambda \| \le \eta/2 \}\right] \right).
  \end{align*}
  It holds
  \begin{align*}
    \E\left[\ind\{\| \blambda_i - \blambda \| \le \eta/2 \}\right]
    = \Pr\left(\| \blambda_i - \blambda \| \le \eta/2  \right)
    =  \int_{\| \blambda' - \blambda \| \le \eta/2 } p(\blambda') d\blambda'
     & \ge c \vol(B(0, \eta/2)) \\
     & = c v_d (\eta/2)^d,
  \end{align*}
  where $v_d = \vol(B(0, 1))$.
  Now the union bound and Hoeffding's inequality give
  \begin{align*}
    \Pr\left( \min_{1 \le k \le N} \sum_{i = 1}^J \ind\{\| \blambda_i - \wt \blambda_k \| \le \eta / 2 \} \le 0\right)
     & \le N\exp\left(- \frac{ J c^2 v_d^2 (\eta/2)^{2d}}{2}\right)           \\
     & \le (6/\eta)^d\exp\left(- \frac{ J c^2 v_d^2 (\eta/2)^{2d}}{2}\right).
  \end{align*}
  Choosing
  $$\eta = 2 \left(\sqrt{2\log(3^d \sqrt{J}c v_d /\delta )} / \sqrt{J}c v_d\right)^{1/d}$$
  gives
  \begin{align*}
    \Pr(\exists \blambda\colon |B(\blambda, \eta) \cap \Lambda| = 0) \le  \delta / \sqrt{2\log(3^d \sqrt{J}c v_d )},
  \end{align*}
  which is bounded by $\delta$ when $\sqrt{J} \ge e^{1/2} / 3^d c v_d$.
  Further, setting $\eta = 2 (\sqrt{6\log(J)} / \sqrt{J}c v_d)^{1/d}$ gives
  \begin{align*}
    \Pr\left( \min_{1 \le k \le N} \sum_{i = 1}^J \ind\{\| \blambda_i - \wt \blambda_k \| \le \eta / 2 \} \le 0\right)  \lesssim J^{-5/2},
  \end{align*}
  so that
  \begin{align*}
     & \quad \; \sum_{J = 1}^\infty \Pr\left(\min_{1 \le j \le J} \min_{1 \le k \le N} \sum_{i = 1}^j \ind\{\| \blambda_i - \wt \blambda_k \| \le \eta / 2 \}  \le 0\right) \\
     & \le \sum_{J = 1}^\infty  J \Pr\left( \min_{1 \le k \le N} \sum_{i = 1}^J \ind\{\| \blambda_i - \wt \blambda_k \| \le \eta / 2 \}  \le 0\right)                       \\
     & \lesssim \sum_{J = 1}^\infty  \frac{1}{J^{3/2}} < \infty.
  \end{align*}
  Now the Borel-Cantelli lemma \citep[e.g.,][Theorem 4.18]{kallenberg1997foundations} implies that, with probability 1,
  \begin{align*}
    |B(\blambda, \eta) \cap \Lambda| \ge 1,
  \end{align*}
  for all $J$ sufficiently large.
\end{proof}

\section{Selected Validation Schemes} \label{app:reshuffling-params} 

  \subsection{Definition of Index Sets}

  Recall:

   \begin{enumerate}[(i)]
   \item (holdout) Let $M = 1$ and $\Ical_{1, j} = \Ical_1$ for all $j = 1, \dots, J$, and some size-$\lceil \alpha n \rceil$ index set $\Ical_1$. 
   \item (reshuffled holdout) Let $M = 1$ and $\Ical_{1, 1}, \dots, \Ical_{1, J}$ be independently drawn from the uniform distribution over all size-$\lceil \alpha n \rceil$ subsets from $\{1, \dots, n\}$.
    \item ($M$-fold CV) Let $\alpha = 1/M$ and $\Ical_{1}, \dots, \Ical_M$ be a disjoint partition of $\{1, \dots, n\}$, and $\Ical_{m, j} = \Ical_{m}$ for all $j = 1, \dots, J$.
    \item (reshuffled $M$-fold CV) Let $\alpha = 1/M$ and $(\Ical_{1, j}, \dots, \Ical_{M, j}), j = 1, \dots, J$, be independently drawn from the uniform distribution over disjoint partitions of $\{1, \dots, n\}$. 
    \item ($M$-fold holdout) Let $\Ical_{m},m = 1,\dots, M$, be independently drawn from the uniform distribution over size-$\lceil \alpha n \rceil$ subsets of  $\{1, \dots, n\}$ and set $\Ical_{m, j} = \Ical_m$ for all $m = 1, \dots, M, j = 1, \dots, J$. 
    \item (reshuffled $M$-fold holdout) Let $\Ical_{m, j},m = 1,\dots, M, j = 1, \dots, J$, be independently drawn from the uniform distribution over size-$\lceil \alpha n \rceil$ subsets of  $\{1, \dots, n\}$.
  \end{enumerate}

  \subsection{Derivation of Reshuffling Parameters in Limiting Distribution}

  Recall
  \begin{align*}
    \tau_{i, j, M} = \frac{1}{n M^2 \alpha^2 } \sum_{s = 1}^n \sum_{m = 1}^M \sum_{m' = 1}^M  \Pr(s \in \Ical_{m, i} \cap \Ical_{m', j}).
  \end{align*}
  For all schemes in the proposition, the probabilities are independent of the index $s$, so the average over $s = 1, \dots, n$ can be omitted.
  We now verify the constants $\sigma, \tau$ from Table~\ref{tab:reshuffling-params}.
  
  \begin{enumerate}[(i)]
    \item It holds
          \begin{align*}
            \Pr(s \in \Ical_{1, i} \cap \Ical_{1, j}) =   \Pr(s \in \Ical_{1}) = \alpha.
          \end{align*}
          Hence, 
          $$\tau_{i, j, 1} = 1 / \alpha = 1/\alpha \times 1 = \sigma^2 \times \tau^2.$$

    \item (reshuffled holdout) This is a special case of part (vi) with $M = 1$.

    \item ($M$-fold CV)  It holds
          \begin{align*}
            \Pr(s \in \Ical_{m, i} \cap \Ical_{m', j})
            =    \Pr(s \in \Ical_{m} \cap \Ical_{m'})
            = \begin{cases}
                1/M , & m = m',    \\
                0 ,   & m \neq m'.
              \end{cases}
          \end{align*}
          Only $M$ probabilities in the double sum are non-zero, whence
          \begin{align*}
            \tau_{i, j, M} = \frac{1}{M^2 \alpha^2} \times M /M  = 1/\alpha^2 M^2 = 1 \times 1 = \sigma^2 \times \tau^2,
          \end{align*}
          where we used $\alpha = 1/M$.

    \item (reshuffled $M$-fold CV) It holds
          \begin{align*}
            \Pr(s \in \Ical_{m, i} \cap \Ical_{m', j}) = 
            \begin{cases}
                1/M ,   & m = m', i = j        \\
                0 ,     & m \neq m', i = j     \\
                1/M^2 , & m = m', i \neq j     \\
                1/M^2 , & m \neq m', i \neq j.
            \end{cases}
          \end{align*}
          For $i = j$, only $M$ probabilities in the double sum are non-zero. Also using $\alpha = 1/M$, we get
          \begin{align*}
            \tau_{i, j, M} = \frac{1}{M^2 \alpha^2} \times M \times 1/M  = 1 = \sigma^2.
          \end{align*}
          For  $i \neq j$,
          \begin{align*}
            \tau_{i, j, M} = \frac{1}{M^2 \alpha^2} \times M^2 \times 1/M^2  = 1 \times 1 =  \sigma^2 \times \tau^2.
          \end{align*}

    \item ($M$-fold holdout) It holds
          \begin{align*}
            \Pr(s \in \Ical_{m, i} \cap \Ical_{m', j}) 
            = \Pr(s \in \Ical_{m} \cap \Ical_{m'}) 
            = \begin{cases}
            \alpha ,   & m = m', \\
             \alpha^2 , & \text{else}.
            \end{cases}
          \end{align*}
          This gives
          \begin{align*}
            \tau_{i, j, M} =  \frac{1}{M^2 \alpha^2} \times [M \times \alpha +  (M - 1)M \times \alpha^2 ]  =  [1/\alpha M +  (M - 1)/M] \times 1 = \sigma^2 \times \tau^2.
          \end{align*}
          for all $i, j$.

    \item (reshuffled $M$-fold holdout) It holds
          \begin{align*}
            \Pr(s \in \Ical_{m, i} \cap \Ical_{m', j}) = \begin{cases}
                                                           \alpha ,   & m = m', i = j \\
                                                           \alpha^2 , & \text{else}.
                                                         \end{cases}
          \end{align*}
          For $i = j$, this gives
          \begin{align*}
            \tau_{i, j, M} = \frac{1}{M^2 \alpha^2} \times [M \times \alpha +  (M - 1)M \times \alpha^2 ]=  1/\alpha M +  (M - 1)/M .
          \end{align*}
          For $i \neq j$,
          \begin{align*}
            \tau_{i, j, M} = \frac{1}{M^2 \alpha^2}  \times (M^2 \times \alpha^2)  = 1.
          \end{align*}
          This implies that \eqref{eq:reshuffling-params} holds with $\sigma^2 =  1/M\alpha +  (M - 1)/M$, $\tau^2 = 1 / (1/M\alpha + (M - 1)/M)$.
  \end{enumerate}

\begin{remark}
  Although not technically covered by Theorem~\ref{thm:normality}, performing independent bootstraps for each $\blambda_j$ correspond to reshuffled $n$-fold holdout with $\alpha = 1/n$. Accordingly, $\sigma \approx \sqrt{2}$ and $\tau \approx \sqrt{1 / 2}$.
\end{remark}

\section{Details Regarding Benchmark Experiments}\label{app:benchmark_details}

\subsection{Datasets}\label{app:datasets}
We list all datasets used in the benchmark experiments in Table~\ref{tab:datasets}.

\begin{table}[ht]
    \centering
    \caption{List of datasets used in benchmark experiments. All information can be found on \href{https://openml.org}{OpenML}~\citep{vanschoren-sigkdd14a}.}
    \label{tab:datasets}
    \begin{tabular}{llr}
        \toprule
        OpenML Dataset ID & Dataset Name & Size ($n \times p)$ \\
        \midrule 
        \href{https://openml.org/d/23517}{23517} & numerai28.6 & $96320 \times 21$ \\
        \href{https://openml.org/d/1169}{1169} & airlines & $539383 \times 7$ \\
        \href{https://openml.org/d/41147}{41147} & albert & $425240 \times 78$ \\
        \href{https://openml.org/d/4135}{4135} & Amazon\_employee\_access & $32769 \times 9$ \\
        \href{https://openml.org/d/1461}{1461} & bank-marketing & $45211 \times 16$ \\
        \href{https://openml.org/d/1590}{1590} & adult & $48842 \times 14$ \\
        \href{https://openml.org/d/41150}{41150} & MiniBooNE & $130064 \times 50$ \\
        \href{https://openml.org/d/41162}{41162} & kick & $72983 \times 32$ \\
        \href{https://openml.org/d/42733}{42733} & Click\_prediction\_small & $39948 \times 11$ \\
        \href{https://openml.org/d/42742}{42742} & porto-seguro & $595212 \times 57$ \\
        \bottomrule
    \end{tabular}
\end{table}

Note that datasets serve as data generating processes \citep[DGPs;][]{hothorn-cgs05}.
As we are mostly concerned with the actual generalization performance of the final best HPC found during HPO based on validation performance we rely on a comparably large held out test set that is not used during HPO.
We therefore use $5000$ data points sampled from a DGP as an outer test set.
To further be able to measure the generalization performance robustly for varying data sizes available during HPO, we construct concrete tasks based on the DGPs by sampling subsets of (\texttt{train\_valid}; $n$) size $500$, $1000$ and $5000$ from the DGPs.
This results in 30 tasks in total (10 DGPS $\times$ 3 \texttt{train\_valid} sizes).
For more details and the concrete implementation of this procedure, see Appendix~\ref{app:hpo}.
We also collected another $5000$ data points as an external validation set, but did not use it.
Therefore, we had to tighten the restriction to $10 000$ data points mentioned in the main paper to $15 000$ data points as the lower bound on data points.
To allow for stronger variation over different replications, we decided to use $20 000$ as the final lower bound.

\subsection{Learning Algorithms}\label{app:learners}

Here we briefly present training pipeline details and search spaces of the learning algorithms used in our benchmark experiments.

The funnel-shaped MLP is based on sklearn's MLP Classifier and is constructed in the following way:
The hidden layer size for each layer is determined by \texttt{num\_layers} and \texttt{max\_units}.
We start with \texttt{max\_units} and half the number of units for every subsequent layer to create a funnel.
\texttt{max\_batch\_size} is the largest power of 2 that is smaller than the number of training samples available.
We use ReLU as activation function and train the network optimizing logloss as a loss function via SGD using a constant learning rate and Nesterov momentum for 100 epochs.
Table~\ref{tab:funnel_mlp_searchspace} lists the search space (inspired from \cite{zimmer-tpami21a}) used during HPO.

The Elastic Net is based on sklearn's Logistic Regression Classifier.
We train it for a maximum of 1000 iterations using the "saga" solver.
Table~\ref{tab:elasticnet_searchspace} lists the search space used during HPO.

The XGBoost and CatBoost search spaces are listed in Table~\ref{tab:xgboost_hyperparameters} and Table~\ref{tab:catboost_hyperparameters}, both inspired from their search spaces used in \cite{mcelfresh-neurips23}.

For both the Elastic Net and Funnel MLP, missing values are imputed in the preprocessing pipeline (mean imputation for numerical features and adding a new level for categorical features).
Categorical features are target encoded in a cross-validated manner using a 5-fold CV.
Features are then scaled to zero mean and unit variance via a standard scaler.
For XGBoost, we impute missing values for categorical features (adding a new level) and target encode them in a cross-validated manner using a 5-fold CV.
For CatBoost, no preprocessing is performed.

XGBoost and CatBoost models are trained for 2000 iterations and stop early if the validation loss (using the default internal loss function used during training, i.e., logloss) does not improve over a horizon of 20 iterations.
For retraining the best configuration on the whole train and validation data, the number of boosting iterations is set to the number of iterations used to find the best validation performance prior to the stopping mechanism taking action.\footnote{For CV and repeated holdout we take the average number of boosting iterations over the models trained on the different folds.}

\begin{table}[ht]
\centering
\caption{Search Space for Funnel-Shaped MLP Classifier.}
\label{tab:funnel_mlp_searchspace}
\begin{tabular}{llllr}
\toprule
Parameter & Type & Range & Log \\
\midrule
num\_layers & Int. & 1 to 5 & No \\
max\_units & Int. & {64, 128, 256, 512} & No \\
learning\_rate & Num. & $1 \times 10^{-4}$ to $1 \times 10^{-1}$ & Yes \\
batch\_size & Int. & {16, 32, ..., max\_batch\_size} & No \\
momentum & Num. & 0.1 to 0.99 & No \\
alpha & Num. & $1 \times 10^{-6}$ to $1 \times 10^{-1}$ & Yes \\
\bottomrule
\end{tabular}
\vskip 0.2in
\centering
\caption{Search Space for Elastic Net Classifier.}
\label{tab:elasticnet_searchspace}
\begin{tabular}{lllr}
\toprule
Parameter & Type & Range & Log \\
\midrule
C & Num. & $1 \times 10^{-6}$ to $1 \times 10^{4}$ & Yes \\
l1\_ratio & Num. & 0.0 to 1.0 & No \\
\bottomrule
\end{tabular}
\vskip 0.2in
\centering
\caption{Search Space for XGBoost Classifier.}
\label{tab:xgboost_hyperparameters}
\begin{tabular}{lllr}
\toprule
Parameter & Type & Range & Log \\
\midrule
max\_depth & Int. & 2 to 12 & Yes \\
alpha & Num. & $1 \times 10^{-8}$ to 1.0 & Yes \\
lambda & Num. & $1 \times 10^{-8}$ to 1.0 & Yes \\
eta & Num. & 0.01 to 0.3 & Yes \\
\bottomrule
\end{tabular}
\vskip 0.2in
\centering
\caption{Search Space for CatBoost Classifier.}
\label{tab:catboost_hyperparameters}
\begin{tabular}{lllr}
\toprule
Parameter & Type & Range & Log \\
\midrule
learning\_rate & Num. & 0.01 to 0.3 & Yes \\
depth & Int. & 2 to 12 & Yes \\
l2\_leaf\_reg & Num. & 0.5 to 30 & Yes \\
\bottomrule
\end{tabular}
\end{table}
\vspace{1em}

\subsection{Exact Implementation}\label{app:hpo}

In the following, we outline the exact implementation of performing one HPO run for a given learning algorithm on a concrete task (dataset $\times$ \texttt{train\_valid} size) and a given resampling.
We release all code to replicate benchmark results and reproduce our analyses via \githubrepo.
For a given replication (in total 10):
\begin{enumerate}
    \item We sample (without replacement) \texttt{train\_valid} size (500, 1000 or 5000 points) and \texttt{test} size (always 5000) points from the DGP (i.e. a concrete dataset in Table~\ref{tab:datasets}). These are shared for every learning algorithm (i.e. all learning algorithms are evaluated on the same data).
    \item A given HPC is evaluated in the following way:
    \begin{itemize}
        \item The resampling operates on the train validation\footnote{With train validation we refer to all data being available during HPO which is then further split by a resampling into train and validation sets.} set of size \texttt{train\_valid}.
        \item The learning algorithm is configured by the HPC.
        \item The learning algorithm is trained on training splits and evaluated on validation splits according to the resampling strategy. In case reshuffling is turned on, the training and validation splits are recreated for every HPO.
        We compute the Accuracy, ROC AUC and logloss when using a random search and compute ROC AUC when using HEBO or SMAC3 and average performance over all folds for resamplings involving multiple folds.
        \item For each HPC we then always re-train the model on all \texttt{train\_valid} data being available and evaluate the model on the held-out \texttt{test} set to compute an outer estimate of generalization performance for each HPC (regardless of whether it is the incumbent for a given iteration or not).
    \end{itemize}
    \item We evaluate 500 HPCs when using random search and 250 HPC when using HEBO or SMAC3 (SMAC4HPO facade).
\end{enumerate}

As resamplings, we use holdout with a 80/20 train-validation split and 5 folds for CV, so that the holdout strategy is just one fold of the CV and the fraction of data points being used for training and respectively validation are the same across different resampling strategies.
5-fold holdout simply repeats the holdout procedure five times and 5x 5-fold CV repeats the 5-fold CV five times.
Each of the four resamplings can be reshuffled or not (standard).

As mentioned above, the test set is only varied for each of the 10 replica (repetitions with different seeds), but consistent for different tasks (i.e. the different learning algorithms are evaluated on the same test set, similarly, also the different dataset subsets all share the same test set).
This allows for fair comparisons of different resamplings on a concrete problem (i.e. a given dataset, \texttt{train\_valid} size and learning algorithm).
Additionally, for the random search, the 500 HPCs evaluated for a given learning algorithm are also fixed over different dataset and \texttt{train\_valid} size combinations.
This is done to allow for an isolation of the effect, the concrete resampling (and whether it is reshuffled or not) has on generalization performance, reducing noise arising due to different HPCs.
Learning algorithms themselves are not explicitly seeded to allow for variation during model training over different replications.
Resamplings and partitioning of data are always performed in a stratified manner with respect to the target variable.

For the random search, we only ran (standard and reshuffled) holdout and (standard and reshuffled) 5x 5-fold CV experiments (because we can simulate 5-fold CV and 5-fold holdout experiments based on the results obtained from the 5x 5-fold CV (by only considering the first repeat or the first fold for each of the five repeats).\footnote{We even could have simulated the vanilla holdout from the 5x 5-fold CV experiments by choosing an arbitrary fold and repeat but choose not to do so, to have some sanity checks regarding our implementation by being able to compare the "true" holdout with a the simulated holdout.}

For running HEBO or SMAC3, each resampling (standard and reshuffled for holdout, 5-fold holdout, 5-fold CV, 5x 5-fold CV) has to be actually run due to the adaptive nature of BO.

For the random search experiments, this results in 10 (DGPs) $\times$ 3 (\texttt{train\_valid} sizes) $\times$ 4 (learning algorithms) $\times$ 2 (holdout or 5x 5-fold CV) $\times$ 2 (standard or reshuffled) $\times$ 10 (replications) = 4800 HPO runs,\footnote{Note that we do not have to take the 3 different metrics into account because random search allows us to simulate runs for different metric post hoc.} each involving the evaluation of 500 HPCs and each evaluation of an HPC involving either 2 (for holdout; due to retraining on train validation data) or 26 (for 5x 5-fold CV; due to retraining on train validation data) model fits.
In summary, the random search experiments involve the evaluation of 2.4 Million HPCs with in total 33.6 Million model fits.

Similarly, for the HEBO and SMAC3 experiments, this each results in 10 (DGPs) $\times$ 3 (\texttt{train\_valid} sizes) $\times$ 4 (learning algorithms) $\times$ 4 (holdout, 5-fold CV,  5x 5-fold CV or 5-fold holdout) $\times$ 2 (standard or reshuffled) $\times$ 10 (replications) = 9600 HPO runs\footnote{Note that HEBO and SMAC3 were only run for ROC AUC as the performance metric.}, each involving the evaluation of 250 HPCs and each evaluation of an HPC involving either 2 (for holdout; due to retraining on train validation data), 6 (for 5-fold CV or 5-fold holdout; due to retraining on train validation data) or 26 (for 5x 5-fold CV; due to retraining on train validation data) model fits.
In summary, the HEBO and SMAC3 experiments \textit{each} involve the evaluation of 2.4 Million HPCs with in total 24 Million model fits.

\subsection{Compute Resources}\label{app:resources}
We estimate our total compute time for the random search, HEBO and SMAC3 experiments to be roughly 11.86 CPU years.
Benchmark experiments were run on an internal HPC cluster equipped with a mix of Intel Xeon E5-2670, Intel Xeon E5-2683 and Intel Xeon Gold 6330 instances.
Jobs were scheduled to use a single CPU core and were allowed to use up to 16GB RAM.
Total emissions are estimated to be an equivalent of roughly 6508.67 kg CO2.

\section{Additional Benchmark Results Visualizations}\label{app:benchmark_results}
\subsection{Main Experiments}
In this section, we provide additional visualizations of the results of our benchmark experiments.

Figure~\ref{fig:pareto-random-search} illustrates the trade-off between the final number of model fits required by different resamplings and the final average normalized test performance (AUC ROC) after running random search for a budget of $500$ hyperparameter configurations.
We can see that the reshuffled holdout on average comes close to the final test performance of the overall more expensive 5-fold CV.

\begin{figure}[ht]
    \centering
    \includegraphics[width=0.75\textwidth]{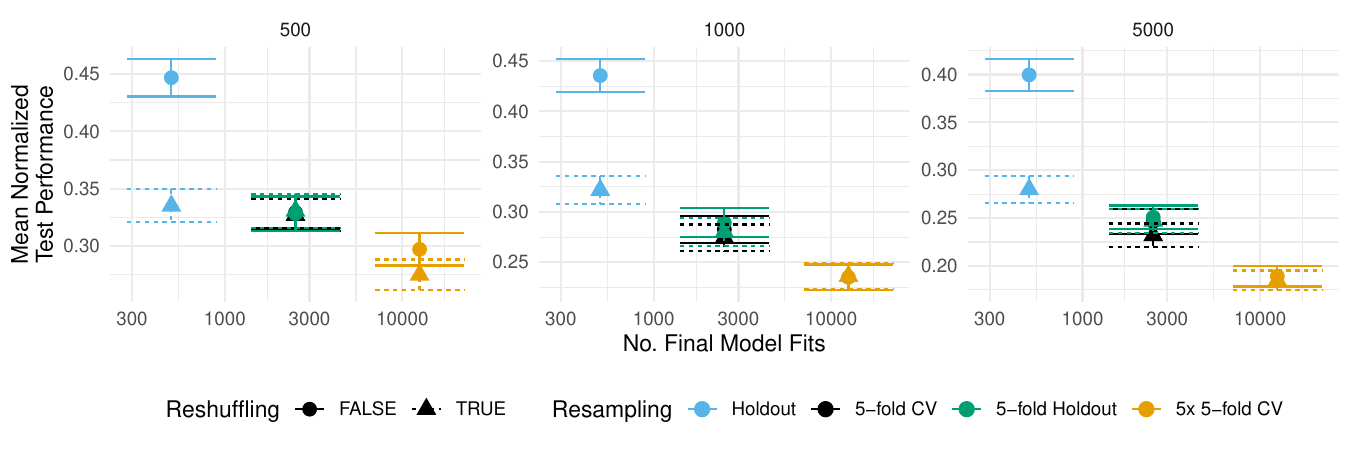}
    \caption{Trade-off between the final number of model fits required by different resamplings and the final average normalized test performance (AUC ROC) after running random search for a  budget of $500$ hyperparameter configurations. Averaged over different tasks, learning algorithms and replications separately for increasing $n$ (train-validation sizes, columns). Shaded areas represent standard errors.}
    \label{fig:pareto-random-search}
\end{figure}

Below, we give an overview of the different types of additional analyses and visualizations we provide.
Normalized metrics, i.e., normalized validation or test performance refer to the measure being scaled to $[0, 1]$ based on the empirical observed minimum and maximum values obtained on the raw results level~\citep[ADTM; see ][]{wistuba-ml18a}.
More concretely, for each scenario consisting of a learning algorithm that is run on a given task (dataset $\times$ \texttt{train\_valid} size) given a certain performance metric, the performance values (validation or test) for all resamplings and optimizers are normalized on the replication level to $[0, 1]$ by subtracting the empirical best value and dividing by the range of performance values.
Therefore a normalized performance value of $0$ is best and $1$ is worst.
Note that we additionally provide further aggregated results on the learning algorithm level and raw results of validation and test performance via \githubrepo.

\begin{itemize}
  \item Random search
  \begin{itemize}
      \item Normalized validation performance in Figure~\ref{fig:hpo_normalized_valid}.
      \item Normalized test performance in Figure~\ref{fig:hpo_normalized_test}.
      \item Improvement in test performance over 5-fold CV in Figure~\ref{fig:hpo_imp}.
      \item Rank w.r.t. test performance in Figure~\ref{fig:hpo_rel_ranks}.
  \end{itemize}
  \item HEBO and SMAC3 vs. random search holdout
  \begin{itemize}
    \item Normalized validation performance in Figure~\ref{fig:bo_holdout_02_normalized_valid}.
    \item Normalized test performance in Figure~\ref{fig:bo_holdout_02_normalized_test}.
    \item Improvement in test performance over standard holdout in Figure~\ref{fig:bo_holdout_02_imp}.
    \item Rank w.r.t. test performance in Figure~\ref{fig:bo_holdout_02_rel_ranks}.
  \end{itemize}
  \item HEBO and SMAC3 vs. random search 5-fold holdout
  \begin{itemize}
    \item Normalized validation performance in Figure~\ref{fig:bo_repeatedholdout_02_5_normalized_valid}.
    \item Normalized test performance in Figure~\ref{fig:bo_repeatedholdout_02_5_normalized_test}.
    \item Improvement in test performance over standard 5-fold holdout in Figure~\ref{fig:bo_repeatedholdout_02_5_imp}.
    \item Rank w.r.t. test performance in Figure~\ref{fig:bo_repeatedholdout_02_5_rel_ranks}.
  \end{itemize}
  \item HEBO and SMAC3 vs. random search 5-fold CV
  \begin{itemize}
    \item Normalized validation performance in Figure~\ref{fig:bo_cv_5_1_normalized_valid}.
    \item Normalized test performance in Figure~\ref{fig:bo_cv_5_1_normalized_test}.
    \item Improvement in test performance over 5-fold CV in Figure~\ref{fig:bo_cv_5_1_imp}.
    \item Rank w.r.t. test performance in Figure~\ref{fig:bo_cv_5_1_rel_ranks}.
  \end{itemize}
  \item HEBO and SMAC3 vs. random search 5x 5-fold CV
  \begin{itemize}
    \item Normalized validation performance in Figure~\ref{fig:bo_cv_5_5_normalized_valid}.
    \item Normalized test performance in Figure~\ref{fig:bo_cv_5_5_normalized_test}.
    \item Improvement in test performance over 5x 5-fold CV in Figure~\ref{fig:bo_cv_5_5_imp}.
    \item Rank w.r.t. test performance in Figure~\ref{fig:bo_cv_5_5_rel_ranks}.
  \end{itemize}
\end{itemize}

\begin{figure}[ht]
\vskip 0.2in
\begin{center}
\centerline{\includegraphics[width=\textwidth]{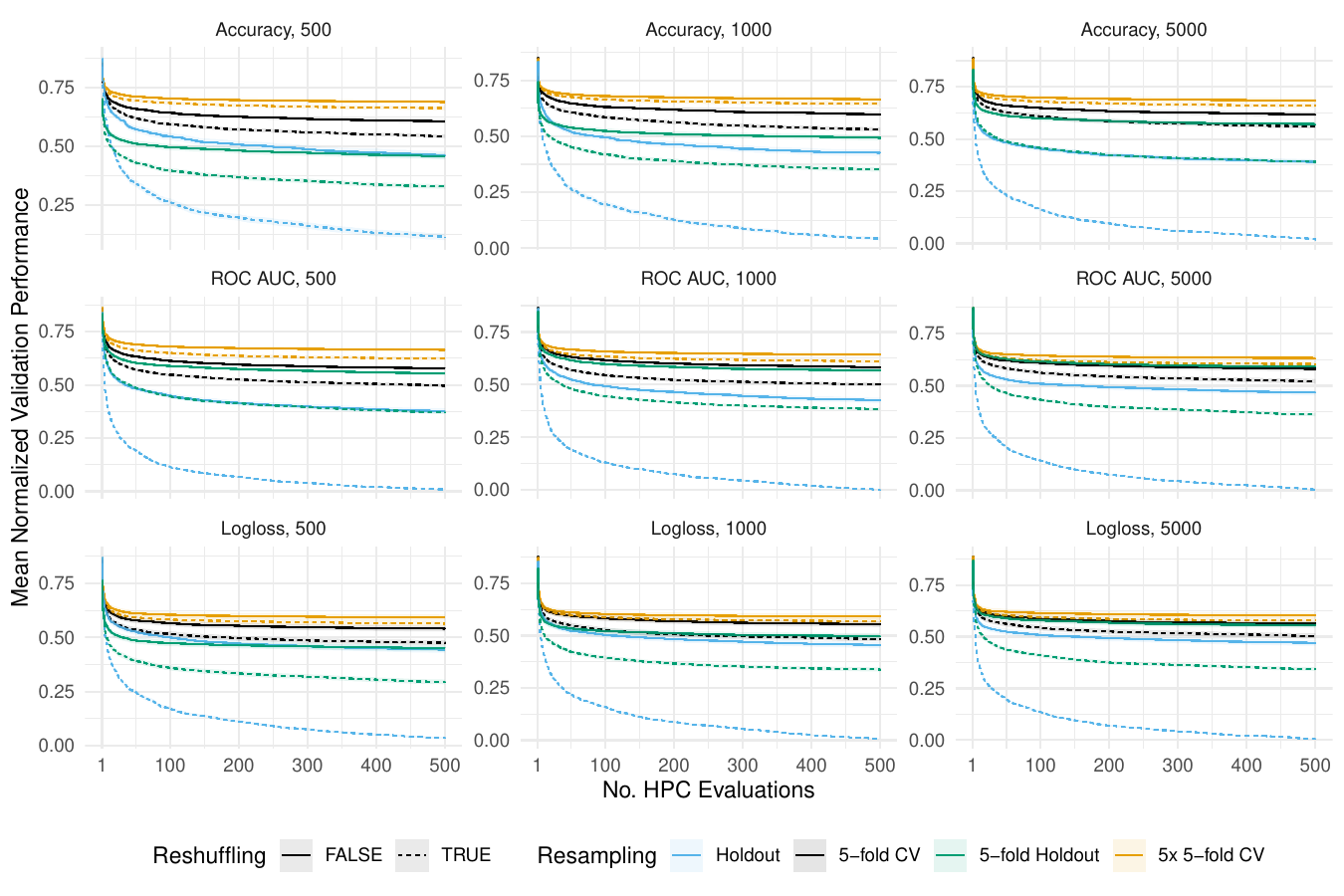}}
\caption{Random search.
Average normalized performance over tasks, learners and replications for different $n$ (train-validation sizes, columns).
Shaded areas represent standard errors.}
\label{fig:hpo_normalized_valid}
\end{center}
\vskip -0.2in
\end{figure}

\begin{figure}[ht]
\vskip 0.2in
\begin{center}
\centerline{\includegraphics[width=\textwidth]{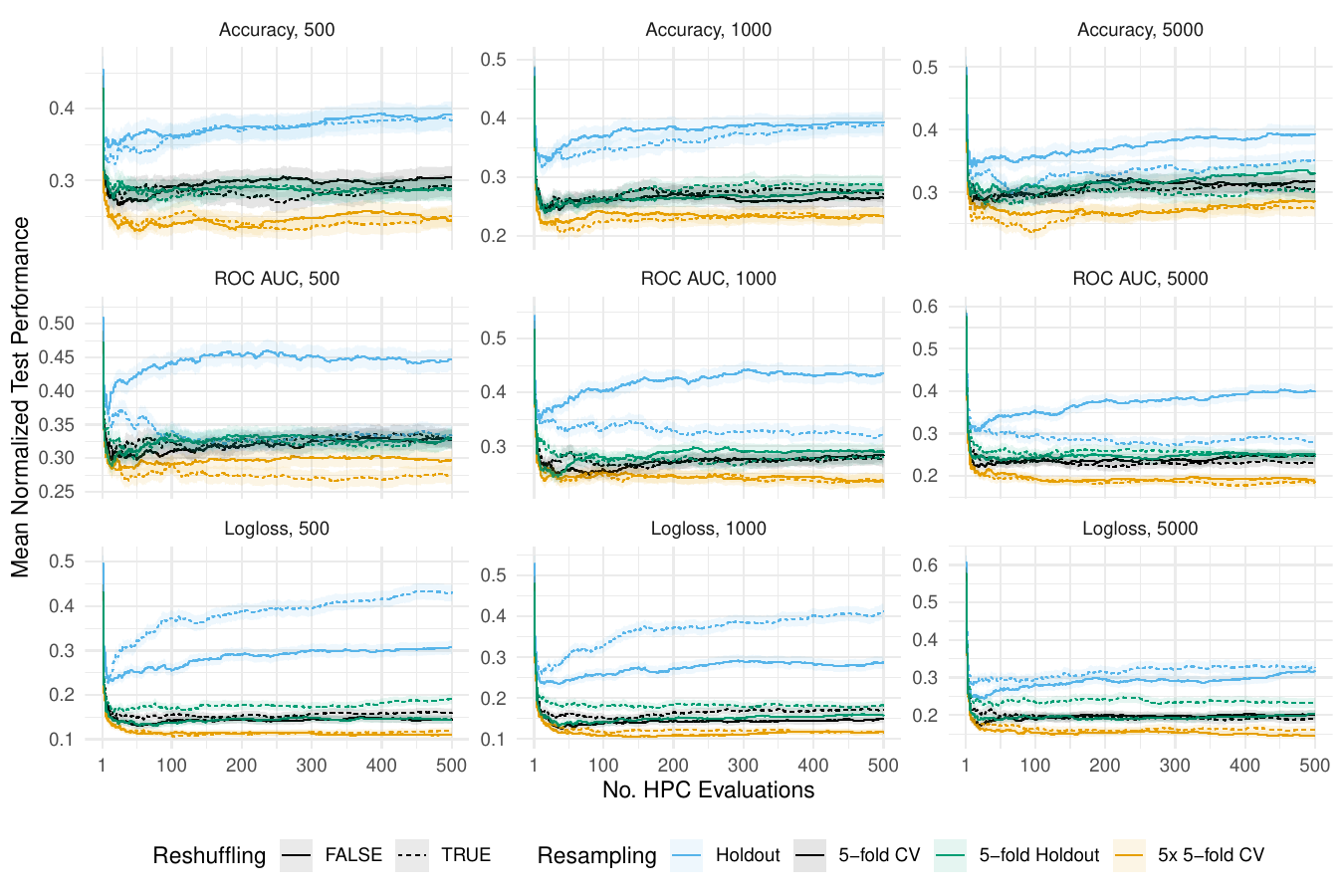}}
\caption{Random search.
Average normalized test performance over tasks, learners and replications for different $n$ (train-validation sizes, columns).
Shaded areas represent standard errors.}
\label{fig:hpo_normalized_test}
\end{center}
\vskip -0.2in
\end{figure}

\begin{figure}[ht]
\vskip 0.2in
\begin{center}
\centerline{\includegraphics[width=\textwidth]{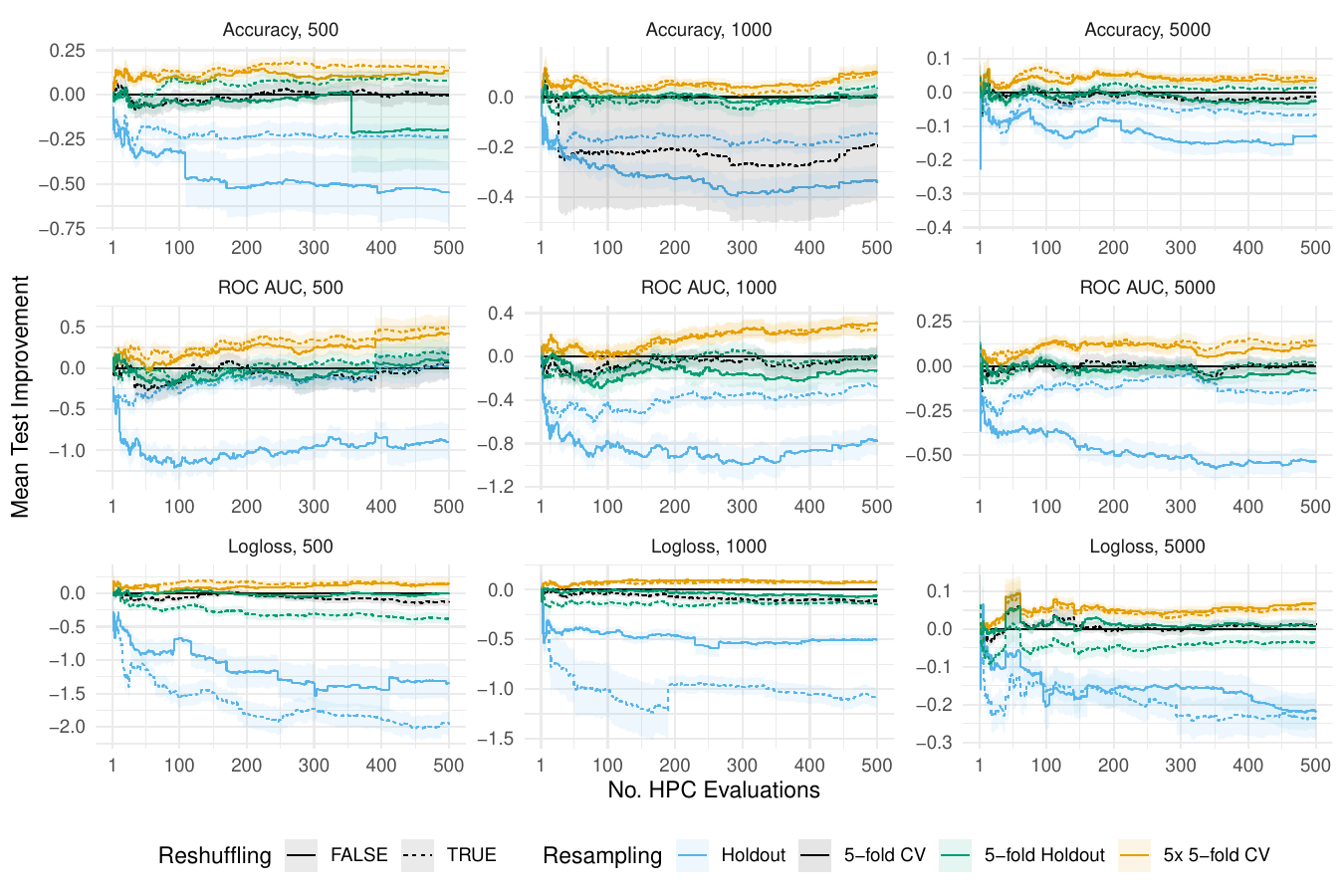}}
\caption{Random search.
Average improvement (compared to standard 5-fold CV) with respect to test performance of the incumbent over tasks, learners and replications for different $n$ (train-validation sizes, columns).
Shaded areas represent standard errors.}
\label{fig:hpo_imp}
\end{center}
\vskip -0.2in
\end{figure}

\begin{figure}[ht]
\vskip 0.2in
\begin{center}
\centerline{\includegraphics[width=\textwidth]{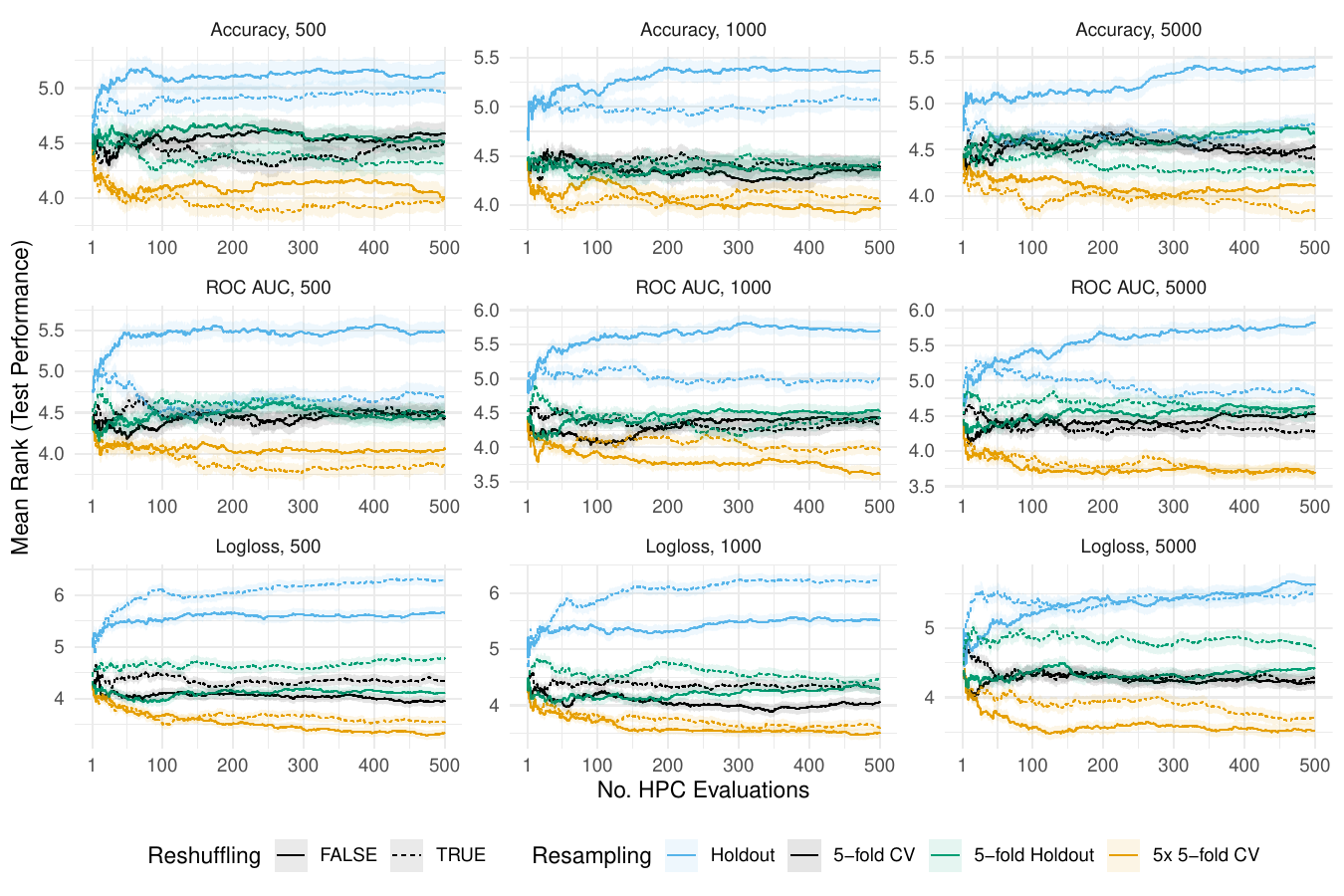}}
\caption{Random search.
Average ranks (lower is better) with respect to test performance over tasks, learners and replications for different $n$ (train-validation sizes, columns).
Shaded areas represent standard errors.}
\label{fig:hpo_rel_ranks}
\end{center}
\vskip -0.2in
\end{figure}


\begin{figure}[ht]
\vskip 0.2in
\begin{center}
\centerline{\includegraphics[width=\textwidth]{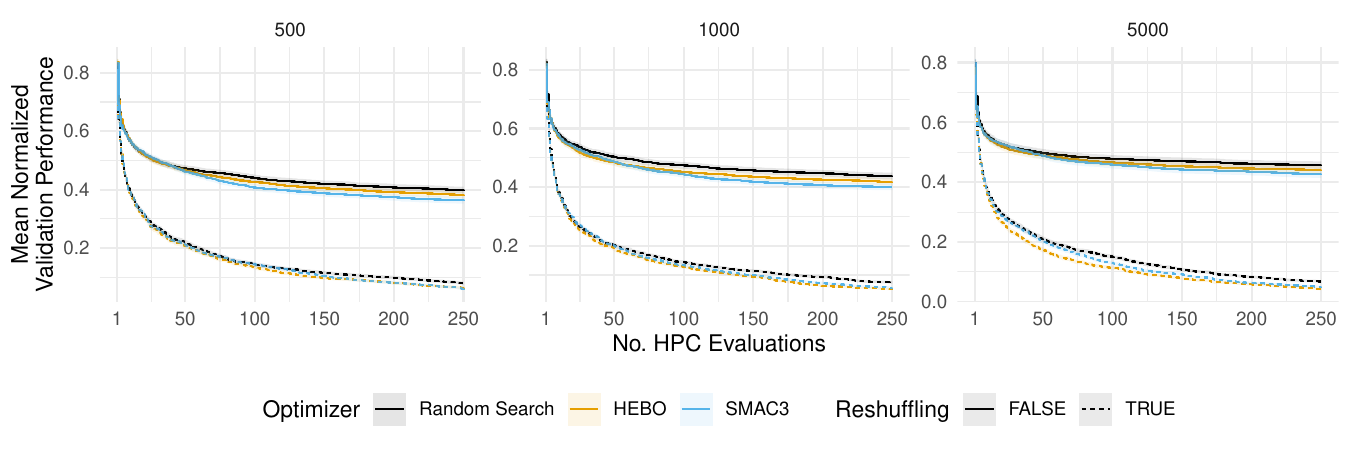}}
\caption{HEBO and SMAC3 vs. random search for holdout.
Average normalized validation performance (ROC AUC) over tasks, learners and replications for different $n$ (train-validation sizes, columns).
Shaded areas represent standard errors.}
\label{fig:bo_holdout_02_normalized_valid}
\end{center}
\vskip -0.2in
\end{figure}

\begin{figure}[ht]
\vskip 0.2in
\begin{center}
\centerline{\includegraphics[width=\textwidth]{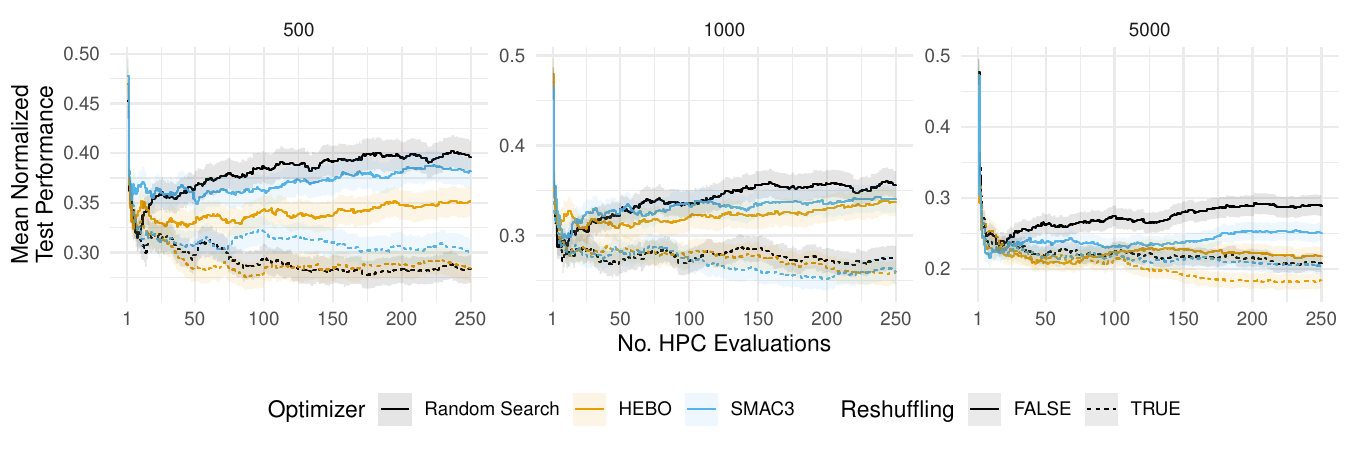}}
\caption{HEBO and SMAC3 vs. random search for holdout.
Average normalized test performance (ROC AUC) over tasks, learners and replications for different $n$ (train-validation sizes, columns).
Shaded areas represent standard errors.}
\label{fig:bo_holdout_02_normalized_test}
\end{center}
\vskip -0.2in
\end{figure}

\begin{figure}[ht]
\vskip 0.2in
\begin{center}
\centerline{\includegraphics[width=\textwidth]{figures/bo_holdout_02_imp.pdf}}
\caption{HEBO and SMAC3 vs. random search for holdout.
Average improvement (compared to standard holdout) with respect to test performance (ROC AUC) of the incumbent over tasks, learners and replications for different $n$ (train-validation sizes, columns).
Shaded areas represent standard errors.}
\label{fig:bo_holdout_02_imp}
\end{center}
\vskip -0.2in
\end{figure}

\begin{figure}[ht]
\vskip 0.2in
\begin{center}
\centerline{\includegraphics[width=\textwidth]{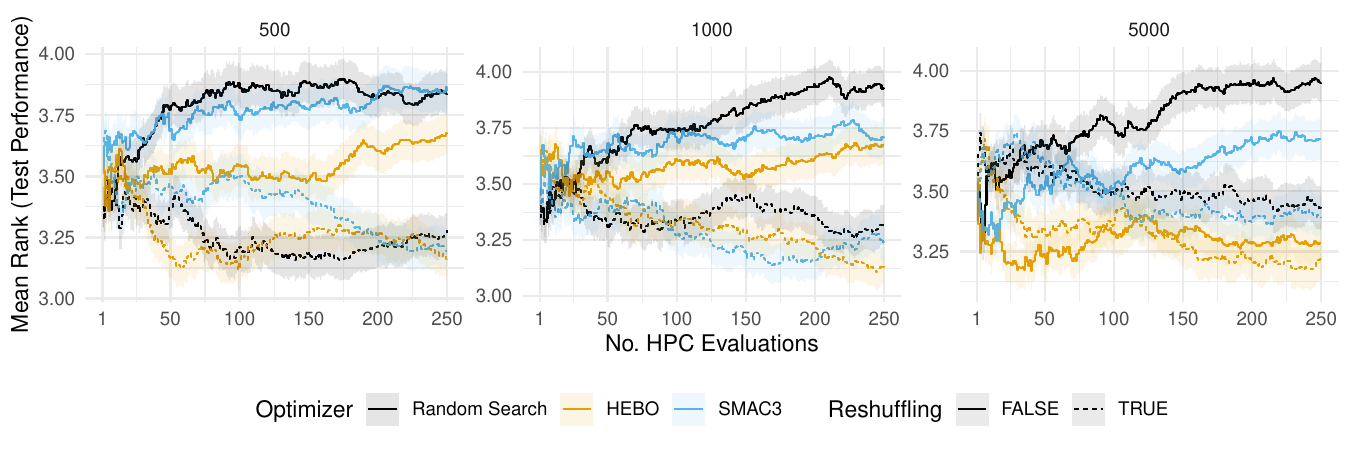}}
\caption{HEBO and SMAC3 vs. random search for holdout.
Average ranks (lower is better) with respect to test performance (ROC AUC) of the incumbent over tasks, learners and replications for different $n$ (train-validation sizes, columns).
Shaded areas represent standard errors.}
\label{fig:bo_holdout_02_rel_ranks}
\end{center}
\vskip -0.2in
\end{figure}


\begin{figure}[ht]
\vskip 0.2in
\begin{center}
\centerline{\includegraphics[width=\textwidth]{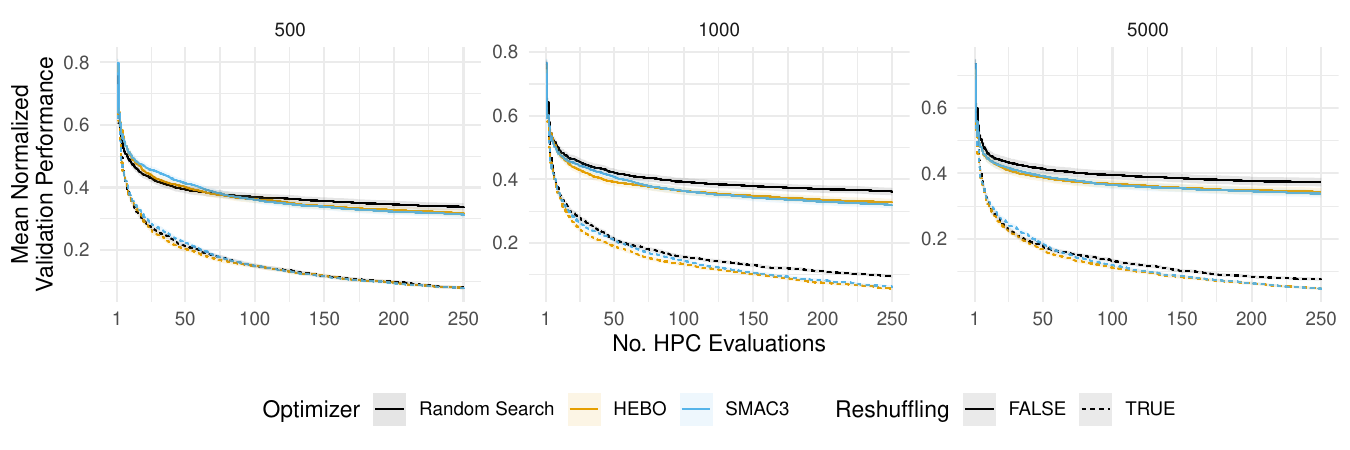}}
\caption{HEBO and SMAC3 vs. random search for 5-fold holdout.
Average normalized validation performance (ROC AUC) over tasks, learners and replications for different $n$ (train-validation sizes, columns).
Shaded areas represent standard errors.}
\label{fig:bo_repeatedholdout_02_5_normalized_valid}
\end{center}
\vskip -0.2in
\end{figure}

\begin{figure}[ht]
\vskip 0.2in
\begin{center}
\centerline{\includegraphics[width=\textwidth]{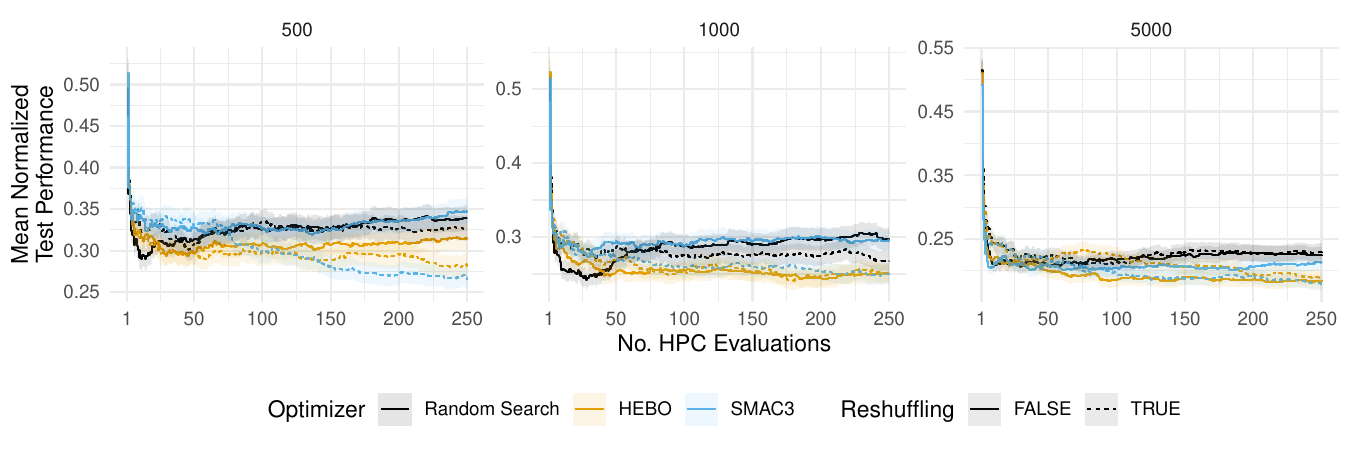}}
\caption{HEBO and SMAC3 vs. random search for 5-fold holdout.
Average normalized test performance (ROC AUC) over tasks, learners and replications for different $n$ (train-validation sizes, columns).
Shaded areas represent standard errors.}
\label{fig:bo_repeatedholdout_02_5_normalized_test}
\end{center}
\vskip -0.2in
\end{figure}

\begin{figure}[ht]
\vskip 0.2in
\begin{center}
\centerline{\includegraphics[width=\textwidth]{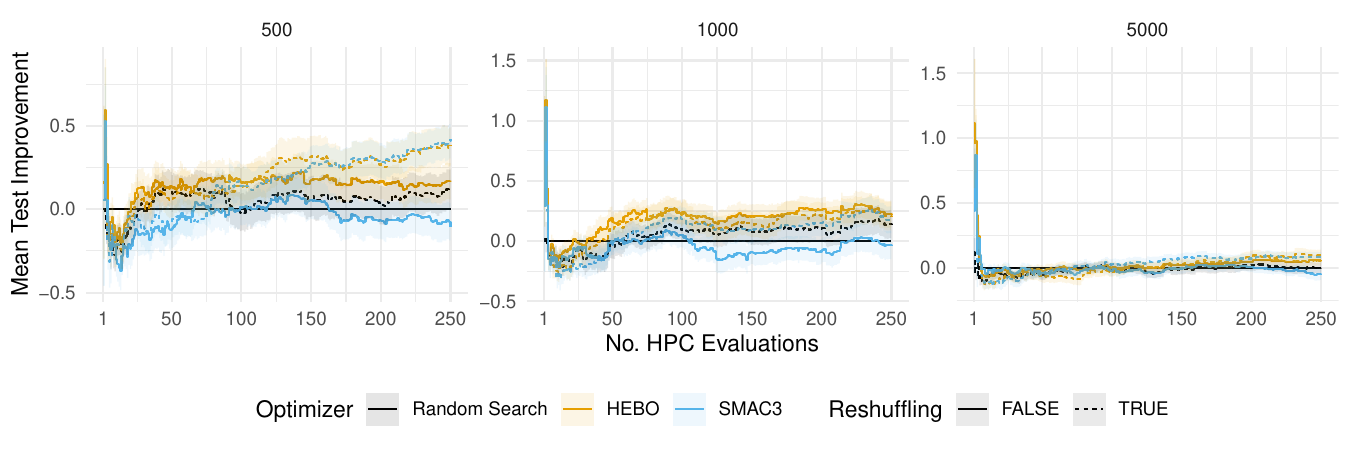}}
\caption{HEBO and SMAC3 vs. random search for 5-fold holdout.
Average improvement (compared to standard 5-fold holdout) with respect to test performance (ROC AUC) of the incumbent over tasks, learners and replications for different $n$ (train-validation sizes, columns).
Shaded areas represent standard errors.}
\label{fig:bo_repeatedholdout_02_5_imp}
\end{center}
\vskip -0.2in
\end{figure}

\begin{figure}[ht]
\vskip 0.2in
\begin{center}
\centerline{\includegraphics[width=\textwidth]{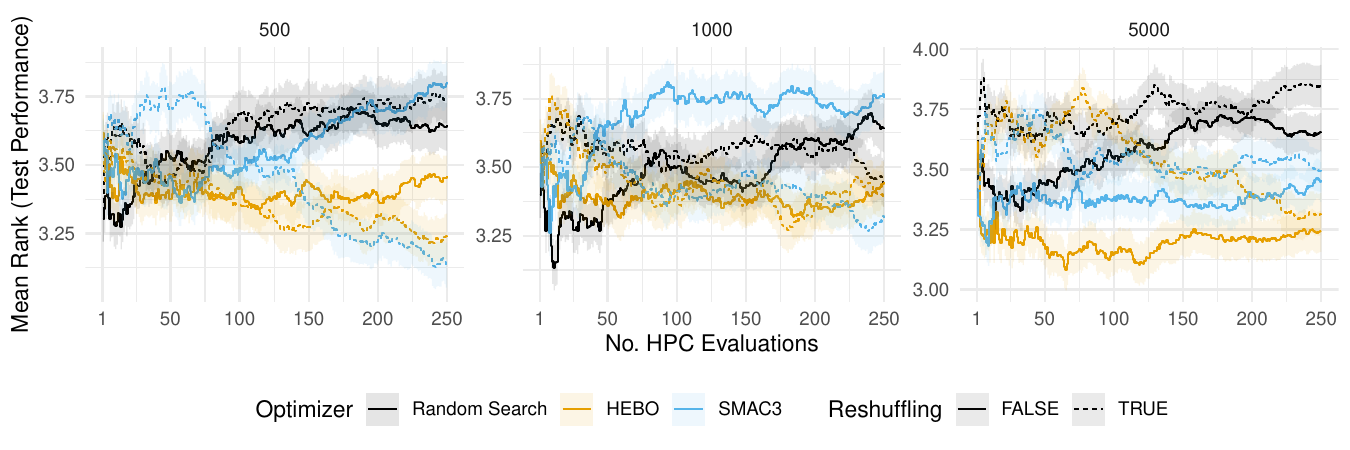}}
\caption{HEBO and SMAC3 vs. random search for 5-fold holdout.
Average ranks (lower is better) with respect to test performance (ROC AUC) of the incumbent tasks, learners and replications for different $n$ (train-validation sizes, columns).
Shaded areas represent standard errors.}
\label{fig:bo_repeatedholdout_02_5_rel_ranks}
\end{center}
\vskip -0.2in
\end{figure}


\begin{figure}[ht]
\vskip 0.2in
\begin{center}
\centerline{\includegraphics[width=\textwidth]{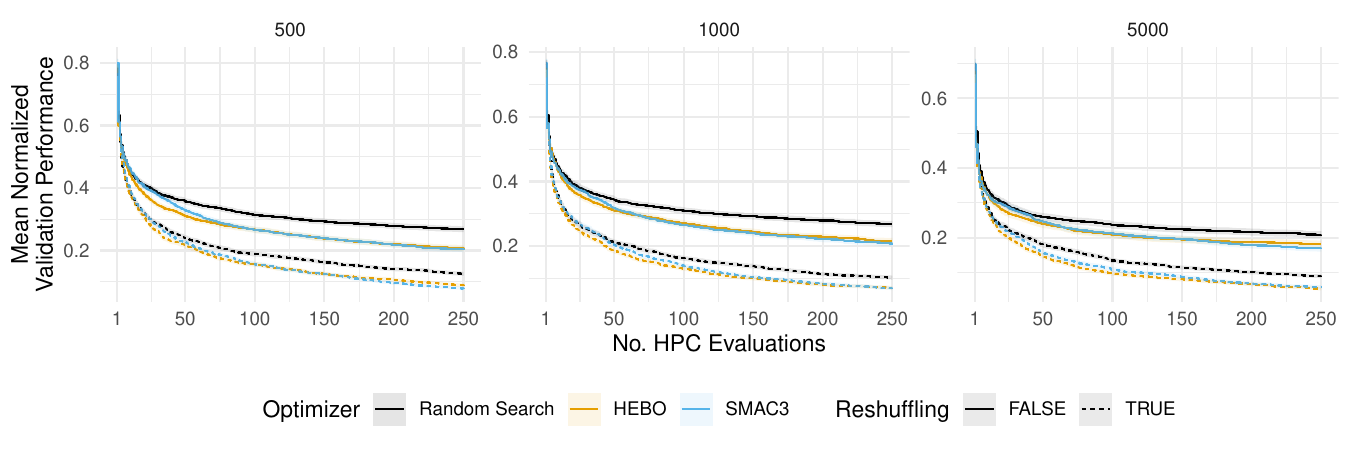}}
\caption{HEBO and SMAC3 vs. random search for 5-fold CV.
Average normalized validation performance (ROC AUC) over tasks, learners and replications for different $n$ (train-validation sizes, columns).
Shaded areas represent standard errors.}
\label{fig:bo_cv_5_1_normalized_valid}
\end{center}
\vskip -0.2in
\end{figure}

\begin{figure}[ht]
\vskip 0.2in
\begin{center}
\centerline{\includegraphics[width=\textwidth]{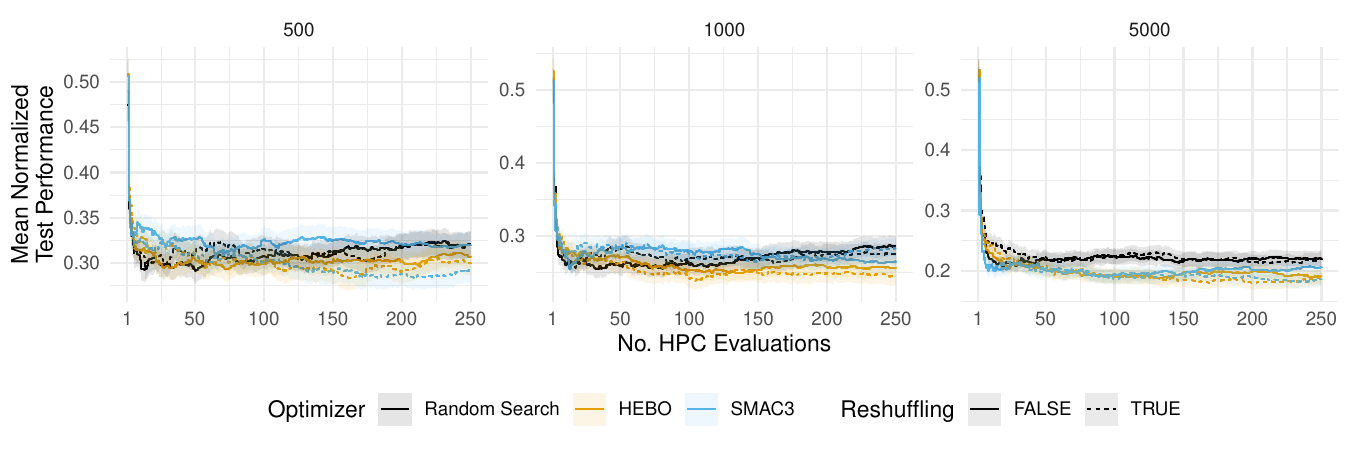}}
\caption{HEBO and SMAC3 vs. random search for 5-fold CV.
Average normalized test performance (ROC AUC) over tasks, learners and replications for different $n$ (train-validation sizes, columns).
Shaded areas represent standard errors.}
\label{fig:bo_cv_5_1_normalized_test}
\end{center}
\vskip -0.2in
\end{figure}

\begin{figure}[ht]
\vskip 0.2in
\begin{center}
\centerline{\includegraphics[width=\textwidth]{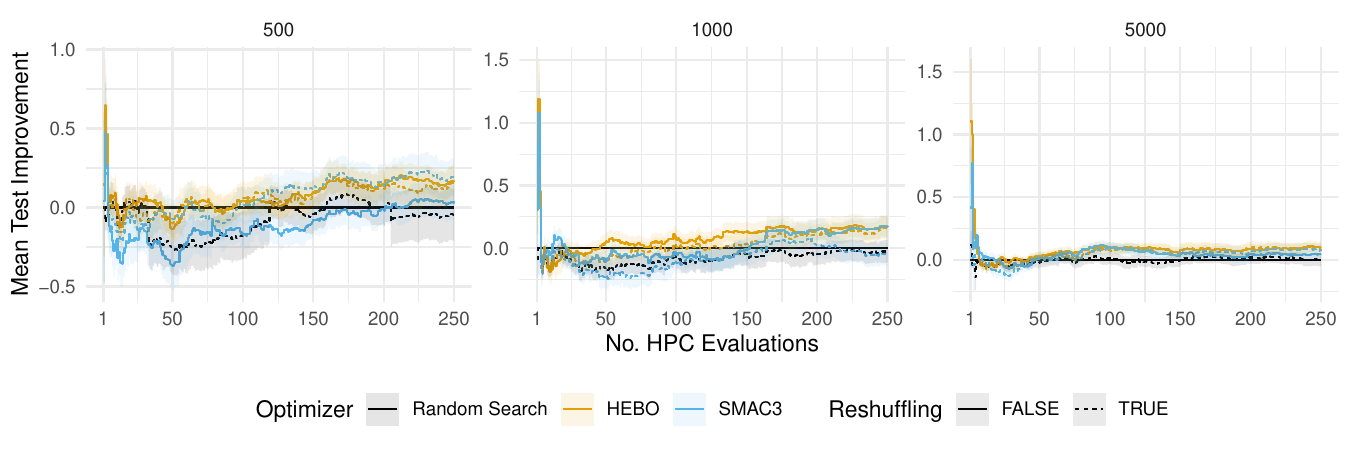}}
\caption{HEBO and SMAC3 vs. random search for 5-fold CV.
Average improvement (compared to standard 5-fold CV) with respect to test performance (ROC AUC) of the incumbent over tasks, learners and replications for different $n$ (train-validation sizes, columns).
Shaded areas represent standard errors.}
\label{fig:bo_cv_5_1_imp}
\end{center}
\vskip -0.2in
\end{figure}

\begin{figure}[ht]
\vskip 0.2in
\begin{center}
\centerline{\includegraphics[width=\textwidth]{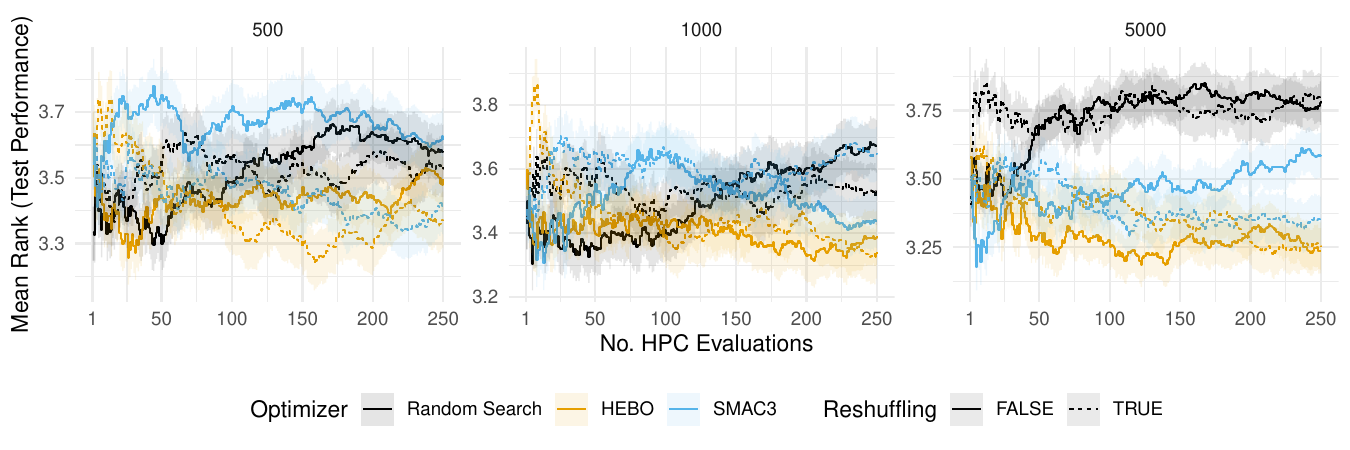}}
\caption{HEBO and SMAC3 vs. random search for 5-fold CV.
Average ranks (lower is better) with respect to test performance (ROC AUC) of the incumbent over tasks, learners and replications for different $n$ (train-validation sizes, columns).
Shaded areas represent standard errors.}
\label{fig:bo_cv_5_1_rel_ranks}
\end{center}
\vskip -0.2in
\end{figure}


\begin{figure}[ht]
\vskip 0.2in
\begin{center}
\centerline{\includegraphics[width=\textwidth]{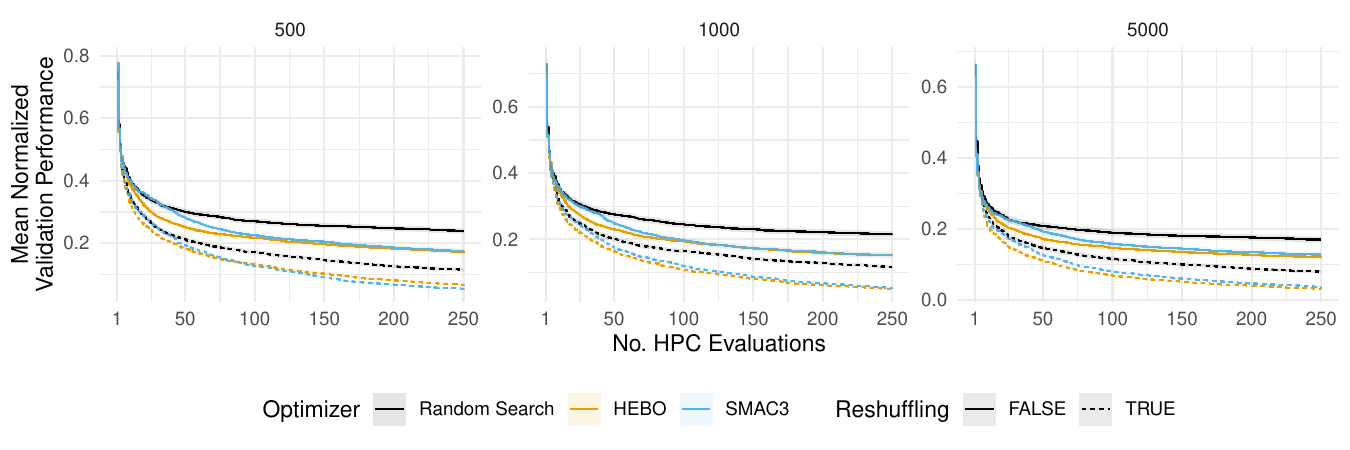}}
\caption{HEBO and SMAC3 vs. random search for 5x 5-fold CV.
Average normalized validation performance (ROC AUC) over tasks, learners and replications for different $n$ (train-validation sizes, columns).
Shaded areas represent standard errors.}
\label{fig:bo_cv_5_5_normalized_valid}
\end{center}
\vskip -0.2in
\end{figure}

\begin{figure}[ht]
\vskip 0.2in
\begin{center}
\centerline{\includegraphics[width=\textwidth]{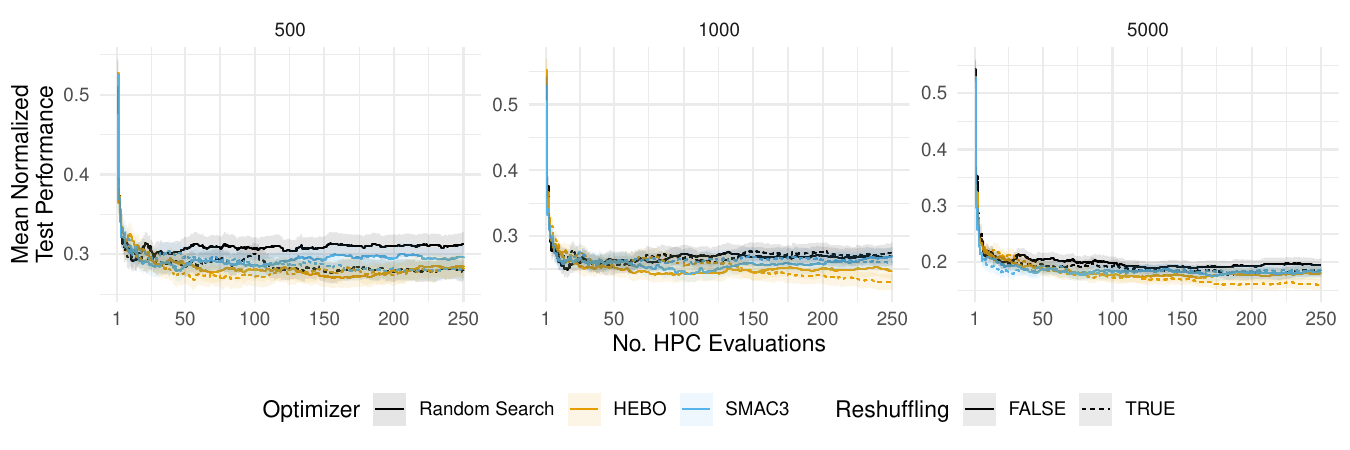}}
\caption{HEBO and SMAC3 vs. random search for 5x 5-fold CV.
Average normalized test performance (ROC AUC) over tasks, learners and replications for different $n$ (train-validation sizes, columns).
Shaded areas represent standard errors.}
\label{fig:bo_cv_5_5_normalized_test}
\end{center}
\vskip -0.2in
\end{figure}

\begin{figure}[ht]
\vskip 0.2in
\begin{center}
\centerline{\includegraphics[width=\textwidth]{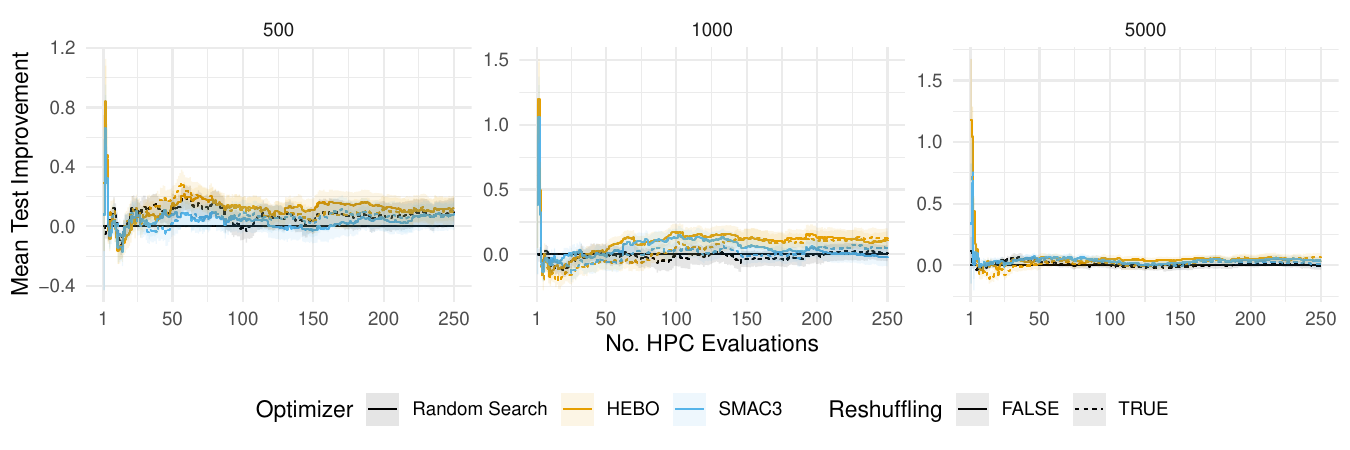}}
\caption{HEBO and SMAC3 vs. random search for 5x 5-fold CV.
Average improvement (compared to standard 5x 5-fold CV) with respect to test performance (ROC AUC) of the incumbent over tasks, learners and replications for different $n$ (train-validation sizes, columns).
Shaded areas represent standard errors.}
\label{fig:bo_cv_5_5_imp}
\end{center}
\vskip -0.2in
\end{figure}

\begin{figure}[ht]
\vskip 0.2in
\begin{center}
\centerline{\includegraphics[width=\textwidth]{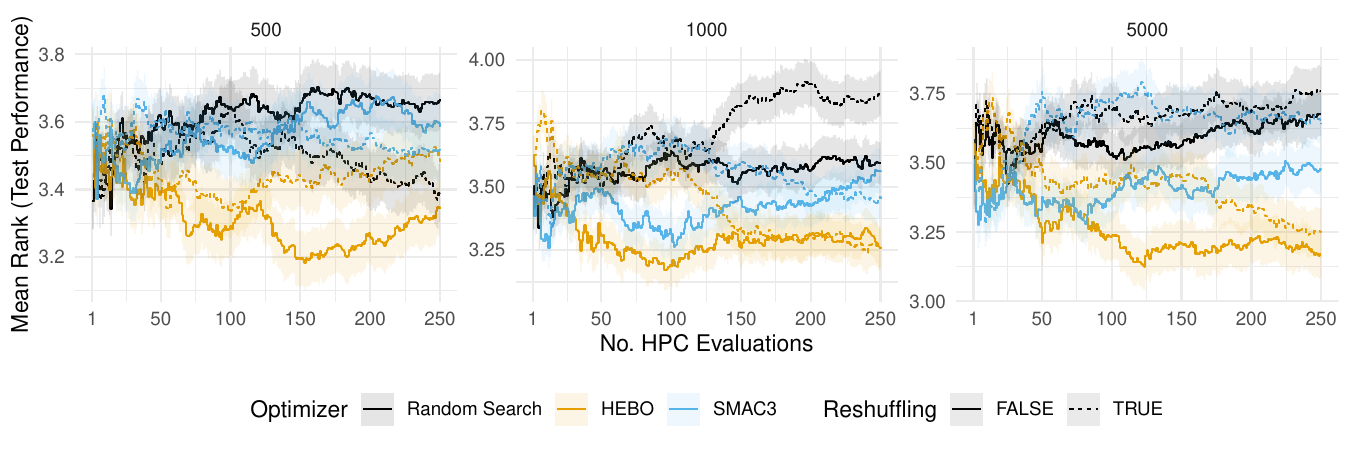}}
\caption{HEBO and SMAC3 vs. random search for 5x 5-fold CV.
Average ranks (lower is better) with respect to test performance (ROC AUC) of the incumbent over tasks, learners and replications for different $n$ (train-validation sizes, columns).
Shaded areas represent standard errors.}
\label{fig:bo_cv_5_5_rel_ranks}
\end{center}
\vskip -0.2in
\end{figure}

\clearpage
\subsection{Ablation on M-fold holdout}\label{app:repeatedholdout}
Based on the 5x 5-fold CV results we further simulated different $M$-fold holdout resamplings (standard and reshuffled) by taking M repeats from the first fold of the 5x 5-fold CV.
This allows us to get an understanding of the effect more folds have on $M$-fold holdout, especially in the context of reshuffling.

Regarding normalized validation performance we observe that more folds generally result in a less optimistically biased validation performance (see Figure~\ref{fig:hpo_repeatedholdout_ablation_normalized_valid}).
Looking at normalized test performance (Figure~\ref{fig:hpo_repeatedholdout_ablation_normalized_test}) we observe the general trend that more folds result in better test performance -- which is expected.
Reshuffling generally results in better test performance compared to the standard resampling (with the exception of logloss where especially in the case of a single holdout, reshuffling can hurt generalization performance).
This effect is smaller, the more folds are used, which is in line with our theoretical results presented in Table~\ref{tab:reshuffling-params}.
Looking at improvement compared to standard 5-fold holdout with respect to test performance and ranks with respect to test performance, we observe that often reshuffled 2-fold holdout results that are highly competitive with standard 3, 4 or 5-fold holdout.

\begin{figure}[ht]
\vskip 0.2in
\begin{center}
\centerline{\includegraphics[width=\textwidth]{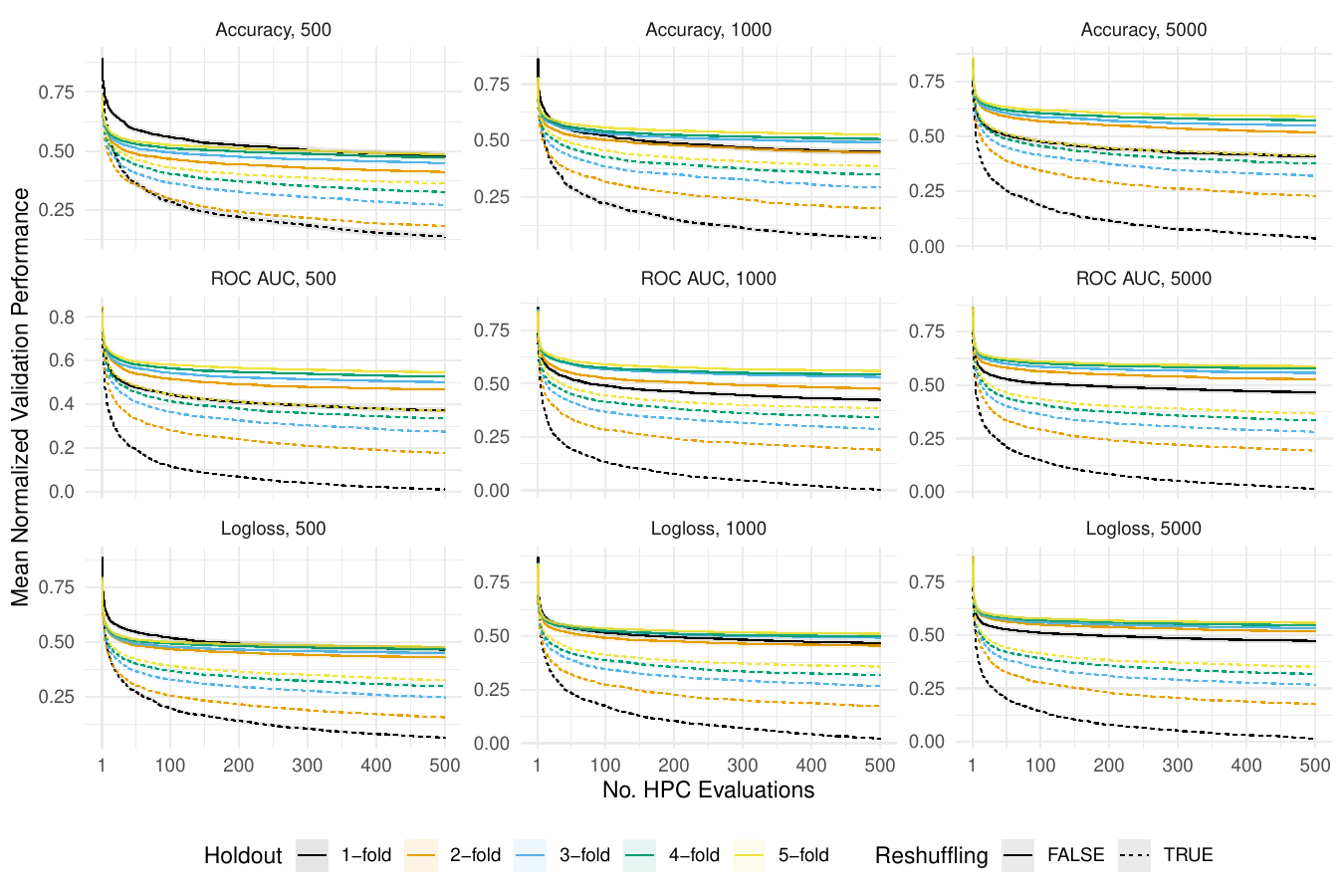}}
\caption{Random search.
Average normalized validation performance over tasks, learners and replications for different $n$ (train-validation sizes, columns).
Shaded areas represent standard errors.}
\label{fig:hpo_repeatedholdout_ablation_normalized_valid}
\end{center}
\vskip -0.2in
\end{figure}

\begin{figure}[ht]
\vskip 0.2in
\begin{center}
\centerline{\includegraphics[width=\textwidth]{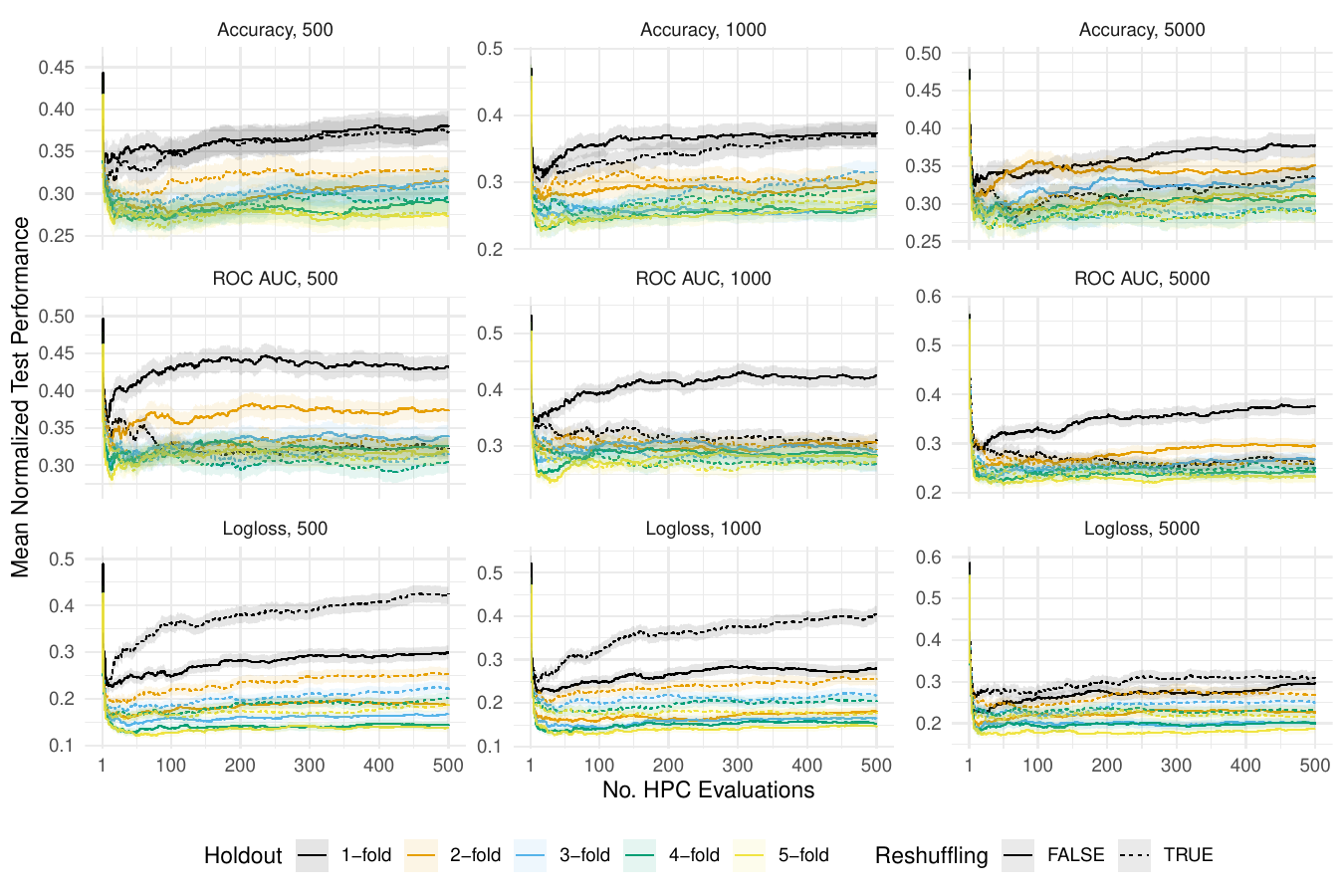}}
\caption{Random search.
Average normalized test performance over tasks, learners and replications for different $n$ (train-validation sizes, columns).
Shaded areas represent standard errors.}
\label{fig:hpo_repeatedholdout_ablation_normalized_test}
\end{center}
\vskip -0.2in
\end{figure}

\begin{figure}[ht]
\vskip 0.2in
\begin{center}
\centerline{\includegraphics[width=\textwidth]{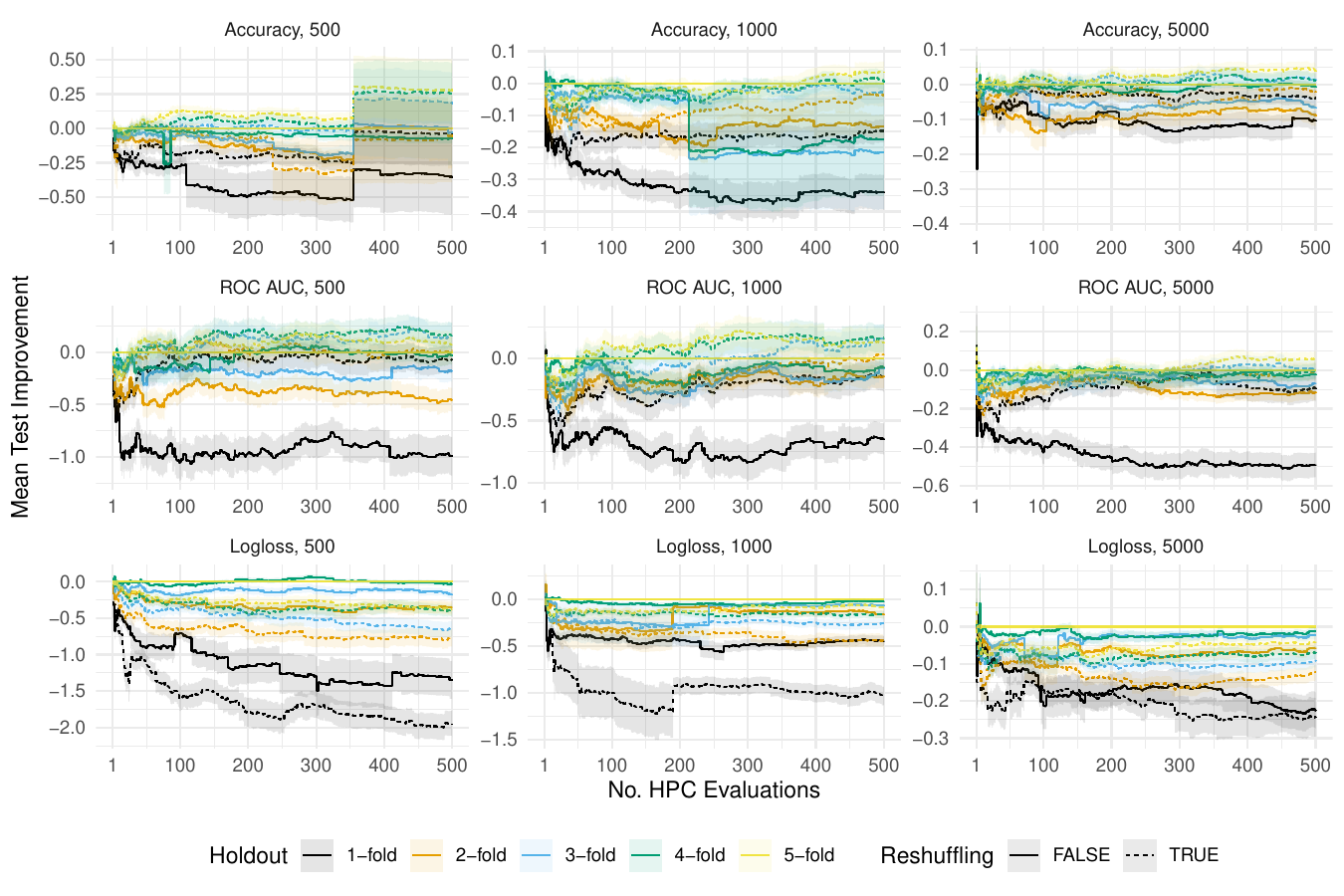}}
\caption{Random search.
Average improvement (compared to standard 5-fold holdout) with respect to test performance of the incumbent over tasks, learners and replications for different $n$ (train-validation sizes, columns).
Shaded areas represent standard errors.}
\label{fig:hpo_repeatedholdout_ablation_imp}
\end{center}
\vskip -0.2in
\end{figure}

\begin{figure}[ht]
\vskip 0.2in
\begin{center}
\centerline{\includegraphics[width=\textwidth]{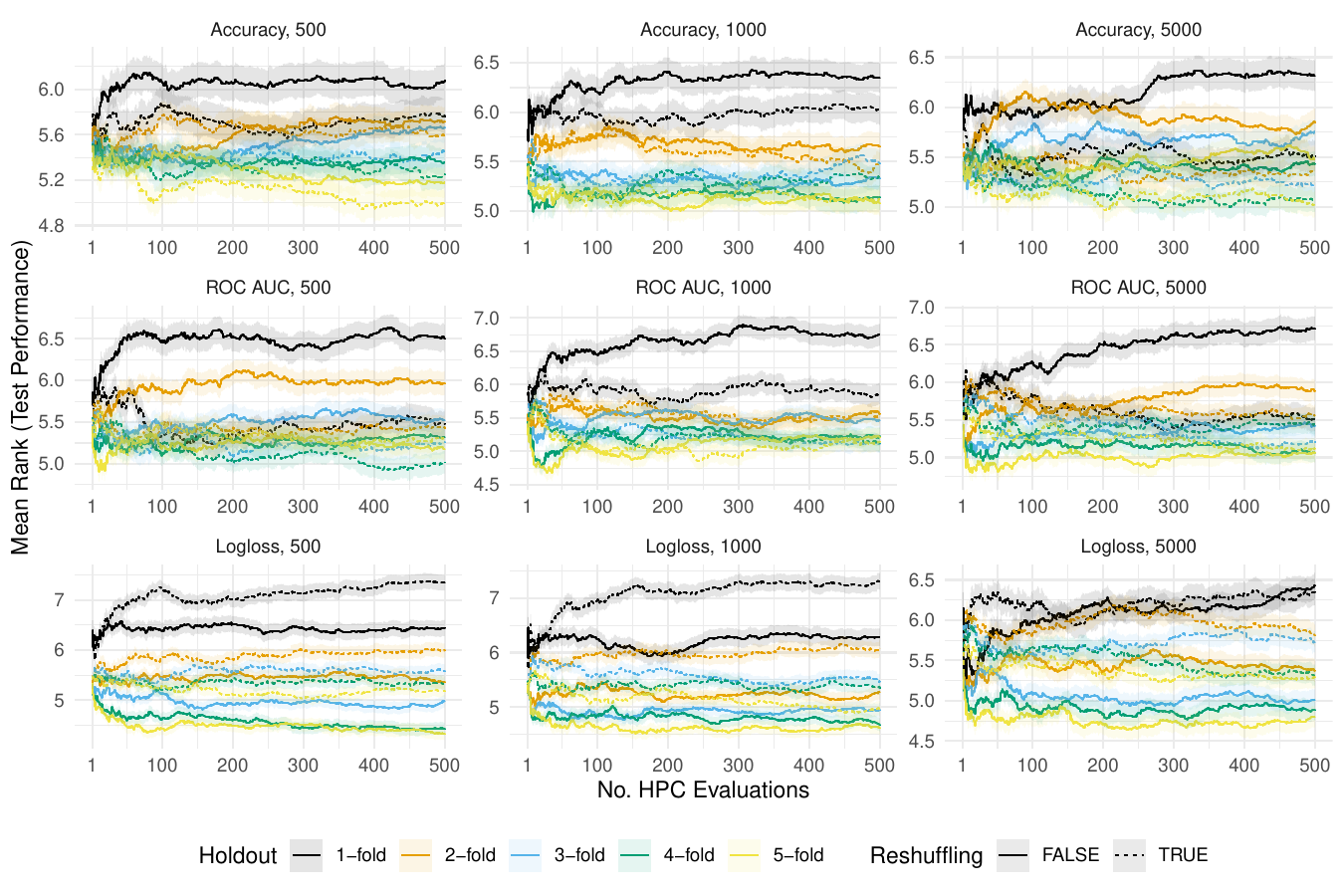}}
\caption{Random search.
Average ranks (lower is better) with respect to test performance of the incumbent over tasks, learners and replications for different $n$ (train-validation sizes, columns).
Shaded areas represent standard errors.}
\label{fig:hpo_repeatedholdout_ablation_rel_ranks}
\end{center}
\vskip -0.2in
\end{figure}

\clearpage
\section*{NeurIPS Paper Checklist}

\begin{enumerate}

\item {\bf Claims}
    \item[] Question: Do the main claims made in the abstract and introduction accurately reflect the paper's contributions and scope?
    \item[] Answer: \answerYes{} 
    \item[] Justification: We outline our three main contributions in the introduction (Section~\ref{sec:introduction}). We do not discuss generalization in the introduction, but rather in the discussion in Section~\ref{sec:discussion}.
    \item[] Guidelines:
    \begin{itemize}
        \item The answer NA means that the abstract and introduction do not include the claims made in the paper.
        \item The abstract and/or introduction should clearly state the claims made, including the contributions made in the paper and important assumptions and limitations. A No or NA answer to this question will not be perceived well by the reviewers. 
        \item The claims made should match theoretical and experimental results, and reflect how much the results can be expected to generalize to other settings. 
        \item It is fine to include aspirational goals as motivation as long as it is clear that these goals are not attained by the paper. 
    \end{itemize}

\item {\bf Limitations}
    \item[] Question: Does the paper discuss the limitations of the work performed by the authors?
    \item[] Answer: \answerYes{} 
    \item[] Justification: The paper provides an analysis of reshuffling data in the context of estimating the generalization error for hyperparameter optimization. Our theoretical analysis explains why reshuffling works, and we experimentally verify the theoretical analysis. We discuss the limitations of our work in Section~\ref{sec:discussion}.
    \item[] Guidelines:
    \begin{itemize}
        \item The answer NA means that the paper has no limitation while the answer No means that the paper has limitations, but those are not discussed in the paper. 
        \item The authors are encouraged to create a separate "Limitations" section in their paper.
        \item The paper should point out any strong assumptions and how robust the results are to violations of these assumptions (e.g., independence assumptions, noiseless settings, model well-specification, asymptotic approximations only holding locally). The authors should reflect on how these assumptions might be violated in practice and what the implications would be.
        \item The authors should reflect on the scope of the claims made, e.g., if the approach was only tested on a few datasets or with a few runs. In general, empirical results often depend on implicit assumptions, which should be articulated.
        \item The authors should reflect on the factors that influence the performance of the approach. For example, a facial recognition algorithm may perform poorly when image resolution is low or images are taken in low lighting. Or a speech-to-text system might not be used reliably to provide closed captions for online lectures because it fails to handle technical jargon.
        \item The authors should discuss the computational efficiency of the proposed algorithms and how they scale with dataset size.
        \item If applicable, the authors should discuss possible limitations of their approach to address problems of privacy and fairness.
        \item While the authors might fear that complete honesty about limitations might be used by reviewers as grounds for rejection, a worse outcome might be that reviewers discover limitations that aren't acknowledged in the paper. The authors should use their best judgment and recognize that individual actions in favor of transparency play an important role in developing norms that preserve the integrity of the community. Reviewers will be specifically instructed to not penalize honesty concerning limitations.
    \end{itemize}

\item {\bf Theory Assumptions and Proofs}
    \item[] Question: For each theoretical result, does the paper provide the full set of assumptions and a complete (and correct) proof?
    \item[] Answer: \answerYes{} 
    \item[] Justification: Full assumptions and proofs for our main results (Theorem~\ref{thm:normality} and Theorem~\ref{thm:main}) are given in Appendix~\ref{app:proof-normality} and Appendix~\ref{app:proof-main}, respectively. Derivations for the parameters in Table~\ref{tab:reshuffling-params} are provided in Appendix~\ref{app:reshuffling-params}. The additional results for the grid density are stated and proven directly in Appendix~\ref{app:additional-theory}.
    \item[] Guidelines:
    \begin{itemize}
        \item The answer NA means that the paper does not include theoretical results. 
        \item All the theorems, formulas, and proofs in the paper should be numbered and cross-referenced.
        \item All assumptions should be clearly stated or referenced in the statement of any theorems.
        \item The proofs can either appear in the main paper or the supplemental material, but if they appear in the supplemental material, the authors are encouraged to provide a short proof sketch to provide intuition. 
        \item Inversely, any informal proof provided in the core of the paper should be complemented by formal proofs provided in appendix or supplemental material.
        \item Theorems and Lemmas that the proof relies upon should be properly referenced. 
    \end{itemize}

    \item {\bf Experimental Result Reproducibility}
    \item[] Question: Does the paper fully disclose all the information needed to reproduce the main experimental results of the paper to the extent that it affects the main claims and/or conclusions of the paper (regardless of whether the code and data are provided or not)?
    \item[] Answer: \answerYes{} 
    \item[] Justification: We provide thorough details on the experimental setup in Section~\ref{sec:setup} and Appendix~\ref{app:benchmark_details}. Moreover, we provide code to reproduce our results under an open source license at \githubrepo.
    \item[] Guidelines:
    \begin{itemize}
        \item The answer NA means that the paper does not include experiments.
        \item If the paper includes experiments, a No answer to this question will not be perceived well by the reviewers: Making the paper reproducible is important, regardless of whether the code and data are provided or not.
        \item If the contribution is a dataset and/or model, the authors should describe the steps taken to make their results reproducible or verifiable. 
        \item Depending on the contribution, reproducibility can be accomplished in various ways. For example, if the contribution is a novel architecture, describing the architecture fully might suffice, or if the contribution is a specific model and empirical evaluation, it may be necessary to either make it possible for others to replicate the model with the same dataset, or provide access to the model. In general. releasing code and data is often one good way to accomplish this, but reproducibility can also be provided via detailed instructions for how to replicate the results, access to a hosted model (e.g., in the case of a large language model), releasing of a model checkpoint, or other means that are appropriate to the research performed.
        \item While NeurIPS does not require releasing code, the conference does require all submissions to provide some reasonable avenue for reproducibility, which may depend on the nature of the contribution. For example
        \begin{enumerate}
            \item If the contribution is primarily a new algorithm, the paper should make it clear how to reproduce that algorithm.
            \item If the contribution is primarily a new model architecture, the paper should describe the architecture clearly and fully.
            \item If the contribution is a new model (e.g., a large language model), then there should either be a way to access this model for reproducing the results or a way to reproduce the model (e.g., with an open-source dataset or instructions for how to construct the dataset).
            \item We recognize that reproducibility may be tricky in some cases, in which case authors are welcome to describe the particular way they provide for reproducibility. In the case of closed-source models, it may be that access to the model is limited in some way (e.g., to registered users), but it should be possible for other researchers to have some path to reproducing or verifying the results.
        \end{enumerate}
    \end{itemize}

\item {\bf Open access to data and code}
    \item[] Question: Does the paper provide open access to the data and code, with sufficient instructions to faithfully reproduce the main experimental results, as described in supplemental material?
    \item[] Answer: \answerYes{} 
    \item[] Justification: Regarding datasets, we rely on \href{https://openml.org}{OpenML.org}. We provide thorough details on the experimental setup in Section~\ref{sec:setup} and Appendix~\ref{app:benchmark_details}. Moreover, we provide code to reproduce our results under an open source license at \githubrepo.
    \item[] Guidelines:
    \begin{itemize}
        \item The answer NA means that paper does not include experiments requiring code.
        \item Please see the NeurIPS code and data submission guidelines (\url{https://nips.cc/public/guides/CodeSubmissionPolicy}) for more details.
        \item While we encourage the release of code and data, we understand that this might not be possible, so “No” is an acceptable answer. Papers cannot be rejected simply for not including code, unless this is central to the contribution (e.g., for a new open-source benchmark).
        \item The instructions should contain the exact command and environment needed to run to reproduce the results. See the NeurIPS code and data submission guidelines (\url{https://nips.cc/public/guides/CodeSubmissionPolicy}) for more details.
        \item The authors should provide instructions on data access and preparation, including how to access the raw data, preprocessed data, intermediate data, and generated data, etc.
        \item The authors should provide scripts to reproduce all experimental results for the new proposed method and baselines. If only a subset of experiments are reproducible, they should state which ones are omitted from the script and why.
        \item At submission time, to preserve anonymity, the authors should release anonymized versions (if applicable).
        \item Providing as much information as possible in supplemental material (appended to the paper) is recommended, but including URLs to data and code is permitted.
    \end{itemize}

\item {\bf Experimental Setting/Details}
    \item[] Question: Does the paper specify all the training and test details (e.g., data splits, hyperparameters, how they were chosen, type of optimizer, etc.) necessary to understand the results?
    \item[] Answer: \answerYes{} 
    \item[] Justification: We provide thorough details on the experimental setup in Section~\ref{sec:setup} and Appendix~\ref{app:benchmark_details}. Moreover, we provide code to reproduce our results under an open source license at \githubrepo.
    \item[] Guidelines:
    \begin{itemize}
        \item The answer NA means that the paper does not include experiments.
        \item The experimental setting should be presented in the core of the paper to a level of detail that is necessary to appreciate the results and make sense of them.
        \item The full details can be provided either with the code, in appendix, or as supplemental material.
    \end{itemize}

\item {\bf Experiment Statistical Significance}
    \item[] Question: Does the paper report error bars suitably and correctly defined or other appropriate information about the statistical significance of the experiments?
    \item[] Answer: \answerYes{} 
    \item[] Justification: We report the standard error in every analysis.
    \item[] Guidelines:
    \begin{itemize}
        \item The answer NA means that the paper does not include experiments.
        \item The authors should answer "Yes" if the results are accompanied by error bars, confidence intervals, or statistical significance tests, at least for the experiments that support the main claims of the paper.
        \item The factors of variability that the error bars are capturing should be clearly stated (for example, train/test split, initialization, random drawing of some parameter, or overall run with given experimental conditions).
        \item The method for calculating the error bars should be explained (closed form formula, call to a library function, bootstrap, etc.)
        \item The assumptions made should be given (e.g., Normally distributed errors).
        \item It should be clear whether the error bar is the standard deviation or the standard error of the mean.
        \item It is OK to report 1-sigma error bars, but one should state it. The authors should preferably report a 2-sigma error bar than state that they have a 96\% CI, if the hypothesis of Normality of errors is not verified.
        \item For asymmetric distributions, the authors should be careful not to show in tables or figures symmetric error bars that would yield results that are out of range (e.g. negative error rates).
        \item If error bars are reported in tables or plots, The authors should explain in the text how they were calculated and reference the corresponding figures or tables in the text.
    \end{itemize}

\item {\bf Experiments Compute Resources}
    \item[] Question: For each experiment, does the paper provide sufficient information on the computer resources (type of compute workers, memory, time of execution) needed to reproduce the experiments?
    \item[] Answer: \answerYes{} 
    \item[] Justification: We provide details in Appendix~\ref{app:resources}.
    \item[] Guidelines:
    \begin{itemize}
        \item The answer NA means that the paper does not include experiments.
        \item The paper should indicate the type of compute workers CPU or GPU, internal cluster, or cloud provider, including relevant memory and storage.
        \item The paper should provide the amount of compute required for each of the individual experimental runs as well as estimate the total compute. 
        \item The paper should disclose whether the full research project required more compute than the experiments reported in the paper (e.g., preliminary or failed experiments that didn't make it into the paper). 
    \end{itemize}
    
\item {\bf Code Of Ethics}
    \item[] Question: Does the research conducted in the paper conform, in every respect, with the NeurIPS Code of Ethics \url{https://neurips.cc/public/EthicsGuidelines}?
    \item[] Answer: \answerYes{} 
    \item[] Justification: Our work provides a study on reshuffling data when estimating the generalization error in hyperparameter tuning. Therefore, our work is applicable wherever standard machine learning is applicable, and we do not see any ethical concerns in our method. 
    \item[] Guidelines:
    \begin{itemize}
        \item The answer NA means that the authors have not reviewed the NeurIPS Code of Ethics.
        \item If the authors answer No, they should explain the special circumstances that require a deviation from the Code of Ethics.
        \item The authors should make sure to preserve anonymity (e.g., if there is a special consideration due to laws or regulations in their jurisdiction).
    \end{itemize}

\item {\bf Broader Impacts}
    \item[] Question: Does the paper discuss both potential positive societal impacts and negative societal impacts of the work performed?
    \item[] Answer: \answerNA{} 
    \item[] Justification: The paper conducts fundamental research that is not tied to particular applications, let alone deployment.
    \item[] Guidelines:
    \begin{itemize}
        \item The answer NA means that there is no societal impact of the work performed.
        \item If the authors answer NA or No, they should explain why their work has no societal impact or why the paper does not address societal impact.
        \item Examples of negative societal impacts include potential malicious or unintended uses (e.g., disinformation, generating fake profiles, surveillance), fairness considerations (e.g., deployment of technologies that could make decisions that unfairly impact specific groups), privacy considerations, and security considerations.
        \item The conference expects that many papers will be foundational research and not tied to particular applications, let alone deployments. However, if there is a direct path to any negative applications, the authors should point it out. For example, it is legitimate to point out that an improvement in the quality of generative models could be used to generate deepfakes for disinformation. On the other hand, it is not needed to point out that a generic algorithm for optimizing neural networks could enable people to train models that generate Deepfakes faster.
        \item The authors should consider possible harms that could arise when the technology is being used as intended and functioning correctly, harms that could arise when the technology is being used as intended but gives incorrect results, and harms following from (intentional or unintentional) misuse of the technology.
        \item If there are negative societal impacts, the authors could also discuss possible mitigation strategies (e.g., gated release of models, providing defenses in addition to attacks, mechanisms for monitoring misuse, mechanisms to monitor how a system learns from feedback over time, improving the efficiency and accessibility of ML).
    \end{itemize}
    
\item {\bf Safeguards}
    \item[] Question: Does the paper describe safeguards that have been put in place for responsible release of data or models that have a high risk for misuse (e.g., pretrained language models, image generators, or scraped datasets)?
    \item[] Answer: \answerNA{} 
    \item[] Justification: The paper conducts fundamental research that is not tied to particular applications, let alone deployment. The paper does not develop models that have a high risk for misuse.
    \item[] Guidelines:
    \begin{itemize}
        \item The answer NA means that the paper poses no such risks.
        \item Released models that have a high risk for misuse or dual-use should be released with necessary safeguards to allow for controlled use of the model, for example by requiring that users adhere to usage guidelines or restrictions to access the model or implementing safety filters. 
        \item Datasets that have been scraped from the Internet could pose safety risks. The authors should describe how they avoided releasing unsafe images.
        \item We recognize that providing effective safeguards is challenging, and many papers do not require this, but we encourage authors to take this into account and make a best faith effort.
    \end{itemize}

\item {\bf Licenses for existing assets}
    \item[] Question: Are the creators or original owners of assets (e.g., code, data, models), used in the paper, properly credited and are the license and terms of use explicitly mentioned and properly respected?
    \item[] Answer: \answerYes{} 
    \item[] Justification: We used datasets from \href{https://openml.org}{OpenML.org} and reference the dataset pages. Further information of the datasets, including their licenses, are available at \href{https://openml.org}{OpenML.org}.
    \item[] Guidelines:
    \begin{itemize}
        \item The answer NA means that the paper does not use existing assets.
        \item The authors should cite the original paper that produced the code package or dataset.
        \item The authors should state which version of the asset is used and, if possible, include a URL.
        \item The name of the license (e.g., CC-BY 4.0) should be included for each asset.
        \item For scraped data from a particular source (e.g., website), the copyright and terms of service of that source should be provided.
        \item If assets are released, the license, copyright information, and terms of use in the package should be provided. For popular datasets, \url{paperswithcode.com/datasets} has curated licenses for some datasets. Their licensing guide can help determine the license of a dataset.
        \item For existing datasets that are re-packaged, both the original license and the license of the derived asset (if it has changed) should be provided.
        \item If this information is not available online, the authors are encouraged to reach out to the asset's creators.
    \end{itemize}

\item {\bf New Assets}
    \item[] Question: Are new assets introduced in the paper well documented and is the documentation provided alongside the assets?
    \item[] Answer: \answerYes{} 
    \item[] Justification: We provide code as a new asset and describe how we make our code available in Point 5 of the NeurIPS Paper Checklist.
    \item[] Guidelines:
    \begin{itemize}
        \item The answer NA means that the paper does not release new assets.
        \item Researchers should communicate the details of the dataset/code/model as part of their submissions via structured templates. This includes details about training, license, limitations, etc. 
        \item The paper should discuss whether and how consent was obtained from people whose asset is used.
        \item At submission time, remember to anonymize your assets (if applicable). You can either create an anonymized URL or include an anonymized zip file.
    \end{itemize}

\item {\bf Crowdsourcing and Research with Human Subjects}
    \item[] Question: For crowdsourcing experiments and research with human subjects, does the paper include the full text of instructions given to participants and screenshots, if applicable, as well as details about compensation (if any)? 
    \item[] Answer: \answerNA{} 
    \item[] Justification: The paper does neither involve crowdsourcing nor research with human subjects.
    \item[] Guidelines:
    \begin{itemize}
        \item The answer NA means that the paper does not involve crowdsourcing nor research with human subjects.
        \item Including this information in the supplemental material is fine, but if the main contribution of the paper involves human subjects, then as much detail as possible should be included in the main paper. 
        \item According to the NeurIPS Code of Ethics, workers involved in data collection, curation, or other labor should be paid at least the minimum wage in the country of the data collector. 
    \end{itemize}

\item {\bf Institutional Review Board (IRB) Approvals or Equivalent for Research with Human Subjects}
    \item[] Question: Does the paper describe potential risks incurred by study participants, whether such risks were disclosed to the subjects, and whether Institutional Review Board (IRB) approvals (or an equivalent approval/review based on the requirements of your country or institution) were obtained?
    \item[] Answer: \answerNA{} 
    \item[] Justification: The paper does neither involve crowdsourcing nor research with human subjects.
    \item[] Guidelines:
    \begin{itemize}
        \item The answer NA means that the paper does not involve crowdsourcing nor research with human subjects.
        \item Depending on the country in which research is conducted, IRB approval (or equivalent) may be required for any human subjects research. If you obtained IRB approval, you should clearly state this in the paper. 
        \item We recognize that the procedures for this may vary significantly between institutions and locations, and we expect authors to adhere to the NeurIPS Code of Ethics and the guidelines for their institution. 
        \item For initial submissions, do not include any information that would break anonymity (if applicable), such as the institution conducting the review.
    \end{itemize}

\end{enumerate}

\end{document}